\def\eqref#1{equation~\ref{#1}}
\def\1{\bm{1}}
\DeclareMathAlphabet{\mathsfit}{\encodingdefault}{\sfdefault}{m}{sl}
\SetMathAlphabet{\mathsfit}{bold}{\encodingdefault}{\sfdefault}{bx}{n}
\DeclareMathOperator{\sign}{sign}
\newcommand*\diff{\mathop{}\!\mathrm{d}}
\newtheorem{theorem}{Theorem}
\newtheorem*{theorem*}{Theorem}
\newtheorem*{lemma*}{Lemma}
\newtheorem{lemma}{Lemma}
\theoremstyle{definition}
\newtheorem{definition}{Definition}
\theoremstyle{remark}
\newcommand{\newreb}[1]{#1}
\let\cref@old@eq@setnumber\eq@setnumber
\def\eq@setnumber{%
\cref@old@eq@setnumber%
\cref@constructprefix{equation}{\cref@result}%
\protected@xdef\cref@currentlabel{%
[equation][\arabic{equation}][\cref@result]\p@equation\theequation}}
\definecolor{color1}{HTML}{377EB8}
\definecolor{color2}{HTML}{FF7F00}
\definecolor{color3}{HTML}{4DAF4A}
\definecolor{color4}{HTML}{F781BF}
\definecolor{color5}{HTML}{A65628}
\definecolor{color6}{HTML}{984EA3}
\definecolor{color7}{HTML}{999999}
\definecolor{color8}{HTML}{E41A1C}
\definecolor{color9}{HTML}{DEDE00}
\title{No Equations Needed: Learning System Dynamics Without Relying on Closed-Form ODEs}
\author{Krzysztof Kacprzyk \\
University of Cambridge\\
\texttt{kk751@cam.ac.uk} \\
\And
Mihaela van der Schaar \\
University of Cambridge\\
\texttt{mv472@cam.ac.uk} \\
}
\begin{document}

\maketitle
\begin{abstract}
Data-driven modeling of dynamical systems is a crucial area of machine learning. In many scenarios, a thorough understanding of the model’s behavior becomes essential for practical applications. For instance, understanding the behavior of a pharmacokinetic model, constructed as part of drug development, may allow us to both verify its biological plausibility (e.g., the drug concentration curve is non-negative and decays to zero in the long term) and to design dosing guidelines (e.g., by looking at the peak concentration and its timing). Discovery of closed-form ordinary differential equations (ODEs) can be employed to obtain such insights by finding a compact mathematical equation and then analyzing it (a two-step approach). However, its widespread use is currently hindered because the analysis process may be time-consuming, requiring substantial mathematical expertise, or even impossible if the equation is too complex. Moreover, if the found equation's behavior does not satisfy the requirements, editing it or influencing the discovery algorithms to rectify it is challenging as the link between the symbolic form of an ODE and its behavior can be elusive. This paper proposes a conceptual shift to modeling low-dimensional dynamical systems by departing from the traditional two-step modeling process. Instead of first discovering a closed-form equation and then analyzing it, our approach, direct semantic modeling, predicts the semantic representation of the dynamical system (i.e., description of its behavior) directly from data, bypassing the need for complex post-hoc analysis. This direct approach also allows the incorporation of intuitive inductive biases into the optimization algorithm and editing the model's behavior directly, ensuring that the model meets the desired specifications. Our approach not only simplifies the modeling pipeline but also enhances the transparency and flexibility of the resulting models compared to traditional closed-form ODEs. 

\end{abstract}
\section{Introduction}
\label{sec:introduction}
\paragraph{Background: data-driven modeling of dynamical systems through ODE discovery.}
Modeling dynamical systems is a pivotal aspect of machine learning (ML), with significant applications across various domains such as physics \citep{Raissi.PhysicsinformedNeuralNetworks.2019}, biology \citep{Neftci.ReinforcementLearningArtificial.2019}, engineering \citep{Brunton.DataDrivenScienceEngineering.2022}, and medicine \citep{Lee.DynamicDeepHitDeepLearning.2020}. In real-world applications, understanding the model's behavior is crucial for verification and other domain-specific tasks. For instance, in drug development, it is important to ensure the pharmacokinetic model \citep{Mould.BasicConceptsPopulation.2012} is biologically plausible (e.g., the drug concentration is non-negative and decays to zero), and the dosing guidelines may be set up based on the peak concentration and its timing \citep{Han.FindingTmaxCmax.2018}. One effective approach to gain such insights is the discovery of closed-form ordinary differential equations (ODEs) \citep{Bongard.AutomatedReverseEngineering.2007,Schmidt.DistillingFreeFormNatural.2009,Brunton.DiscoveringGoverningEquations.2016}, where a concise mathematical representation is first found by an algorithm and then analyzed by a human.

\vspace{-2mm}
\paragraph{Motivation: the primary goal of discovering a closed-form ODE is its semantic representation.} 

We assume that the primary objective of discovering a closed-form ODE, as opposed to using a black-box model, is to have a model representation that can be analyzed by humans to understand the model's behavior \citep{Qian.DCODEDiscoveringClosedform.2022}. Under this assumption, the specific form of the equation, its \textit{syntactic representation}, is just a medium that allows one to obtain the description of the model's behavior, its \textit{semantic representation}, through post-hoc mathematical analysis. 
We call the process of discovering an equation and then analyzing it a \textit{two-step modeling} approach. An illustrative example showing the difference between a syntactic and semantic representation of the same ODE (logistic growth model \citep{Verhulst.RecherchesMathematiquesLoi.1845}) can be seen in \cref{fig:comparison_syntactic_semantic}.
\vspace{-2mm}
\paragraph{Limitations of the traditional two-step modeling.}
The traditional two-step modeling pipeline, where an ODE is first discovered and then analyzed to understand its behavior, presents several limitations. The analysis process can be time-consuming, and requiring substantial mathematical expertise. It may even be impossible if the discovered equation is too complex. Furthermore, as the link between syntactic and semantic representation may not be straightforward, modifying the discovered equation to adjust the model’s behavior may pose significant challenges. This complicates the refinement process and limits the ability to ensure that the model meets specific requirements.
\vspace{-2mm}
\paragraph{Proposed approach: direct semantic modeling.} 
To overcome these limitations, we propose a novel approach, called \textit{direct semantic modeling}, that shifts away from the traditional two-step pipeline. Instead of first discovering a closed-form ODE and then analyzing it, our approach generates the semantic representation of the dynamical system directly from data, eliminating the need for post-hoc mathematical analysis. By working directly with the semantic representation, our method allows for intuitive adjustments and the incorporation of constraints that reflect the system’s behavior. This direct approach also facilitates more flexible modeling and improved performance, as it does not rely on a compact closed-form equation.
\vspace{-2mm}
\paragraph{Contributions and outline.}
In \cref{sec:from_fitting_and_analysis}, we define the \textit{syntactic} and \textit{semantic representation} of ODEs, discuss the limitations of the traditional \textit{two-step modeling pipeline} and introduce \textit{direct semantic modeling} as an alternative. We formalize semantic representation (\cref{sec:formalizing_semantic_representation}) and then use it to introduce \textit{Semantic ODE} in \cref{sec:semantic_ode}, a concrete instantiation of our approach for modeling 1D systems. Finally, we illustrate its practical usability and flexibility in (\cref{sec:in-action}).

\begin{figure}[h]
    \centering
    \includegraphics[trim={0 12cm 0cm 0},clip,width=\linewidth]{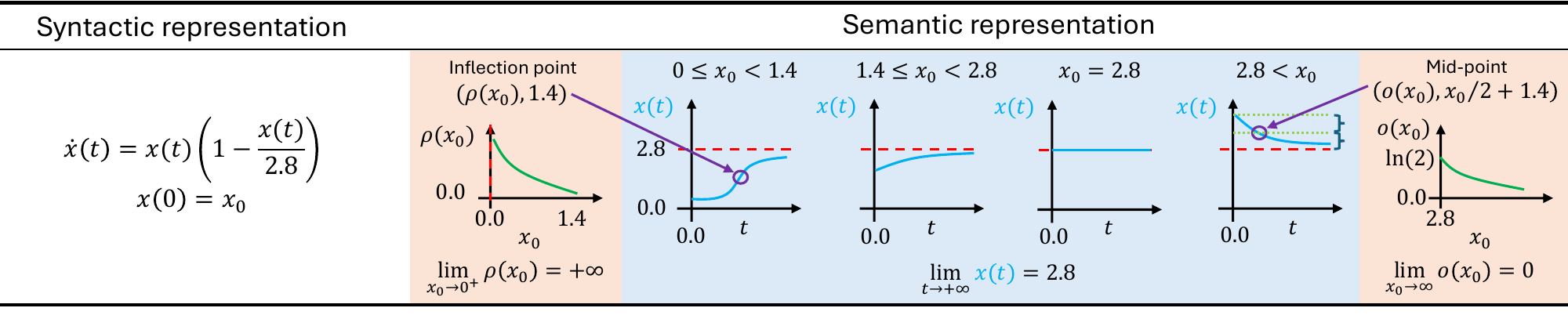}
    \vspace{-7mm}
    \caption{\newreb{Syntactic representation of a logistic growth model refers to its symbolic form, whereas semantic representation describes its behavior for different initial conditions.}}
    \label{fig:comparison_syntactic_semantic}
\end{figure}
\vspace{-3mm}
\section{Forecasting models and discovery of closed-form ODEs}
\label{sec:background}
In this section, we formulate the task of discovering closed-form ODEs from data and show how it can be reinterpreted as a more general problem of fitting a forecasting model.

Let $\bm{f}:\mathbb{R}^{M+1} \rightarrow \mathbb{R}^M$, and let $\mathcal{T}=(t_0,+\infty)$. A system of $M$ ODEs is described as
\begin{equation}
    \dot{\bm{x}}(t) = \bm{f}(\bm{x}(t),t) \ \forall t\in \mathcal{T},
\end{equation}
where $\bm{x}:\mathcal{T} \rightarrow \mathbb{R}^M$ is called a \textit{trajectory} and $\dot{x}_m = \frac{\diff x_m}{\diff t}$ is the derivative of $x_m$ with respect to $t$. We also assume each $x_m\in C^2(\mathcal{T})$, i.e., it is twice continuously differentiable on $\mathcal{T}$.\footnote{We assume $x_m\in C^2(\mathcal{T})$ instead of  $ C^1(\mathcal{T})$, so that we can discuss curvature and inflection points.} We denote the dataset of observed trajectories as $\mathcal{D} = \{(t^{(d)}_n,\bm{y}^{(d)}_n)_{n=1}^{N_d}\}_{d=1}^D$, where each $\bm{y}^{(d)}_n$ represents the noisy measurement of some ground truth trajectory $\bm{x}^{(d)}$ governed by $\bm{f}$ at time point $t^{(d)}_n$.

A closed-form equation \citep{Qian.DCODEDiscoveringClosedform.2022} is a mathematical expression consisting of a finite number of variables, constants, binary arithmetic operations ($+,-,\times,\div$), and some well-known functions such as exponential or trigonometric functions. A system of ODEs is called closed-form when each function $f_m$ is closed-form. The task is to find a closed-form $\bm{f}$ given $\mathcal{D}$.

Traditionally \citep{Bongard.AutomatedReverseEngineering.2007,Schmidt.DistillingFreeFormNatural.2009} discovery of governing equations has been performed using genetic programming \citep{Koza.GeneticProgrammingProgramming.1992}. In a seminal paper, \cite{Brunton.DiscoveringGoverningEquations.2016} proposed to represent an ODE as a linear combination of terms from a prespecified library. This was followed by numerous extensions, including implicit equations \citep{Kaheman.SINDyPIRobustAlgorithm.2020}, equations with control \citep{Brunton.SparseIdentificationNonlinear.2016}, and partial differential equations \citep{Rudy.DatadrivenDiscoveryPartial.2017}. Approaches based on weak formulation of ODEs that allow to circumvent derivative estimation have also been proposed \citep{Messenger.WeakSINDyGalerkinBased.2021,Qian.DCODEDiscoveringClosedform.2022}. The extended related works section can be found in \cref{app:extended_related_works}.

Each system of ODEs $\bm{f}$\footnote{With some regularity conditions to ensure uniqueness of solutions.} defines a forecasting model\footnote{In our work we refer to a forecasting model as any model that outputs a trajectory.} $\bm{F}$ through the initial value problem (IVP), i.e., for each initial condition $\bm{x}(t_0) = \bm{x}_0\in\mathbb{R}^M$, $\bm{F}$ maps $\bm{x}_0$ to a trajectory governed by $\bm{f}$ satisfying this initial condition. Therefore, ODE discovery can be treated as a special case of fitting a forecasting model $\bm{F}:\mathbb{R}^M \to  C^2(\mathcal{T})$.

\vspace{-2mm}

\section{From discovery and analysis to direct semantic modeling}
\label{sec:from_fitting_and_analysis}

In this section, we define the syntactic and semantic representations, describe the traditional two-step modeling and its limitations, and introduce our approach, direct semantic modeling.
\vspace{-2mm}
\subsection{Syntax vs. semantics.}
\vspace{-2mm}
ODEs are usually represented symbolically as closed-form equations. For instance, $\dot{x}(t) = (1-x(t))x(t)$. We refer to this kind of representation as \textit{syntactic}.

\begin{tcolorbox}[myquotegreen]
\textbf{Syntactic representation} of a closed-form ODE refers to its symbolic form, i.e., the arrangement of variables, arithmetic operations, numerical constants, and some well-known functions.

\end{tcolorbox}
The output of the current ODE discovery algorithms is in the form of syntactic representation. We assume that the primary objective of discovering a closed-form ODE, as opposed to using a black-box model, is to have a model representation that can be analyzed by humans to understand its behavior. Such understanding is necessary to ensure that the model behaves as expected; for instance, it operates within the range of values and exhibits trends consistent with domain knowledge. We call the description of the dynamical system's behavior its \textit{semantic representation}.
\begin{tcolorbox}[myquotegreen]
\textbf{Semantic representation} 
describes the behavior of a dynamical system. Semantic representation of a single trajectory may include its shape, properties, and asymptotic behavior, whereas semantic representation of a forecasting model, including a system of ODEs, describes how they change under different conditions, e.g., for different initial conditions. 
\end{tcolorbox}

Comparison between the syntactic and semantic representation of the same ODE is shown in \cref{fig:comparison_syntactic_semantic}.
\vspace{-4mm}
\subsection{Two-step modeling and its limitations}
\label{sec:limitations_optimization_syntactic_space}
\vspace{-2mm}
Semantic representation of a dynamical system is usually obtained by first discovering an equation (e.g., using an ODE discovery algorithm) and then analyzing it. This \textit{two-step modeling} approach has several limitations (depicted in \cref{fig:paper-goal}).
\begin{itemize}[leftmargin=5mm,nolistsep]
    \item \textbf{Analysis} of a closed-form ODE may be time-consuming, and requiring mathematical expertise. It may be impossible if the discovered equation is too complex. As a result, it may introduce a trade-off between better fitting the data and being simple enough to be analyzed by humans.
    \item \textbf{Obtained insights may be nonactionable}. As the link between syntactic and semantic representations is often far from trivial, it is difficult to edit the syntactic representation of the model to cause a specific change in its semantic representation and to provide feedback to the optimization algorithm to solicit a model with different behavior.
    \item \textbf{Incorporation of prior knowledge}. Often the prior knowledge about the dynamical system concerns its semantic representation rather than its syntax. For instance, we may know what shape the trajectory should have (e.g., decreasing and approaching a horizontal asymptote) rather than what kind of terms or arithmetic operations are present in the best-fitting equation.
\end{itemize}

\vspace{-2mm}
\subsection{Direct semantic modeling}
\label{sec:direct_semantic_modeling}
\vspace{-2mm}
To address the limitations of two-step modeling, we propose a conceptual shift in modeling low-dimensional dynamical systems. Instead of discovering an equation from data and then analyzing it to obtain its semantic representation, our approach, \textit{direct semantic modeling}, generates the semantic representation directly from data, eliminating the need for post-hoc mathematical analysis.
\vspace{-2mm}
\paragraph{Forecasting model determined by semantic representation}
A major difference between our approach and traditional two-step modeling is how the model ultimately predicts the values of the trajectory. Given a system of closed-form ODEs $\bm{f}$, a forecasting model $\bm{F}$ is directly given by the equation. We just need to solve the initial value problem (IVP) for the given initial condition. There are plenty of algorithms to do so numerically, the forward Euler method being the simplest \citep{Butcher.NumericalMethodsOrdinary.2016}. In contrast, the result of direct semantic modeling is a \textit{semantic predictor} $F_{\text{sem}}$ (that corresponds to the semantic representation of the model) that predicts the semantic representation of the trajectory. Then it passes it to a \textit{trajectory predictor} $\bm{F}_{\text{traj}}$ whose role is to find a trajectory in a given hypothesis space that has a matching semantic representation. The matching does not need to be unique but $\bm{F}_{\text{traj}}$ needs to be deterministic. Defining $\bm{F}$ as $\bm{F}_{\text{traj}} \circ F_{\text{sem}}$ has multiple advantages. No post-hoc mathematical analysis is required as the semantic representation of $\bm{F}$ is directly accessed through $F_{\text{sem}}$. The model can be easily edited to enforce a specific change in the semantic representation because we can directly edit $F_{\text{sem}}$. Incorporating prior knowledge and feedback into the optimization algorithm is also streamlined and more intuitive. Finally, as the resulting model does not need to be further analyzed, it does not need to have a compact symbolic representation, increasing its flexibility. \cref{fig:paper-goal} compares two-step modeling and direct semantic modeling.

\vspace{-2mm}
\paragraph{Semantic ODE as a concrete instantiation}
We have outlined the core principles of direct semantic modeling above. In the following sections, we propose a concrete machine learning model that realizes these principles. It is a forecasting model that takes the initial condition $x_0\in\mathbb{R}$ and predicts a $1$-dimensional trajectory, $x:\mathcal{T}\to\mathbb{R}$. We call it \textit{Semantic ODE} because it maps an initial condition to a trajectory (like ODEs implicitly do). Although Semantic ODE can only model $1$-dimensional trajectories, we believe direct semantic modeling can be successfully applied to multi-dimensional systems. We describe our proposed roadmap for future research to achieve that goal in \cref{app:roadmap}. Before we describe the building blocks of Semantic ODE in \cref{sec:semantic_ode}, we need a formal definition of semantic representation.

\vspace{-2mm}
\begin{figure}[h]
    \centering
    \includegraphics[trim={0 12cm 0cm 0},clip,width=1.0\textwidth]{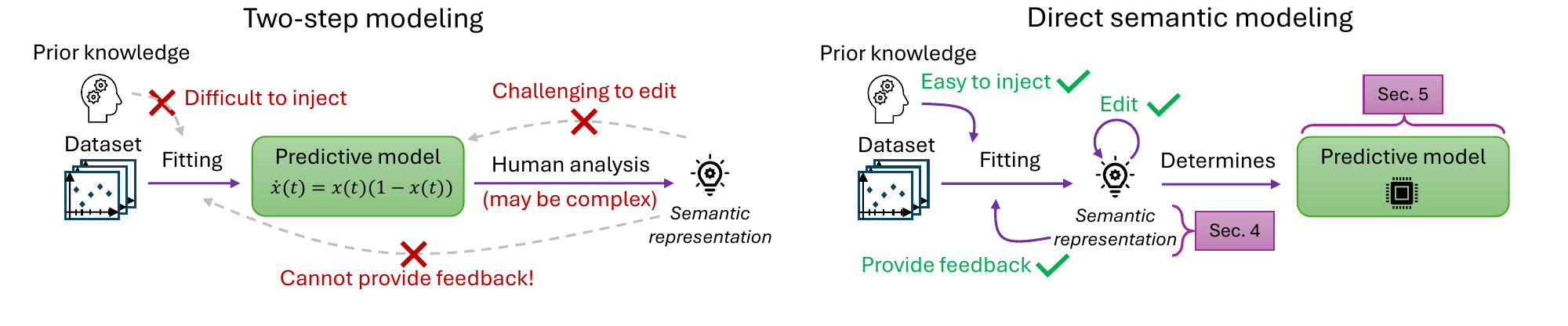}
    \vspace{-5mm}
    \caption{Comparison between two-step modeling and direct semantic modeling. Left: The discovery of closed-form ODEs often allows for human analysis, but editing the equation or providing feedback to the optimization algorithm is challenging. Right: We propose to predict the semantic representation directly from data, which allows for editing the model and steering the optimization algorithm.}
    \label{fig:paper-goal}
    \vspace{-5mm}
\end{figure}

\section{Formalizing semantic representation}
\label{sec:formalizing_semantic_representation}

To propose a concrete instantiation of direct semantic modeling in \cref{sec:semantic_ode} called \textit{Semantic ODE}, we need to formalize the definition of semantic representation in \cref{sec:from_fitting_and_analysis} to make it operational. We consider a setting where $F:\mathbb{R}\to  C^2(\mathcal{T})$ is a 1D forecasting model (any ODE can be treated as such a model). We first define a semantic representation of a trajectory $x\in C^2(\mathcal{T})$ and then use it to define a semantic representation of $F$ itself. 
\vspace{-2mm}
\paragraph{Semantic representation as composition and properties}
Our definition of semantic representation is motivated by the framework proposed by \cite{Kacprzyk.TransparentTimeSeries.2024}. Following that work, each trajectory $x$ can be assigned a \textit{composition} (denoted $c_x$) that describes the general shape of the trajectory and the set of properties (denoted $p_x$) which is a set of numbers that describes this shape quantitatively. The composition of the trajectory depends on the chosen set of \textit{motifs}. Each motif describes the shape of the trajectory on a particular interval. For instance, ``increasing and strictly convex''. Given a set of motifs, we can then subdivide $\mathcal{T}$ into shorter intervals such that $x$ is described by a single motif on each of them. This results in a motif sequence and the shortest such sequence is called a composition. The points between two motifs and on the boundaries are called \textit{transition points}. An example of a trajectory, its composition, and its transition points is shown in \cref{fig:compositions_motifs}.
\def\extendedscale{0.5}

\begin{figure}[ht]
    \centering
    \begin{subfigure}{0.55\textwidth}
        \centering
 \includegraphics[trim={0 20cm 7cm 0},clip,width=0.8\textwidth]{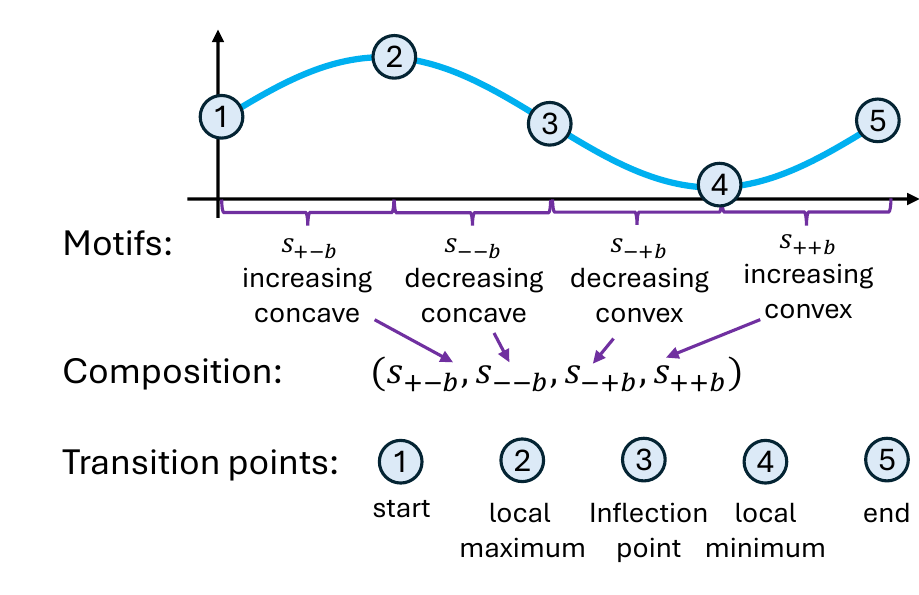}
    \caption{}
    \label{fig:compositions_motifs}
    \end{subfigure}%
    \hfill
    \begin{subfigure}{0.45\textwidth}
        \centering
         \resizebox{\linewidth}{!}{
       \begin{tabular}{>{\centering\arraybackslash}m{1.5cm} >{\centering\arraybackslash}m{1.5cm} >{\centering\arraybackslash}m{1.5cm} >{\centering\arraybackslash} m{1.5cm}}
\toprule
$s_{++b}$ & $s_{+-b}$ & $s_{-+b}$ & $s_{--b}$  \\
  \begin{tikzpicture}[scale=\extendedscale]
      \coordinate (A) at (0,0);
      \coordinate (B) at (1,1);
      \draw[->] (0,0.2) -- (1,0.2);
      \draw[->] (0.2,0) -- (0.2,1);
      \draw[color=color1, line width=0.5mm] (A) to [bend right=35] (B);
      \fill (A) circle (2pt) node[left] {$t_1$};
    \fill (B) circle (2pt) node[right] {$t_2$};
       
\end{tikzpicture} & \begin{tikzpicture}[scale=\extendedscale]
      \coordinate (A) at (0,0);
      \coordinate (B) at (1,1);
      \draw[->] (0,0.2) -- (1,0.2);
      \draw[->] (0.2,0) -- (0.2,1);
      \draw[color=color2, line width=0.5mm] (A) to [bend left=35] (B);
      \fill (A) circle (2pt) node[left]{$t_1$};
      \fill (B) circle (2pt) node[right]{$t_2$};
\end{tikzpicture} & \begin{tikzpicture}[scale=\extendedscale]
  \coordinate (A) at (0,1);
  \coordinate (B) at (1,0);
  \draw[->] (0,0.2) -- (1,0.2);
  \draw[->] (0.2,0) -- (0.2,1);
  \draw[color=color3, line width=0.5mm] (A) to [bend right=35] (B);
      \fill (A) circle (2pt) node[left] {$t_1$};
    \fill (B) circle (2pt) node[right] {$t_2$};
\end{tikzpicture} & \begin{tikzpicture}[scale=\extendedscale]

  \coordinate (A) at (0,1);
  \coordinate (B) at (1,0);
  \draw[->] (0,0.2) -- (1,0.2);
  \draw[->] (0.2,0) -- (0.2,1);
  \draw[color=color4, line width=0.5mm] (A) to [bend left=35] (B);
      \fill (A) circle (2pt) node[left] {$t_1$};
    \fill (B) circle (2pt) node[right] {$t_2$};

\end{tikzpicture} \\
\midrule 
$s_{++u}$ & $s_{+-u}$ & $s_{-+u}$ & $s_{--u}$  \\
  \begin{tikzpicture}[scale=\extendedscale]
      \coordinate (A) at (0,0);
      \coordinate (B) at (1,1);
      \draw[->] (0,0.2) -- (1,0.2);
      \draw[->] (0.2,0) -- (0.2,1);
      \draw[color=color1, line width=0.5mm] (A) to [bend right=35] (B);
      \fill (A) circle (2pt) node[left] {$t_{\text{end}}$};
       
\end{tikzpicture} & \begin{tikzpicture}[scale=\extendedscale]
      \coordinate (A) at (0,0);
      \coordinate (B) at (1,1);
      \draw[->] (0,0.2) -- (1,0.2);
      \draw[->] (0.2,0) -- (0.2,1);
      \draw[color=color2, line width=0.5mm] (A) to [bend left=35] (B);
      \fill (A) circle (2pt) node[left]{$t_{\text{end}}$};
\end{tikzpicture} & \begin{tikzpicture}[scale=\extendedscale]
  \coordinate (A) at (0,1);
  \coordinate (B) at (1,0);
  \draw[->] (0,0.2) -- (1,0.2);
  \draw[->] (0.2,0) -- (0.2,1);
  \draw[color=color3, line width=0.5mm] (A) to [bend right=35] (B);
      \fill (A) circle (2pt) node[left] {$t_{\text{end}}$};
\end{tikzpicture} & \begin{tikzpicture}[scale=\extendedscale]

  \coordinate (A) at (0,1);
  \coordinate (B) at (1,0);
  \draw[->] (0,0.2) -- (1,0.2);
  \draw[->] (0.2,0) -- (0.2,1);
  \draw[color=color4, line width=0.5mm] (A) to [bend left=35] (B);
      \fill (A) circle (2pt) node[left] {$t_{\text{end}}$};

\end{tikzpicture}   \\
\midrule 
& $s_{+-h}$ &  $s_{-+h}$ & \\
&  \begin{tikzpicture}[scale=\extendedscale]
      \coordinate (A) at (0,0);
      \coordinate (B) at (1,0.7);
      \draw[->] (0,0.2) -- (1,0.2);
      \draw[->] (0.2,0) -- (0.2,1);
      \draw[color=color2, line width=0.5mm] (A) to [bend left=35] (B);
      \draw[dashed] (0,0.8) -- (1,0.8);
            \fill (A) circle (2pt) node[left]{$t_{\text{end}}$};
\end{tikzpicture} & \begin{tikzpicture}[scale=\extendedscale]
      \coordinate (A) at (0,0.8);
      \coordinate (B) at (1,0.1);
      \draw[->] (0,0.2) -- (1,0.2);
      \draw[->] (0.2,0) -- (0.2,1);
      \draw[color=color3, line width=0.5mm] (A) to [bend right=35] (B);
      \draw[dashed] (0,0) -- (1,0);
       \fill (A) circle (2pt) node[left] {$t_{\text{end}}$};
    \end{tikzpicture} & \\
\bottomrule
\end{tabular}

       }
        \caption{}
        \label{fig:extended_motifs}
    \end{subfigure}
    \caption{(\textbf{a}) Composition and transition points of $x(t)=\sin(t)$ on $[0,2\pi]$. (\textbf{b}) Motif set used in the proposed formalization of semantic representation.}
    \label{fig:main}
\end{figure}

\vspace{-4mm}
\paragraph{Extending dynamical motifs}
The set of motifs we choose is inspired by the original set of \textit{dynamical motifs} \citep{Kacprzyk.TransparentTimeSeries.2024} but we adjusted and extended it to cover unbounded time domains and different asymptotic behaviors. We define a set of ten motifs, four \textit{bounded} motifs and six \textit{unbounded} motifs. Each motif is of the form $s_{\pm \pm *}$, i.e., is described by two symbols (each $+$ or $-$) and one letter ($b/u/h$). The symbols refer to its first and second derivatives. The letter $b$ signifies the motif is for \textbf{b}ounded time domains (e.g., for interval $(t_1,t_2)$). Both $h$ and $u$ refer to \textbf{u}nbounded time domains. These motifs are always the last motif of the composition, describing the shape on $(t_{\text{end}},+\infty)$ where $t_{\text{end}}$ is the $t$-coordinate of the last transition point. $h$ specifically describes motifs with horizontal asymptotes. For instance, $s_{-+h}$ is an unbounded motif that describes a function that is decreasing ($-$), strictly convex ($+$) and with horizontal asymptote ($h$). All motifs are visualized in \cref{fig:extended_motifs}. Note that we excluded the three original motifs describing straight lines to simplify the modeling process.
If necessary, they can be approximated by other motifs with infinitesimal curvature. We denote the set of all compositions constructed from these motifs as $\mathcal{C}$.

\vspace{-2mm}
\paragraph{Properties}

Apart from the composition, the semantic representation of a trajectory also involves a set of properties. Ideally, the properties should be sufficient to visualize what each of the motifs looks like and to constrain the space of trajectories with the corresponding semantic representation. Following the original work, we include the coordinates of the transition points as they characterize bounded motifs well. In contrast to their bounded counterparts, the unbounded motifs are not described by their right transition point but by a set of \textit{motif properties}. These, in turn, depend on how we describe the unbounded motif. For instance, we could parameterize $s_{++u}$ as $x(t) = x(t_{\text{end}}) 2^{(t-t_{\text{end}})/B}$, where $(t_{\text{end}},x(t_{\text{end}}))$ is the position of the last transition point. In that setting, $B$ is the \textit{property} of $s_{++u}$ that describes the doubling time of $x$ ($x(t+B)=2x(t)$). In reality, choosing a good parametrization with meaningful properties is challenging, and we discuss it in more detail in \cref{app:unbounded_part}. The set of properties also includes the first derivative at the first transition point ($t_0$) and the first and the second derivative at the last transition point ($t_{\text{end}}$). They are needed for the trajectory predictor described in \cref{sec:trajectory_predictor}. Each composition $c\in\mathcal{C}$ may require a different set of properties that we denote $\mathcal{P}_c$. For instance, a trajectory $x$ with $c_x=(s_{++b},s_{+-h})$ will have $p_x=(t_0,t_1,x(t_0),x(t_1),\dot{x}(t_0),\dot{x}(t_1),\ddot{x}(t_1),h,t_{1/2})$, where each $(t_i,x(t_i))$ is a transition point, and $(h,t_{1/2})$ are the properties of the unbounded motif (see \cref{fig:F_sem_F_traj}). We denote all possible sets of properties as $\mathcal{P}$, where $\mathcal{P} = \bigcup_{c\in\mathcal{C}} \mathcal{P}_c$.

We are finally ready to provide a formal definition of the semantic representation of a trajectory $x\in C^2(\mathcal{T})$ and a forecasting model $F:\mathbb{R} \to  C^2(\mathcal{T})$. Given this formal definition of semantic representation, we introduce our model, Semantic ODE, in the next section.

\begin{definition}
\label{def:semantic_representation_trajectory}
The \textit{semantic representation of a trajectory} $x\in C^2(\mathcal{T})$ is a pair $(c_x,p_x)$, where $c_x\in\mathcal{C}$ is the composition of $x$ and $p_x\in\mathcal{P}_{c_x}$ is the set of properties as specified by $c_x$.
\end{definition}

\begin{definition}
\label{def:semantic_representation_model}

The \textit{semantic representation} of $F:\mathbb{R} \to  C^2(\mathcal{T})$ is a pair $(C_F,P_F):\mathbb{R}\to\mathcal{C}\times \mathcal{P}$ defined as follows. $C_F:\mathbb{R}\to\mathcal{C}$ is called a \textit{composition map} and it maps any initial condition $x_0\in\mathbb{R}$ to a composition of the trajectory determined by its initial condition. Formally, $C_F(x_0) = c_{F(x_0)}$. $P_F:\mathbb{R}\to \mathcal{P}$ is called a \textit{property map}, and it maps any initial condition $x_0\in\mathbb{R}$ to the properties of the predicted trajectory $C_F(x_0)$. Formally, $P_F(x_0) = p_{F(x_0)}$.

\end{definition}

\section{Semantic ODE}
\label{sec:semantic_ode}

ODE discovery methods aim to discover the governing ODE $f$ and thus a forecasting model $F$ (defined by IVP), which is later analyzed to infer its semantic representation $(C_F,P_F)$ in a two-step modeling process. In this section, we introduce a novel forecasting model, called \textit{Semantic ODE}, that allows for direct semantic modeling. As described in \cref{sec:direct_semantic_modeling} our model $F$ consists of two submodels, $F_{\text{sem}}$ and $F_{\text{traj}}$ (where $F=F_{\text{traj}} \circ F_{\text{sem}}$). We can now define formally $F_{\text{sem}}$ as a function that maps an initial condition $x_0\in\mathbb{R}$ to the semantic representation of a trajectory, i.e., $F_{\text{sem}}:\mathbb{R} \to \mathcal{C} \times \mathcal{P}$. $F_{\text{traj}}$ then takes this semantic representation and matches it a trajectory with such representation, i.e., $F_{\text{traj}}:\mathcal{C} \times \mathcal{P} \to  C^2(\mathcal{T})$ such that if $x=F_{\text{traj}}(c,p)$ then $(c_x,p_x)=(c,p)$. This is visualized in \cref{fig:F_sem_F_traj}. Crucially, by definition, the semantic predictor is the semantic representation of $F$. Indeed, let $x=F(x_0)=F_{\text{traj}}(F_{\text{sem}}(x_0))$. Then by the definition of $F_{\text{traj}}$ above,  $(C_F,P_F)(x_0) = (c_{F(x_0)},p_{F(x_0)}) = (c_x,p_x) = F_{\text{sem}}(x_0)$. Unlike the two-step modeling approach, there is no need for post-hoc mathematical analysis. The semantic representation of the model can be directly inspected through the semantic predictor. In \cref{sec:semantic_predictor}, we propose an implementation for the semantic predictor, and in \cref{sec:trajectory_predictor}, we describe the trajectory predictor.
\vspace{-2mm}
\begin{figure}[h]
    \centering
    \includegraphics[trim={0 13cm 0cm 0},clip,width=1\linewidth,]{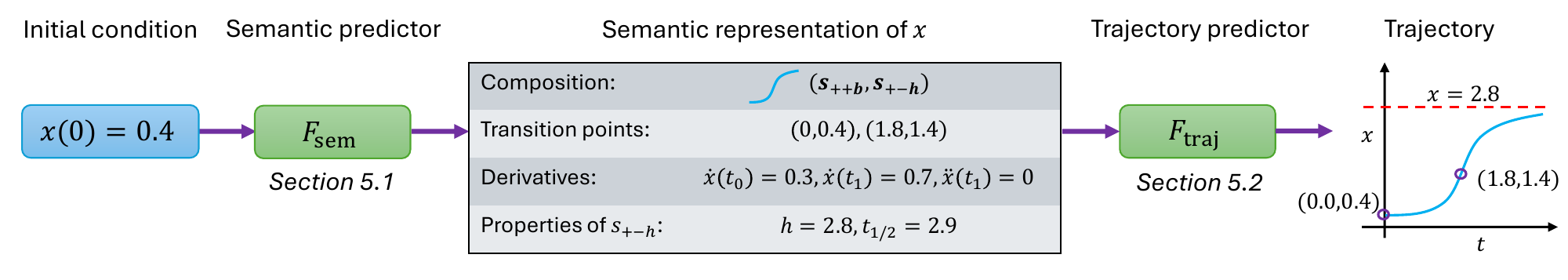}
    \vspace{-7mm}
        \caption{Semantic ODE ($F$) maps the initial condition to the semantic representation of the trajectory (using $F_{\text{sem}}$ and then uses it to predict the actual trajectory (through $F_{\text{traj}}$). Formally, $F=F_{\text{traj}} \circ F_{\text{sem}}$.}
    \label{fig:F_sem_F_traj}
\end{figure}

\vspace{-5mm}
\subsection{Semantic predictor}
\label{sec:semantic_predictor}
Semantic predictor $F_{\text{sem}}$ consists of two models. One that predicts the composition denoted $F_{\text{com}}$ (corresponding to the composition map $C_F$), and one that predicts the properties, denoted $F_{\text{prop}}$ (corresponding to the property map $P_F$). 

$F_{\text{com}}$ is a classification algorithm from $\mathbb{R}$ to $\mathcal{C}'\subset \mathcal{C}$, where $\mathcal{C}'$ is our chosen \textit{composition library}. We model it as a partition of $\mathbb{R}$ into intervals (here called branches), each mapped to a different composition. The maximum number of branches $I\in\mathbb{N}$ is selected by the user. For instance, the logistic growth model example in \cref{fig:comparison_syntactic_semantic} would have 3 branches: $F_{\text{com}}(x_0) =  (s_{++b},s_{+-h})$ for $0\leq x_0 < 1.4$, $F_{\text{com}}(x_0) =  (s_{+-h})$ for $1.4\leq x_0 \leq 2.8$, and $F_{\text{com}}(x_0) =  (s_{-+h})$ for $2.8 < x_0$.

As mentioned earlier, although $x_0=2.8$ should be a straight line, we approximate it with $s_{+-h}$.

$F_{\text{prop}}$ is modeled as a set of univariate functions. Each of them describes one of the coordinates of the transition point, the value of the first or second derivative, or the properties of the unbounded motif. These functions are different for different compositions. Continuing our logistic growth example, $F_{\text{prop}}$ is a piecewise function consisting of three composition-specific property sub-maps $F_{\text{prop}}^{(s_{++b},s_{+-h})}$, $F_{\text{prop}}^{(s_{+-h})}$, $F_{\text{prop}}^{(s_{-+h})}$ that correspond to the three branches described above. Let us focus on $F_{\text{prop}}^{(s_{++b},s_{+-h})}$ that describes the properties of $(s_{++b},s_{+-h})$. The list of properties includes the coordinates of both transition points, first and second derivatives, and two properties of the unbounded motif ($s_{+-h}$). This is visualized in \cref{fig:property_map}.
\vspace{-2mm}
\begin{figure}[h]
    \centering
        \includegraphics[trim={0 0cm 0cm 0},clip,width=0.9\linewidth,]{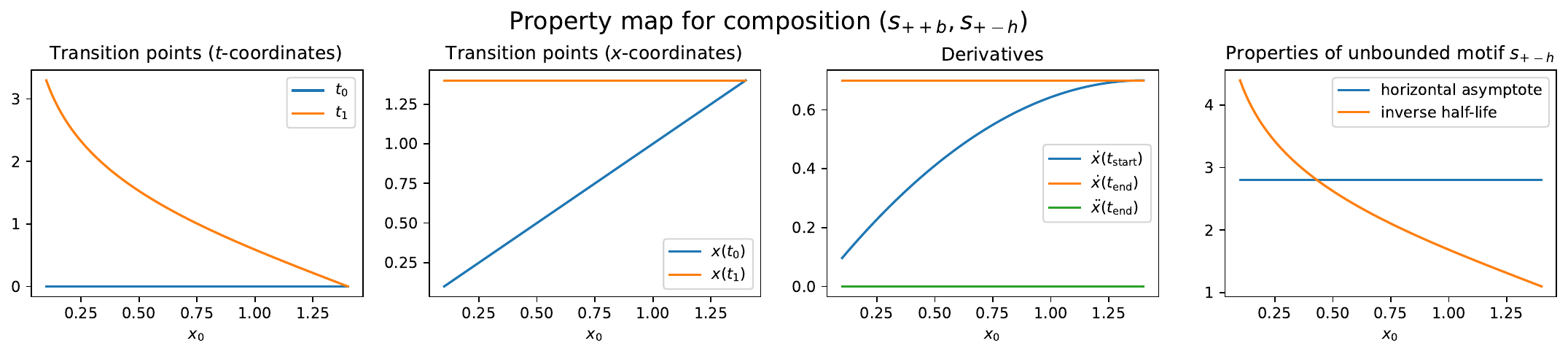}
        \vspace{-1mm}
    \caption{Property sub-map of the logistic growth model (for composition $(s_{++b},s_{+-h})$) describes how various properties depend on the initial condition. See \cref{fig:semantic_representation_logistic} for a full semantic representation.}
    \label{fig:property_map}
\end{figure}

Training of $F_{\text{sem}}$ is performed in two steps. First, we train the composition map $F_{\text{com}}$. Then, the dataset is divided into separate subsets, each for a different composition---according to the composition map---and a separate property sub-map is trained on each of the subsets. \newreb{A block diagram is presented in \cref{fig:block_diagram}, and the pseudocode of the training procedure can be found in \cref{app:training}}.

\begin{figure}[h]
    \centering
    \includegraphics[trim={0cm 6.5cm 3.5cm 0},clip,width=0.8\linewidth,]{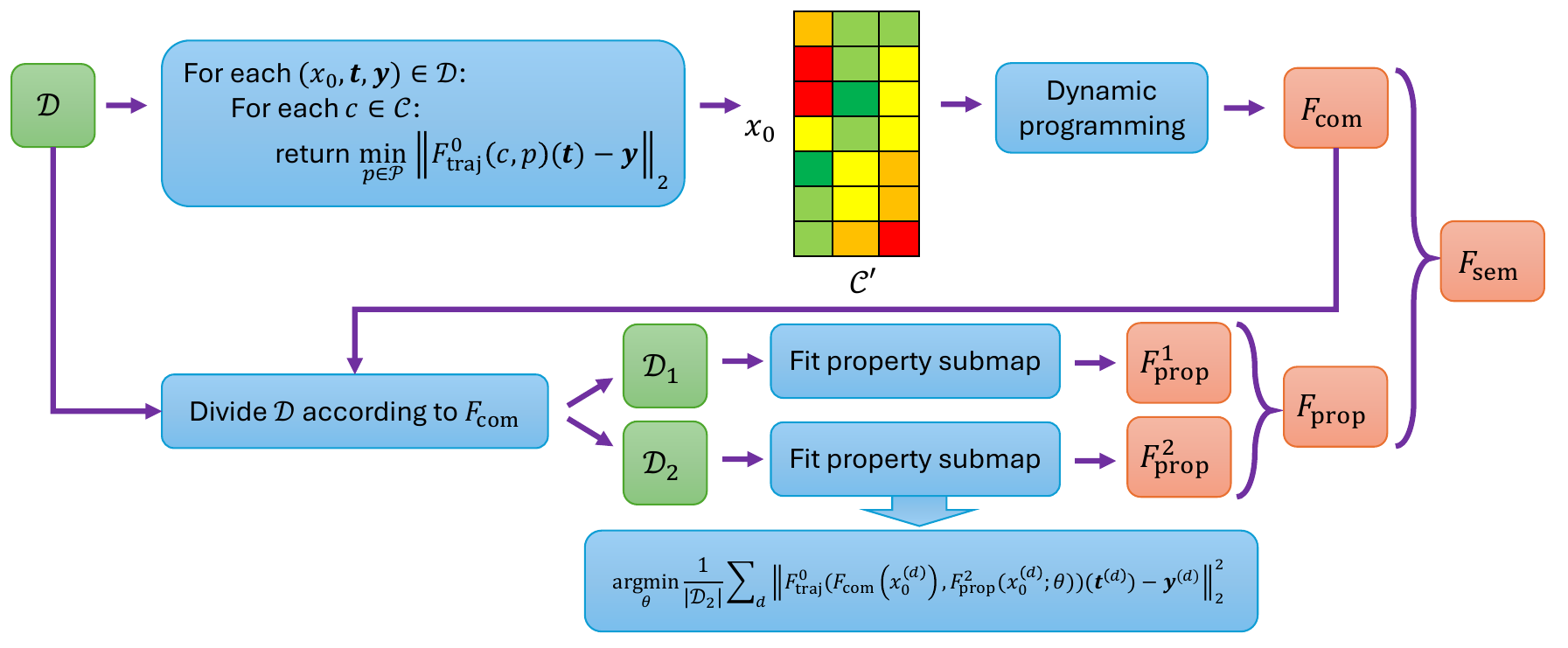}
    \caption{\newreb{Block diagram showing main elements of training the semantic predictor.}}
    \label{fig:block_diagram}
\end{figure}

\subsection{Trajectory predictor}
\label{sec:trajectory_predictor}

The goal of the trajectory predictor ($F_{\text{traj}}$) is to map the semantic representation $(c,p)$ to a trajectory $x\in\mathcal{X}$ such that $(c_x,p_x)=(c,p)$. In that case, we say that $x$ \textit{conforms} to $(c,p)$ and write it as $x \equiv (c,p)$. This mapping requires specifying the hypothesis space $\mathcal{X}$ for the predicted trajectories. As mentioned previously, we would also like each trajectory to be twice continuously differentiable, i.e., $\mathcal{X} \subset  C^2$. We choose $\mathcal{X}$ to be a set of piecewise functions defined as
\begin{equation}
    x(t) = 
    \begin{cases}
        x_|(t) & t\leq t_{\text{end}} \\
        _|x(t) & t > t_{\text{end}}
    \end{cases}
\end{equation}
where $t_{\text{end}}$ is the last transition point (before the unbounded motif), $x_| \in \mathcal{X}_|$ is called the bounded part of $x$, and $_|x \in {}_|\mathcal{X}$ is called the unbounded part of $x$. Finding $x_|$ and $_|x$ is done separately and we discuss it respectively in \cref{sec:bounded_part,sec:unbounded_part}.

\subsubsection{Bounded part of the trajectory}
\label{sec:bounded_part}
In this section, we define $\mathcal{X}_|$ and describe how we can find the bounded part of the trajectory ($x_|$) given a semantic representation $(c,p)$ without the unbounded motif and its properties, denoted $(c_|,p_|)$.

\paragraph{Cubic splines}
We decide to define $\mathcal{X}_|$ as a set of cubic splines. They are piecewise functions where each piece is defined as a cubic polynomial. The places where two cubics are joined are called \textit{knots}. Cubic splines require that the first and second derivatives at the knots be the same for neighboring cubics so that the cubic spline is guaranteed to be twice continuously differentiable. Cubic splines are promising because they are flexible, and for a fixed set of knots, the equations for their values and derivatives are linear in their parameters. We come up with two different ways of finding a cubic spline that conforms to a particular semantic representation that leads us to develop two trajectory predictors: $F_{\text{traj}}^0:\mathcal{C}\times\mathcal{P}\to  C^0$ and $F_{\text{traj}}^2:\mathcal{C}\times\mathcal{P}\to  C^2$. We use $F_{\text{traj}}^0$ during training of $F_{\text{sem}}$ as it is fast and differentiable, but the found trajectory may not be in $ C^2$ (only continuous). At inference, we use $F_{\text{traj}}^2$ that is slower and not differentiable but ensures that the trajectory is in $ C^2$. We describe both approaches briefly below. More details are available in \cref{app:bounded_part}

\paragraph{$ C^0$ trajectory predictor}
$F_{\text{traj}}^0$ describes each motif as a single cubic. Each cubic is found by solving a set of four linear equations. Two are for the positions of the transition points, and two are for the derivatives, one at each transition point. As each transition point (apart from $t_0$) is either a local extremum or an inflection point, we know that either the first or the second derivative vanishes. The first derivative at $t_0$ is specified directly in $p_|$. During training, we also make sure that this derivative is always in the range described in \cref{tab:derivative_ranges}. We prove why this is sufficient to guarantee that the predicted trajectory conforms to $(c_|,p_|)$ in \cref{app:C0_trajectory_predictor}.

\paragraph{$ C^2$ trajectory predictor}
$F_{\text{traj}}^2$ describes each motif as two cubics with an additional knot between every two transition points. Given $K$ cubics (and $K+1$ knots), the traditional way to ensure that $x_|$ is in $ C^2$ is to set up $3(K-1)$ constraints that match the values and both derivatives at the knots. Instead, we propose to fit the second derivative of the cubic spline ($\ddot{x}_|)$, a piecewise linear function, and then integrate it twice to get the desired cubic spline. As it is continuous, we can describe it solely by its values at the knots ($v_k$ at knot $t_k$). Together with the additional two parameters for the integration constants, we not only reduce the number of parameters to $K+3$ (while still guaranteeing the trajectory is in $ C^2$) but, more importantly, we can control exactly the value of the second derivative. Importantly, as shown in \cref{app:bounded_part}, we can impose $v_k>0$ or $v_k<0$ accordingly. We set some $v_k$ to make sure that the first and second derivatives at the transition points are correct. Then, we optimize the additional knots and some of $v_k$ to minimize the error between the predicted and target $x$-coordinates of the transition points (with an additional loss term for the signs of the first derivatives at the knots). We minimize this objective using L-BFGS-B \citep{liu1989limited} and Powell's method \citep{powell1964efficient} until we find a solution where the error on the transition points is smaller than a user-defined threshold. If it does not succeed, then we default to $F^0_{\text{traj}}$.

\subsubsection{Unbounded part of the trajectory}
\label{sec:unbounded_part}
In this section, we define $_|\mathcal{X}$ and describe how we can find the unbounded part of the trajectory given an unbounded motif and its properties, as well as the coordinates of the last transition point and both derivatives at this point. We need them to ensure that $x$ is twice differentiable at $t_{\text{end}}$. 
First, we need to choose which properties we are interested in. They should describe the shape (and long-term behavior) of the unbounded motif in sufficient detail such that we do not need to see the underlying equation to visualize it. For instance, for motifs with horizontal asymptotes ($s_{+-h}$ and $s_{-+h}$), we choose these to be $h$ and $t_{1/2}$ where $x=h$ is the horizontal asymptote and $t_{1/2}$ is the time where the trajectory is in the middle between the last transition point and the $h$, i.e., $x(t_{1/2})=(x(t_{\text{end}})+h)/2$. In exponential decay, $t_{1/2}-t_{\text{end}}$ would correspond to ``half-life''. Then, we need to come up with a parametrization that would both guarantee the shape of the trajectory and allow us to impose any possible properties (e.g., $h$ in $s_{-+h}$ needs to satisfy $ h<x(t_{\text{end}})$). We parametrize $s_{-+h}$
as $_|x(t) = 2(x(t_{\text{end}}) - h)/(1+e^{g(t)}) + h$ where $g$ is an appropriately defined cubic spline.
We show how we can find such $g$ and why it has the desired properties, as well as properties and parameterizations for other motifs, in \cref{app:unbounded_part}. Importantly, it does not matter how complicated these parameterizations are, as they are not used by humans to understand the model. All information is directly available in $F_{\text{sem}}$.

\section{Semantic ODE in action}
\label{sec:in-action}

In this section, we want to illustrate the usability of Semantic ODEs and highlight their advantages: semantic inductive biases (\cref{sec:priors}), comprehensibility (\cref{sec:understanding}), editing (\cref{sec:editing}) and flexibility and robustness to noise (\cref{sec:flexibility}). To demonstrate the first three advantages, we present a case study of finding a pharmacokinetic model from a dataset of observed drug concentration curves. Such models are essential for drug development and later clinical practice. Details about the dataset can be found in \cref{app:datasets}. 
We will contrast our approach with one of the most popular ODE discovery algorithms, SINDy \citep{Brunton.SparseIdentificationNonlinear.2016}. However, many of the observations will apply to other algorithms as well.

\subsection{Semantic inductive biases}
\label{sec:priors}
In Semantic ODEs, a user can specify inductive biases about the semantic representation of the model. This is in contrast to the \textit{syntactic inductive biases}, available in ODE discovery algorithms. Semantic inductive biases can be more meaningful and intuitive for users than syntactic ones as the relationship between the syntactic and semantic representation may be non-trivial. The role of syntactic biases is, predominantly, to ensure that the equations can be analyzed by humans. They are not designed to accommodate prior knowledge about the system's behavior. Examples of inductive biases in SINDy and Semantic ODE are shown in \cref{tab:priors_comparison}.
\begin{table}[h]
    \centering
    \caption{Examples of inductive biases in SINDy and Semantic ODE.}
    \vspace{-2mm}
    \label{tab:priors_comparison}
    \resizebox{\textwidth}{!}{
    \begin{tabular}{p{9cm}p{8cm}}
    \toprule
    \multicolumn{1}{c}{Syntactic inductive biases in SINDy} & \multicolumn{1}{c}{Semantic inductive biases in Semantic ODE} \\ \midrule
        \textbf{Autonomous system}: whether the governing ODE system is time-invariant.
        
        \textbf{Library of functions:} whether to include, e.g., polynomials, trigonometric terms, exponential/logarithmic terms, cross-terms. 
        
        \textbf{Sparsity:} the number of terms, strength of penalty terms such as L1 or L2 norms. & 
        
        \textbf{Library compositions:} The maximum number of motifs, the starting motif, the type of asymptotic behavior. 
        
        \textbf{The complexity of the composition map:} how many different compositions it should predict. 
        
        \textbf{Complexity of the property maps:} how many trend changes the property maps could have. \\
        \bottomrule
    \end{tabular}}
\vspace{-6mm}
\end{table}

The drug concentration curve describes the drug plasma concentration after administration as a function of time. Without any additional doses, we would expect the concentration to decay to $0$ as $t\to\infty$. This is a semantic inductive bias that can be easily put into Semantic ODE. We can enforce the last motif to always be $s_{-+h}$ (decreasing, convex function with a horizontal asymptote) by removing all biologically impossible compositions from the library $\mathcal{C}'$. 
Designing syntactic inductive biases based on the prior knowledge is more challenging. 
SINDy assumes $\dot{x}=\sum_{i=1}^n \alpha_i g_i$, i.e., $f$ is a linear combination of pre-specified functions. As choosing which terms should and should not appear in the equation is far from obvious, we choose a general library containing polynomial terms, $\exp$, and trigonometric terms (see \cref{app:methods} for details). We also consider different sparsity levels, from $1$ to $15$ non-zero terms. 

\subsection{Comprehensibility}
\label{sec:understanding}

We have fitted Semantic ODE with the inductive bias described earlier and versions of SINDy with different sparsity constraints. The results can be seen in \cref{tab:pk_results_1}.

\begin{table}[h]
\centering
\caption{Results of fitting Semantic ODE and SINDy to the pharmacokinetic dataset.}
\vspace{-2mm}
\label{tab:pk_results_1}
\resizebox{\textwidth}{!}{%
\begin{tabular}{@{}p{2.1cm}p{3.4cm}p{4.5cm}p{3.5cm}p{2.3cm}p{1.5cm}p{1.8cm}@{}}
\toprule
Model & Syntactic biases & Semantic biases & Syntactic \newline representation & Semantic \newline
representation & In-domain ($t\leq 1$) & Out-domain ($t>1$) \\
\midrule
SINDy & $\dot{x}=\sum_{i=1}^n \alpha_i g_i, n \leq 1$ & NA & $\dot{x}(t) = -3.06 x(t) t$ & NA & $0.222_{(0.041)}$ & $0.024_{(0.005)}$ \\
SINDy & $\dot{x}=\sum_{i=1}^n \alpha_i g_i, n \leq 2$ & NA & $\dot{x}(t) = 5.56-43.10 x(t) t$ & NA & $0.112_{(0.027)}$ & $0.054_{(0.010)}$ \\
SINDy & $\dot{x}=\sum_{i=1}^n \alpha_i g_i, n \leq 5$ & NA & \cref{eq:sindy_5} & NA & $0.101_{(0.023)}$ & $16.850_{(0.021)}$ \\
SINDy & $\dot{x}=\sum_{i=1}^n \alpha_i g_i, n \leq 10$ & NA & \cref{eq:sindy_10} & NA & $0.029_{(0.005)}$ & $18.686_{(0.003)}$ \\
SINDy & $\dot{x}=\sum_{i=1}^n \alpha_i g_i, n \leq 15$ & NA & \cref{eq:sindy_15} & NA & $0.020_{(0.004)}$ & $77.577_{(1.249)}$ \\
Semantic ODE & NA & $|c_x|\leq 4$, $c_x$ ends with $s_{-+h}$ & NA & \cref{fig:pk_model_semantic_1} & $\bm{0.016}_{(.004)}$ & $0.033_{(.006)}$  \\
Semantic ODE & NA & $c_x:(s_{+-b},s_{--b},s_{-+h})$, $h=0$ & NA & \cref{fig:pk_model_semantic_2} & $0.018_{(.003)}$ & $\bm{0.015}_{(.002)}$  \\
\bottomrule
\end{tabular}}
\end{table}

We can see that Semantic ODE better fits the dataset than even the longest equations found by SINDy. Compact equations, e.g., $\dot{x}(t) = 5.56-43.10 x(t) t $ have poor performance. To improve it, we need to allow for much more complicated equations, such as \cref{eq:sindy_15} in \cref{app:additional_results}. Such equations are very hard to analyze and, as a result, may be no more interpretable than a black box model. They are also more prone to over-fitting, as demonstrated by large out-domain error. More importantly, Semantic ODE can be directly understood by looking at its semantic representation in \cref{fig:pk_model_semantic_1}.

\begin{figure}[h]
    \centering
    \includegraphics[trim={0cm 0cm 3.8cm 0},clip,width=1\linewidth,]{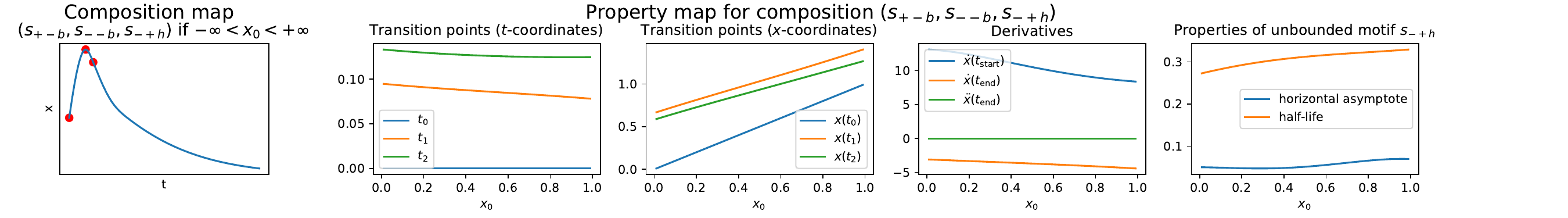}
    \caption{Semantic representation of Semantic ODE fitted to the pharmacological dataset.}
    \vspace{-2mm}
    \label{fig:pk_model_semantic_1}
\end{figure}

The left side of \cref{fig:pk_model_semantic_1} tells us that for all initial conditions, the shape of the predicted trajectory is going to be the same, namely $(s_{+-b},s_{--b},s_{-+h})$. As the composition map has only one branch, we have only one property map, and it is visualized on the right of the composition map. It consists of four subplots. Going from left to right, we have the $t$-coordinates of the transition points, $x$-coordinates of the transition points, values of the derivatives at the boundary transition points, and the properties of the unbounded motif. In our case, the unbounded motif is a convex, decreasing function with a horizontal asymptote, and it is described by the value of the horizontal asymptote and its ``half-life'', which is the $t$-coordinate of the point where the value of $x$ is in the middle between the last transition point and the asymptote. By looking at this representation, we can readily see how the trajectory changes with respect to the initial condition. We see that the $t$-coordinates of the transition points remain fairly constant, whereas $x$-coordinates increase linearly. In particular, we can see how the maximum of the trajectory ($x(t_1)$) increases linearly from $0.6$ up to $1.3$. Arriving at similar observations about the discovered ODEs requires significant time and expertise. In particular, we do not know what the composition map of the discovered ODEs looks like. Thus, we cannot be sure whether the model behaves correctly.
\vspace{-2mm}
\subsection{Editing}
\label{sec:editing}
\vspace{-2mm}
Often, the fitted model does not satisfy all our requirements. A very important requirement in pharmacology would be to make sure that the model is biologically possible. In particular, the predicted concentration should decay to 0 as $t\to\infty$. The horizontal asymptote of our model is close to 0 but not exactly. That is why the extrapolation error for $t>1$ is substantially higher than for $t\leq1$. Fortunately, we can edit the property map directly. We can impose the value of the horizontal asymptote to be equal to $0$ and retrain the model. The new property map can be seen in \cref{fig:pk_model_semantic_2}. Importantly, as shown in the last row of \cref{tab:pk_results_1}, the extrapolation error dropped to levels comparable with the in-domain error. In two-step modeling, it may be challenging to verify and impose such requirements for the predicted equations, especially the longer ones. As a result, we may end up with a model that does not obey crucial domain-specific rules. 

\vspace{-2mm}
\subsection{Flexibility and robustness to noise}
\label{sec:flexibility}
\vspace{-2mm}
As Semantic ODE does not assume that the trajectory is governed by a closed-form ODE, it may fit systems that are not described by those. In particular, we show in \cref{tab:main_results} how, beyond standard ODEs (logistic growth model), it can fit systems governed by a general differential equation $\dot{x}(t)=f(x(t),t)$, where $f$ does not have a compact closed-form representation, a multidimensional ODE where only one dimension is observed (pharmacokinetic model), delay differential equation \citep{mackey1977oscillation}, and an integro-differential equation (integro-DE). We compare our approach to SINDy \citep{Brunton.SparseIdentificationNonlinear.2016} and WSINDy \citep{Reinbold.UsingNoisyIncomplete.2020,Messenger.WeakSINDyGalerkinBased.2021} as implemented in \texttt{PySINDy} library \citep{deSilva.PySINDyPythonPackage.2020,Kaptanoglu.PySINDyComprehensivePython.2022}. We consider variants constrained to $5$ terms (in a linear combination) to ensure compactness (SINDy-5, WSINDy-5) and where the number of terms is fine-tuned and may no longer be compact (SINDy, WSINDy). We also include a standard symbolic regression method, PySR \citep{pysr}, adapted for ODE discovery and constrained to $20$ \textit{symbols}. \newreb{We also compare with three black box approaches: Neural ODE \citep{Chen.NeuralOrdinaryDifferential.2018}, Neural Laplace \citep{holt2022neural}, and DeepONet \citep{Lu.DeepONetLearningNonlinear.2020}.} Semantic ODE is more or equally robust to noise and performs better than the methods constrained to compact equations in most cases. Moreover, its performance could possibly be further improved by incorporating semantic inductive biases and model editing as discussed earlier. \newreb{Additional experiments can be found in \cref{app:additional_results}}. Details on experiments are available in \cref{app:experimental_details}.

\begin{table}[h]
\caption{Comparison of Average RMSE obtained by different models. Average performance over 5 random seeds and data splits is shown with standard deviations in the brackets.}
\label{tab:main_results}
\resizebox{1\linewidth}{!}{
\begin{tabular}{lllllllllll}
\toprule
 & \multicolumn{2}{c}{Logistic Growth} & \multicolumn{2}{c}{General ODE} & \multicolumn{2}{c}{Pharmacokinetic model} & \multicolumn{2}{c}{Mackey-Glass (DDE)}  & \multicolumn{2}{c}{Integro-DE} \\
Method & low noise & high noise & low noise & high noise & low noise & high noise & low noise & high noise & low noise & high noise \\
\midrule
SINDy-5 & $0.012_{(.002)}$ & $0.222_{(.004)}$ & $0.053_{(.012)}$ & $0.103_{(.010)}$ & $0.093_{(.004)}$ & $0.230_{(.014)}$ & $0.238_{(.023)}$ & $0.248_{(.025)}$ & $0.431_{(.051)}$ & $0.268_{(.019)}$ \\
WSINDy-5 & $\bm{0.010}_{(.000)}$ & $0.222_{(.009)}$ & $0.066_{(.009)}$ & $0.102_{(.008)}$ & $0.211_{(.009)}$ & $0.415_{(.299)}$ & $0.272_{(.032)}$ & $0.300_{(.061)}$ & $0.160_{(.066)}$ & $0.452_{(.365)}$ \\
PySR-20 & $0.012_{(.002)}$ & $0.224_{(.007)}$ & $0.078_{(.029)}$ & $0.119_{(.029)}$ & $0.053_{(.015)}$ & $0.242_{(.039)}$ & $0.261_{(.021)}$ & $0.288_{(.031)}$ & $0.027_{(.011)}$ & $0.393_{(.144)}$ \\
SINDy & $0.012_{(.001)}$ & $0.218_{(.011)}$ & $0.068_{(.013)}$ & $0.115_{(.012)}$ & $0.020_{(.001)}$ & $0.209_{(.010)}$ & $0.252_{(.026)}$ & $0.257_{(.028)}$ & $0.318_{(.172)}$ & $0.248_{(.016)}$ \\
WSINDy & $\bm{0.010}_{(.001)}$ & $0.217_{(.016)}$ & $0.062_{(.009)}$ & $0.112_{(.009)}$ & $0.038_{(.006)}$ & $0.219_{(.016)}$ & $0.200_{(.035)}$ & $0.207_{(.031)}$ & $0.152_{(.086)}$ & $0.300_{(.082)}$ \\
Neural ODE & $0.023_{(.004)}$ & $\bm{0.197}_{(.005)}$ & $0.029_{(.005)}$ & $0.075_{(.006)}$ & $0.036_{(.008)}$ & $0.203_{(.007)}$ & $0.177_{(.010)}$ & $0.194_{(.010)}$ & $0.073_{(.007)}$ & $0.215_{(.009)}$ \\
Neural Laplace & $0.126_{(.036)}$ & $0.230_{(.017)}$ & $0.108_{(.030)}$ & $0.138_{(.023)}$ & $0.100_{(.022)}$ & $0.229_{(.013)}$ & $0.057_{(.006)}$ & $0.094_{(.009)}$ & $0.075_{(.044)}$ & $0.249_{(.014)}$ \\
DeepONet & $0.184_{(.040)}$ & $0.306_{(.023)}$ & $0.160_{(.033)}$ & $0.195_{(.027)}$ & $0.058_{(.010)}$ & $0.212_{(.005)}$ & $0.107_{(.014)}$ & $0.132_{(.012)}$ & $0.100_{(.015)}$ & $0.230_{(.014)}$ \\
Semantic ODE & $0.015_{(.005)}$ & $0.198_{(.007)}$ & $\bm{0.016}_{(.003)}$ & $\bm{0.068}_{(.002)}$ & $\bm{0.015}_{(.001)}$ & $\bm{0.197}_{(.006)}$ & $\bm{0.037}_{(.003)}$ & $\bm{0.077}_{(.004)}$ & $\bm{0.025}_{(.003)}$ & $\bm{0.204}_{(.007)}$ \\
\bottomrule
\end{tabular}}
\end{table}

\vspace{-2mm}
\section{Discussion}
\paragraph{Limitations and open challenges} As Semantic ODE is the first model that allows for direct semantic modeling, we focused solely on 1-dimensional systems and we describe a possible roadmap to higher-dimensional settings in \cref{app:roadmap}. The current definition of semantic representation assumes that the trajectory has a finite composition, i.e., it cannot have an oscillatory behavior (like $\sin$). Of course, we could fit a periodic function on any bounded interval, but it would fail to predict the oscillatory behavior outside of it. We discuss more limitations in \cref{app:limitations}.
\vspace{-2mm}
\paragraph{Direct semantic modeling as a new way for modeling dynamical systems} In this work we outlined the main principles of direct semantic modeling, discussed its advantages, and illustrated how it can be achieved in practice through Semantic ODE. We believe this approach can transform the way dynamical systems are modeled by shifting the focus from the equations to the system's behavior, making the models not only more understandable but also more flexible than other techniques. 

\newpage

\paragraph{Ethics statement} In this paper, we introduce a novel approach for enhancing the comprehensibility of a dynamical system's behavior through direct semantic modeling, with a practical implementation called Semantic ODE. Improved transparency of machine learning models is crucial for tasks such as model debugging, ensuring compliance with domain-specific constraints, and addressing potential harmful biases. However, such techniques, if misused, can lead to a false sense of security in model decisions or be leveraged merely for superficial regulatory compliance. As our approach is applicable to high-stakes domains like medicine and pharmacology, it is vital to conduct a thorough evaluation before deploying the model in such contexts. This evaluation must ensure that the model's behavior aligns with ethical considerations and does not support decisions that could negatively impact individuals' health and well-being.

\paragraph{Reproducibility statement} All mathematical definitions are provided in \cref{sec:formalizing_semantic_representation,sec:semantic_ode,app:definitions}. The proofs are provided in \cref{app:trajectory_predictor}. The implementation, including a block diagram and pseudocode, is discussed in \cref{sec:semantic_ode,app:training,app:trajectory_predictor}. The experimental details are discussed in \cref{sec:in-action,app:experimental_details}. All experimental code is available at \url{https://github.com/krzysztof-kacprzyk/SemanticODE} and \url{https://github.com/vanderschaarlab/SemanticODE}.

\paragraph{Acknowledgments} This work was supported by Roche. We would like to thank Max Ruiz Luyten, Harry Amad, Julianna Piskorz, Andrew Rashbass, and anonymous reviewers for their useful comments and feedback on earlier versions of this work.

\bibliography{references}
\bibliographystyle{iclr2025_conference}
\newpage 

\appendix

\section*{Table of supplementary materials}
\begin{enumerate}
\item \cref{app:notation_definitions}: notation and definitions
\item \cref{app:additional_results}: additional results
\item \cref{app:training}: training of the semantic predictor
\item \cref{app:trajectory_predictor}: trajectory predictor
\item \cref{app:experimental_details}: experimental details
\item \cref{app:extended_related_works}: extended related works
\item \cref{app:additional_discussion}: additional discussion

\end{enumerate}

\section{Notation and Definitions}
\label{app:notation_definitions}

\subsection{Notation}
\label{app:notation}
Symbols used throughout this work can be found in \cref{tab:notation,tab:notation_2}.

\begin{table}[h]
    \centering
        \caption{Notation used throughout the paper (Part 1).}
    \label{tab:notation}
        \resizebox{\linewidth}{!}{
    \begin{tabular}{p{1.5cm}p{12cm}}
    \toprule
    Symbol & Meaning \\
    \midrule
    $M$ & The number of dimensions in a dynamical system, $M\in\mathbb{N}$ \\
    $m$ & Used for indexing dimensions, $m\in\{1,\ldots,M\}$ \\
    $t_0$ & Start of the trajectory, $t_0\in\mathbb{R}$ \\
    $\mathcal{T}$ & Time domain, subset of $\mathbb{R}$, $(t_0,+\infty)$ \\
    $\bm{x}$ & An $M$-dimensional trajectory, $\bm{x}:\mathcal{T}\to\mathbb{R}^M$ \\
    $x$ & A 1-dimensional trajectory, $x:\mathcal{T}\to\mathbb{R}$ \\
    $\dot{\bm{x}}$ or $\dot{x}$  & Derivative of $\bm{x}$ or $x$ with respect to time $t$ \\
    $D$ & The number of samples, $D\in\mathbb{N}$ \\
    $d$ & Used for indexing samples, $d\in\{1,\ldots,D\}$ \\
    $N_d$ & The number of measurements of sample $d$, $N_d\in\mathbb{N}$ \\
    $t_n^{(d)}$ & The time of the $n^{\text{th}}$ measurement of sample $d$, $t_n^{(d)}\in\mathcal{T}$ \\
    $\bm{y}_n^{(d)}$ & The $n^{\text{th}}$ measurement of sample $d$ (taken at time $t_n^{(d)}$), $\bm{y}_n^{(d)}\in\mathbb{R}^M$ \\
    $\bm{x}_0$ & The initial condition, the value of $\bm{x}$ at $t_0$, $\bm{x}_0\in\mathbb{R}^M$ \\
    $\bm{f}$ & A system of $M$ ODEs, $\bm{f}:\mathbb{R}^{M+1}\to\mathbb{R}^{M}$ \\
    $f$ & A single ODE, $f:\mathbb{R}^2 \to \mathbb{R}$ \\
    $ C^0(\mathcal{T})$ & The set of continuous functions on $\mathcal{T}$ \\ 
    $ C^2(\mathcal{T})$ & The set of twice continuously differentiable functions on $\mathcal{T}$ \\
    $\bm{F}$ & A forecasting model predicting an $M$-dimensional trajectory, $F_i:\mathbb{R}\to C^2$ \\
    $F$ & A forecasting model predicting a 1-dimensional trajectory, $F:\mathbb{R}\to C^2$ \\
    $F_{\text{sem}}$ & A semantic predictor, predicts a semantic representation of trajectory from the initial condition \\
    $F_{\text{traj}}$ & A trajectory predictor, predict a trajectory from its semantic representation \\
    $c_{x}$   & A composition of trajectory $x$ \\
    $p_{x}$ & A set of properties of trajectory $x$ \\
    $t_{\text{end}}$ & The last transition point, $t_{\text{end}} \in \mathcal{T}$ \\
    $\mathcal{C}$ & The set of all possible compositions \\
    $\mathcal{P}_c$ & The set of all possible properties for composition $c$ \\
    $\mathcal{P}$ & The set of all possible properties for all compositions \\
    $s_{\pm \pm *}$ & Motifs, formally defined in \cref{def:motif_used} \\
    $(c_x,p_x)$ & Semantic representation of $x$ \\
    $(C_F,P_F)$ & Semantic representation of $F$ \\
    $\circ$ & Function composition \\
    $h$ & Value of the horizontal asymptote in motifs $s_{-+h}$ and $_{+-h}$ \\
    $t_{1/2}$ & ``half-life'' property of motifs $s_{-+h}$ and $_{+-h}$ \\
    $F_{\text{prop}}$ & A property map, $F_{\text{prop}}:\mathbb{R}\to\mathcal{P}$ \\
    $F_{\text{com}}$ & A composition map, $F_{\text{com}}:\mathbb{R}\to\mathcal{C}$ \\
    $F_{\text{prop}}^{(c)}$ & A property sub-map, $F_{\text{prop}}:\mathbb{R}\to\mathcal{P}_c$ \\
    $x\equiv(c,p)$ & $x\in C^2$ conforms to $(c,p)\in\mathcal{C}\times\mathcal{P}$, $(c,p)=(c_x,p_x)$ \\
    $x\sim(c,p)$ & $x\in C^2$ seemingly matches $(c,p)\in\mathcal{C}\times\mathcal{P}$, \cref{def:seemingly_matches} \\
    \bottomrule
    \end{tabular}}
\end{table}

\begin{table}[h]
    \centering
        \caption{Notation used throughout the paper (Part 2).}
    \label{tab:notation_2}
    \resizebox{\linewidth}{!}{
    \begin{tabular}{p{1cm}p{12cm}}
    \toprule
    Symbol & Meaning \\
    \midrule
    $\mathcal{X}$ & set of trajectories \\
    $x_|$ & Bounded part of $x$, $x_|:[t_0,t_{\text{end}}]\to\mathbb{R}$ \\
    ${}_|x$ & Unbounded part of $x$, ${}_|x:[t_{\text{end}},+\infty)$ \\
    $(c_|,p_|)$ & Bounded part of $(c,p)$, i.e., $(c,p)$ without the unbounded motif and its properties \\
    $({}_|c,{}_|p)$ & Unbounded part of $(c,p)$, i.e., the unbounded motif, its properties, the last transition point, and derivatives at it \\
    $F_{\text{traj}}^0$ & $ C^0$ trajectory predictor, $F_{\text{traj}}^0:\mathcal{C}\times\mathcal{P}\to C^0$ \\
    $F_{\text{traj}}^2$ & $ C^2$ trajectory predictor, $F_{\text{traj}}^2:\mathcal{C}\times\mathcal{P}\to C^2$ \\
    $\mathcal{C}'$ & Composition library, $\mathcal{C}'\subset \mathcal{C}$ \\
    $I$ & The maximum number of intervals/branches in the composition map $F_{\text{com}}$, $I\in\mathbb{N}$ \\
    \bottomrule
    \end{tabular}}
\end{table}

\subsection{Definitions}
\label{app:definitions}
In this section, we provide formal definition of some of the terms introduced in the main text.

From the work by \cite{Kacprzyk.TransparentTimeSeries.2024}.
\begin{definition}
\label{def:motif}
Let $\mathcal{I}$ be a set of intervals on $\mathbb{R}$ and let $\mathcal{F}$ be the set of interval functions, i.e., real functions defined on intervals. A motif $s$ is a binary relation between the set of interval functions $\mathcal{F}$ and the set of intervals $\mathcal{I}$ (i.e., $s\subset \mathcal{F} \times \mathcal{I}$). We denote $(\phi,i)\in s$ as $\phi|i \sim s$ and read it as ``$\phi$ on $i$ has a motif $s$''. Each motif needs to be
\begin{itemize}[leftmargin=5mm,nolistsep]
\item \textit{well-defined}, i.e., for any $\phi\in\mathcal{F}$, and any $i\in\mathcal{I}$,
\begin{equation}
    \phi | i \sim s \implies i \subset \text{dom}(\phi)
\end{equation}
\item \textit{translation-invariant}, i.e., for any $i\in \mathcal{I}$, and any $\phi\in \mathcal{F}$,
\begin{equation}
    \phi | i \sim s \Longleftrightarrow \phi \circ (t-q)|(i+q) \sim s \ \forall q\in \mathbb{R}
\end{equation}    
\end{itemize}
\end{definition}

In the next definition, we formally define the motifs we use to define semantic representation.
\begin{definition}
\label{def:motif_used}
Let $x\in C^2$ and let $r_1,r_2\in\mathbb{R}$ such that $r_1<r_2$.
\begin{itemize}
    \item $x|[r_1,r_2] \sim s_{++b}$ if $\forall t\in (r_1,r_2) \ \dot{x}(t) > 0, \ \ddot{x}(t) > 0$
\item $x|[r_1,r_2] \sim s_{+-b}$ if $\forall t\in (r_1,r_2) \ \dot{x}(t) > 0, \ \ddot{x}(t) < 0$
\item $x|[r_1,r_2] \sim s_{-+b}$ if $\forall t\in (r_1,r_2) \ \dot{x}(t) < 0, \ \ddot{x}(t) > 0$
\item $x|[r_1,r_2] \sim s_{--b}$ if $\forall t\in (r_1,r_2) \ \dot{x}(t) < 0, \ \ddot{x}(t) < 0$
\item $x|[r_1,+\infty) \sim s_{++u}$ if $\forall t\in (r_1,+\infty) \ \dot{x}(t) > 0, \ \ddot{x}(t) > 0$
\item $x|[r_1+\infty) \sim s_{+-u}$ if $\forall t\in (r_1+\infty) \ \dot{x}(t) > 0, \ \ddot{x}(t) < 0$ and $\lim_{t\to+\infty} x(t) = +\infty$
\item $x|[r_1,+\infty) \sim s_{-+u}$ if $\forall t\in (r_1,+\infty) \ \dot{x}(t) < 0, \ \ddot{x}(t) > 0$ and $\lim_{t\to+\infty} x(t) = -\infty$
\item $x|[r_1,+\infty) \sim s_{--u}$ if $\forall t\in (r_1,+\infty) \ \dot{x}(t) < 0, \ \ddot{x}(t) < 0$
\item $x|[r_1+\infty) \sim s_{+-h}$ if $\forall t\in (r_1,+\infty) \ \dot{x}(t) > 0, \ \ddot{x}(t) < 0$ and $\lim_{t\to+\infty} x(t)\in\mathbb{R}$
\item $x|[r_1,+\infty) \sim s_{-+h}$ if $\forall t\in (r_1,+\infty) \ \dot{x}(t) < 0, \ \ddot{x}(t) > 0$ and $\lim_{t\to+\infty} x(t) \in\mathbb{R}$
\end{itemize}
\end{definition}

\begin{definition}
\label{def:seemingly_matches}
Let $u_|,v_|$ be bounded trajectory parts on $(t_0,t_{\text{end}})$ and assume $v_| \equiv (c_|,p_|)$, where $(c_|,p_|)$ is the bounded part of the semantic representation $(c,p)$. We say that $u_|$ \textit{seemingly matches} $(c_|,p_|)$, and write it as $u_| \sim (c_|,p_|)$, if for any transition point $t$ in $(c_|,p_|)$
\begin{itemize}
    \item $u_|(t) = v_|(t)$,
    \item $\dot{u}_|(t) = \dot{v}_|(t)$ if $t\in\{t_0,t_{\text{end}}\}$ or $t$ is a local extremum,
    \item $\ddot{u}_|(t) = \ddot{v}_|(t)$ if $t=t_{\text{end}}$ or $t$ is an inflection point.
\end{itemize}
We also say $u_|$ \textit{seemingly matches} $v_|$ and write it $u_| \sim v_|$.
\end{definition}

\section{Additional Results}
\label{app:additional_results}

\subsection{Equations discovered by SINDy}
Below are the equations discovered by different variants of SINDy in \cref{sec:understanding}.
\begin{dmath}
\label{eq:sindy_1}
\dot{x}(t) = -3.06 x(t) t
\end{dmath}
\begin{dmath}
\label{eq:sindy_2}
\dot{x}(t) = 5.56-43.10 x(t) t
\end{dmath}
\begin{dmath}
\label{eq:sindy_5}
\dot{x}(t) = -27.56 t + 14.83 \cos(2 x(t)) -12.36 \sin(2 t) -13.79 \cos(3 t) + 8.78 \exp(x(t))
\end{dmath}
\begin{dmath}
\label{eq:sindy_10}
\dot{x}(t) = 39828.60-50679.07 t -1.74 x(t) t + 54549.25 \sin(t) -54591.64 \cos(t) + 1.58 \sin(2 x(t)) + 16141.41 \cos(2 t) + 1.63 \cos(3 x(t)) -1362.80 \sin(3 t) -1366.28 \cos(3 t)
\end{dmath}
\begin{dmath}
\label{eq:sindy_15}
\dot{x}(t) = 8347995.48+ 3372040.53 t + 3.75 x(t) t -3350455.09 t^2 -36.16 \cos(x(t)) -4904904.06 \sin(t) -8903838.66 \cos(t) + 12.19 \cos(2 x(t)) + 889658.07 \sin(2 t) + 560285.98 \cos(2 t) -0.29 \sin(3 x(t)) -2.81 \cos(3 x(t)) -82220.29 \sin(3 t) -4394.42 \cos(3 t) -5.23 \exp(x(t))
\end{dmath}

\subsection{Semantic representation after editing}
\cref{fig:pk_model_semantic_2} shows semantic representation of Semantic ODE after editing performed in \cref{sec:editing}.
\begin{figure}[h]
    \centering
    \includegraphics[trim={0cm 0cm 3.8cm 0cm},clip,width=1\linewidth,]{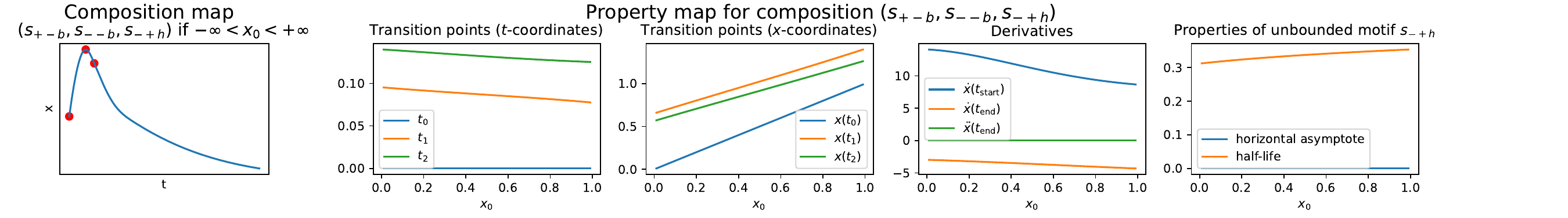}
    \caption{Property map of the Semantic ODE after imposing the horizontal asymptote to be 0.}
    \label{fig:pk_model_semantic_2}
\end{figure}

\subsection{Semantic representation of the logistic growth model}
\cref{fig:semantic_representation_logistic} demonstrates a semantic representation of the logistic growth model $\dot{x}(t) = x(t)(1-\frac{x(t)}{2.8})$.
\begin{figure}[h]
    \centering
    \includegraphics[trim={0cm 0cm 4cm 0cm},clip,width=1\linewidth,]{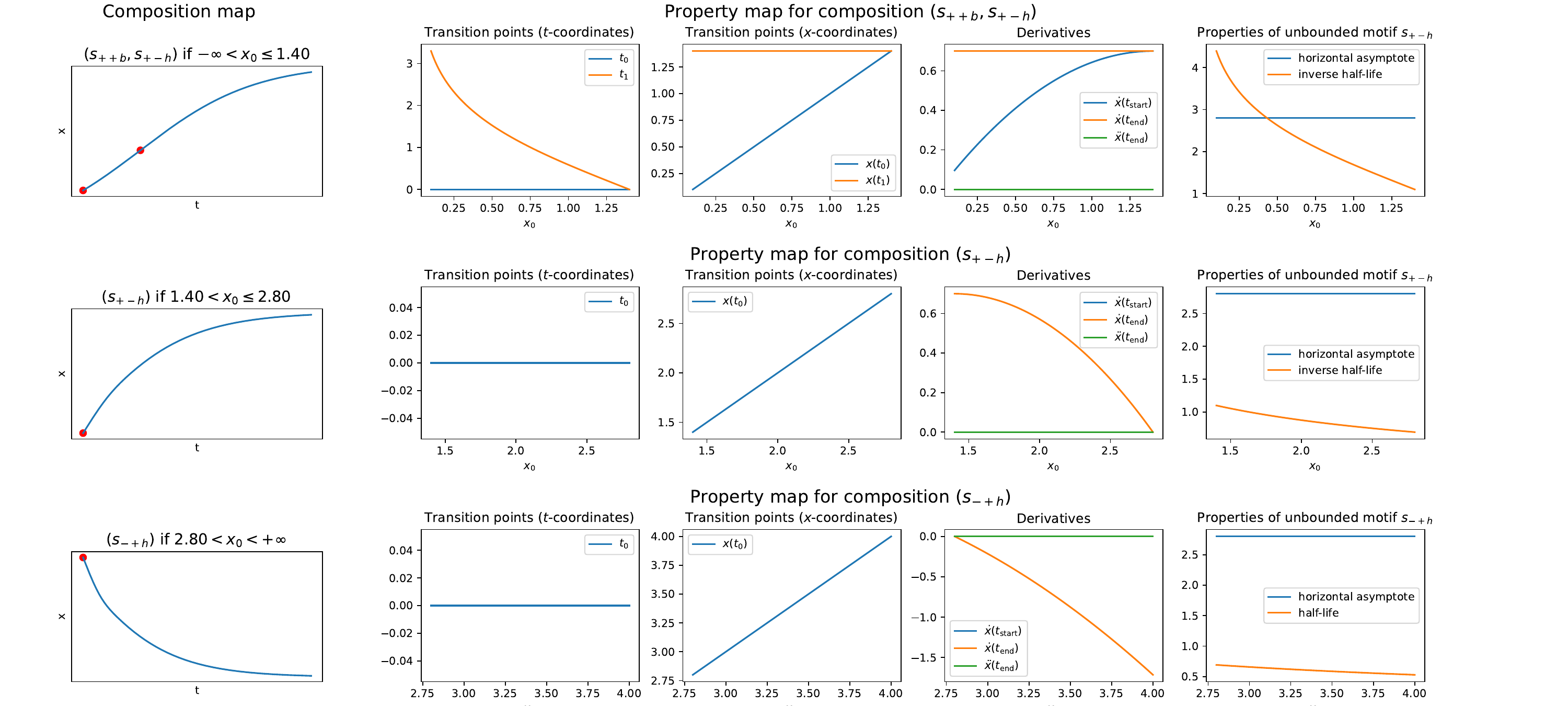}
    \caption{Semantic representation of the logistic growth model.}
    \label{fig:semantic_representation_logistic}
\end{figure}

\subsection{Inside of fitting a composition map}
\cref{fig:composition_scores} shows the logarithm of the loss for each sample and for each considered composition while fitting a composition map.
\begin{figure}[h]
    \centering
    \includegraphics[trim={0cm 0cm 2cm 0cm},clip,width=0.4\linewidth,]{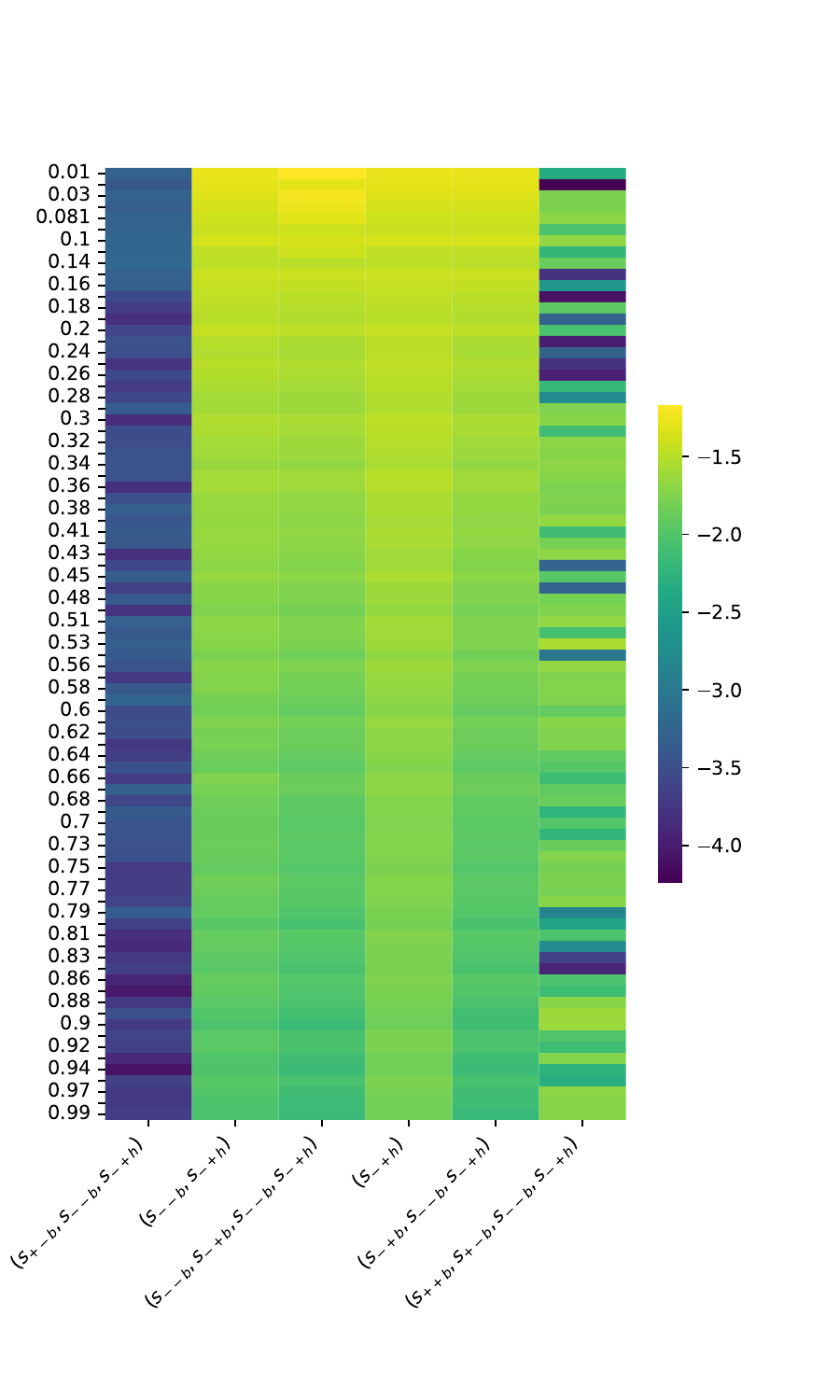}
    \caption{Logarithm of loss for each sample for each considered composition while fitting a composition map to the pharmacokinetic dataset in \cref{sec:understanding}.}
    \label{fig:composition_scores}
\end{figure}

\subsection{Extension to multiple dimension: proof of concept}
\label{app:sir_example}

\newreb{In this section, we show a proof of concept of how semantic modeling can be extended to multiple dimensions (as was described in \cref{app:roadmap}).}

\newreb{We implemented the property maps as described in \cref{app:roadmap} (each property described as a generalized additive model) and fitted data following an SIR epidemiological model for different initial conditions. To simplify the problem we specify the composition map (we only train property maps). We assume $S$ follows $(s_{--c},s_{-+h})$, $I$ follows $(s_{++c},s_{+-c},s_{--c},s_{-+h})$, and $R$ follows $(s_{++c},s_{+-h})$. The predicted trajectories and some of the property maps can be seen in \cref{fig:sir_example}. The average RMSE on the test dataset is $0.019$ for $S$ trajectory, $0.011$ for $I$ and $0.014$ for $R$. Note that the irreducible error on this dataset (caused by the added Gaussian noise) is $0.01$. The shown property maps let us draw the following insights about the model:
\begin{itemize}[leftmargin=5mm,nolistsep]
    \item The time when $I$ is at its maximum $t_{\text{max}}(I)$ is on average just below $0.2$. $I_0$ has a relatively large impact on $t_{\text{max}}(I)$ by increasing it by 0.1 for very low $I_0$ or decreasing it by 0.05 for very high $I_0$. The larger the $I_0$ the faster the maximum is achieved.
    \item $S_0$ also has a negative impact on $t_{\text{max}}(I)$ but it is much smaller ($\pm 0.02$).
    \item The maximum of $I$ (denoted $I_{\text{max}}$) increases linearly with both $S_0$ and $I_0$. This time $S_0$ has slightly bigger impact ($\pm 0.1$) compared to $I_0$ ($\pm 0.04$) 
    \item In both $t_{\text{max}}(I)$ and $I_{\text{max}}$, the impact of $R_0$ is insignificant.
    \item The horizontal asymptote of $R$ increases linearly with all three initial conditions. In particular, the shape function associated with $R_0$ has unit slope as expected. 
\end{itemize}}

\begin{figure}
    \centering
     \includegraphics[trim={0 0cm 11cm 0},clip,width=\linewidth]{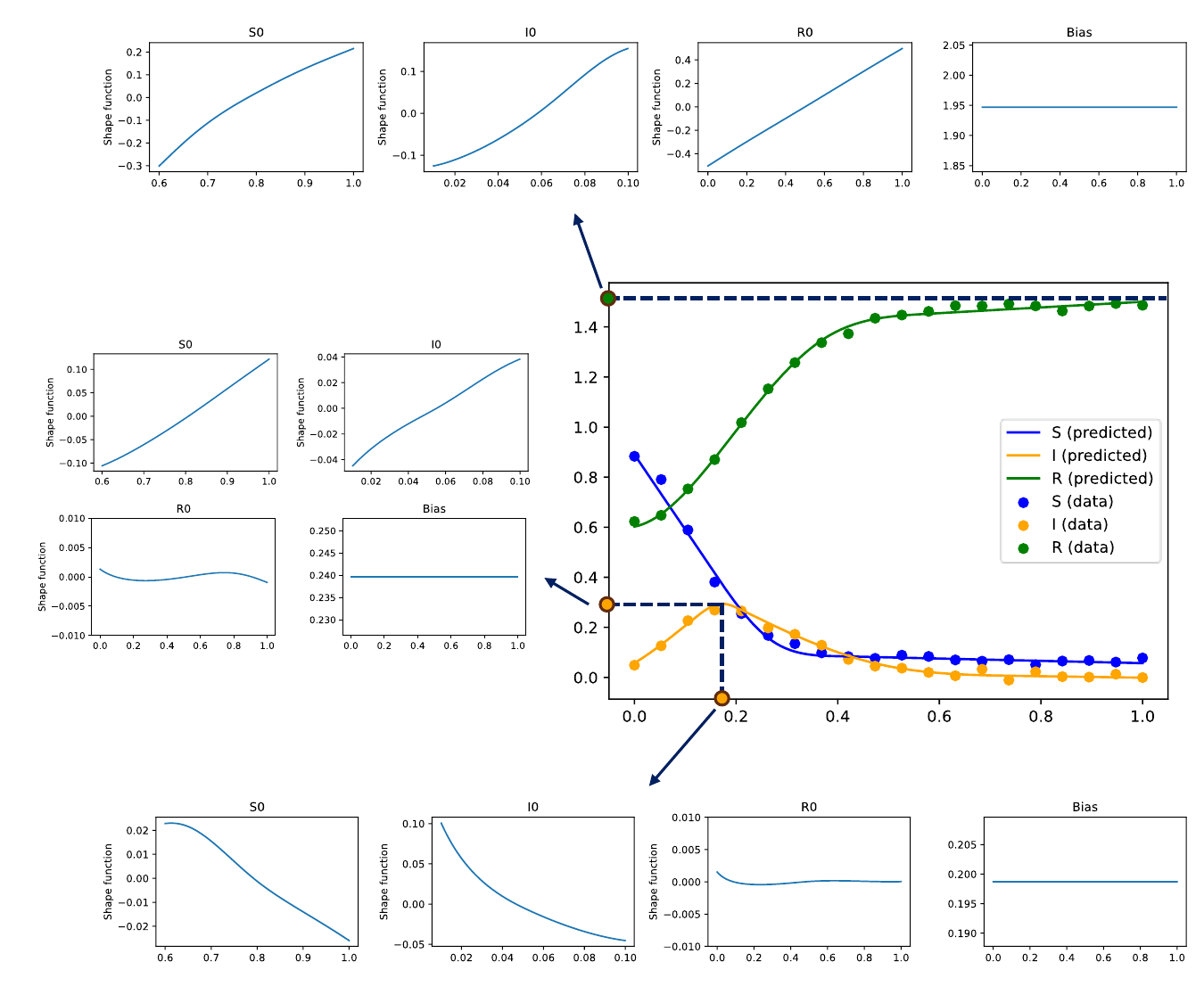}
    \caption{\newreb{Semantic ODE model extended to multiple trajectories and fitted to data governed by the SIR epidemiological model. We show three generalized additive models describing the horizontal asymptote of $R$ (top), the maximum of $I$ (middle), and the time when $I$ is at its maximum (bottom).}}
    \label{fig:sir_example}
\end{figure}

\newreb{Interesting advantage of our approach is that even though the system is described by three variables, we do not need to observe all of them to fit the trajectory (similarly to the pharmacokinetic example in the paper). ODE discovery methods assume that all variables are observed which constrains their applicability in many settings.}

\subsection{Duffing oscillator}
\label{app:duffing}
\newreb{Chaotic systems usually have some kind of oscillatory behavior (i.e., it cannot be
described by a finite composition). As discussed in \cref{app:limitations}, it means that chaotic systems are currently beyond the capabilities of Semantic ODEs as they would not be able to correctly predict beyond the seen time domain. However, we could use it for a prediction on a bounded time domain. We have compared different models on the Duffing oscillator. The results can be seen in \cref{tab:duffing}.}

\begin{table}[h]
\centering
\caption{\newreb{Comparison of Average RMSE obtained by different models on the Duffing oscillator datasets (with two noise settings). Average performance over 5 random seeds and data splits is shown with standard deviation in the brackets.}}
\label{tab:duffing}
\begin{tabular}{lllllllllll}
\toprule
 & \multicolumn{2}{c}{Duffing oscillator} \\
Method & low noise & high noise \\
\midrule
SINDy-5 & $0.278_{(.032)}$ & $0.389_{(.059)}$ \\
WSINDy-5 & $0.262_{(.033)}$ & $0.361_{(.072)}$ \\
PySR-20 & $0.312_{(.049)}$ & $0.396_{(.020)}$ \\
SINDy & $0.284_{(.026)}$ & $0.386_{(.022)}$ \\
WSINDy & $0.263_{(.027)}$ & $0.339_{(.037)}$ \\
Neural ODE & $0.212_{(.018)}$ & $0.291_{(.021)}$ \\
Neural Laplace & $0.176_{(.032)}$ & $0.299_{(.017)}$ \\
DeepONet & $0.429_{(.084)}$ & $0.528_{(.066)}$ \\
\rowcolor{gray!20} Semantic ODE & $\bm{0.096}_{(.023)}$ & $\bm{0.236}_{(.001)}$ \\
\bottomrule
\end{tabular}
\end{table}

\subsection{Real datasets}

\newreb{We compare the performance of Semantic ODE against other baselines on two real datasets. The tumor growth dataset is based on the dataset collected by \cite{Wilkerson.EstimationTumourRegression.2017} based on eight clinical trials. We follow the preprocessing steps by \cite{Qian.DCODEDiscoveringClosedform.2022}. The drug concentration dataset is based on data collected by \citep{Woillard.PopulationPharmacokineticModel.2011}. The results are shown in \cref{tab:real_data}}.
\begin{table}[h]
\centering
\caption{\newreb{Comparison of Average RMSE obtained by different models on two real datasets. Average performance over 5 random seeds and data splits is shown with standard deviation in the brackets. "Semantic ODE*" is a variant of Semantic ODE where we incorporated a semantic inductive bias about the shape of the trajectory. We specified the composition to always be $(s_{-+c},s_{++u})$ for the tumor growth dataset and $(s_{+-c},s_{--c},s_{-+h})$ for the drug concentration dataset. Note, the implementation of WSINDy we used cannot work with trajectories as sparse as in the drug concentration dataset.}}
\label{tab:real_data}
\begin{tabular}{lll}
\toprule
 & Tumor growth (real) & Drug concentration (real) \\
\midrule
SINDy-5 & $0.243_{(.019)}$ & $0.286_{(.021)}$ \\
WSINDy-5 & $0.237_{(.015)}$ & NaN \\
PySR-20 & $0.536_{(.346)}$ & $0.257_{(.022)}$ \\
SINDy & $0.249_{(.029)}$ & $0.286_{(.014)}$ \\
WSINDy & $0.236_{(.016)}$ & NaN \\
Neural ODE & $0.228_{(.018)}$ & $0.263_{(.032)}$ \\
Neural Laplace & $0.243_{(.029)}$ & $0.302_{(.022)}$ \\
DeepONet & $0.242_{(.016)}$ & $0.265_{(.020)}$ \\
Semantic ODE & $0.234_{(.019)}$ & $0.264_{(.021)}$ \\
Semantic ODE* & $0.229_{(.019)}$ & $0.243_{(.015)}$ \\
\bottomrule
\end{tabular}
\end{table}

\subsection{Bifurcations}

\newreb{We believe that our framework is uniquely positioned to perform quite well on systems exhibiting bifurcations (when a small change to the parameter value causes a sudden qualitative change in the system's behavior). In our framework, bifurcation occurs when the composition map predicts a different composition. As discussed in \cref{app:roadmap}, in the future, the semantic predictor may take as input not only the initial conditions but also other auxiliary parameters. We can then represent the composition map as a decision tree that divides the input space into different compositions. This decision tree then informs us where bifurcations occur.}

\newreb{We hope the following proof of concept based on the current implementation demonstrates that it is a viable approach. Instead of predicting a trajectory from its initial condition, we fix the initial condition to be always the same and predict a trajectory based on the parameter $r$ that we observe in our dataset. We generate the trajectories given the following differential equation
\begin{equation}
    \dot{x} = rx - x^2
\end{equation} 
The initial condition $x(0)=1$ and $r$ sampled uniformly from $(-1,2)$. We choose the set of compositions to be $(s_{+-h}),(s_{-+h})$ and record the position of the bifurcation point found by our algorithm (as opposed to the ground truth). The mean absolute error for different noise settings can be seen in \cref{fig:bifurcation}. Note that the range of values of the trajectory is $(0,2)$, and even in high noise settings, the location of the bifurcation point can be identified.}

\begin{figure}[h]
    \centering
    \includegraphics[trim={0cm 0cm 0cm 0},clip,width=0.9\linewidth,]{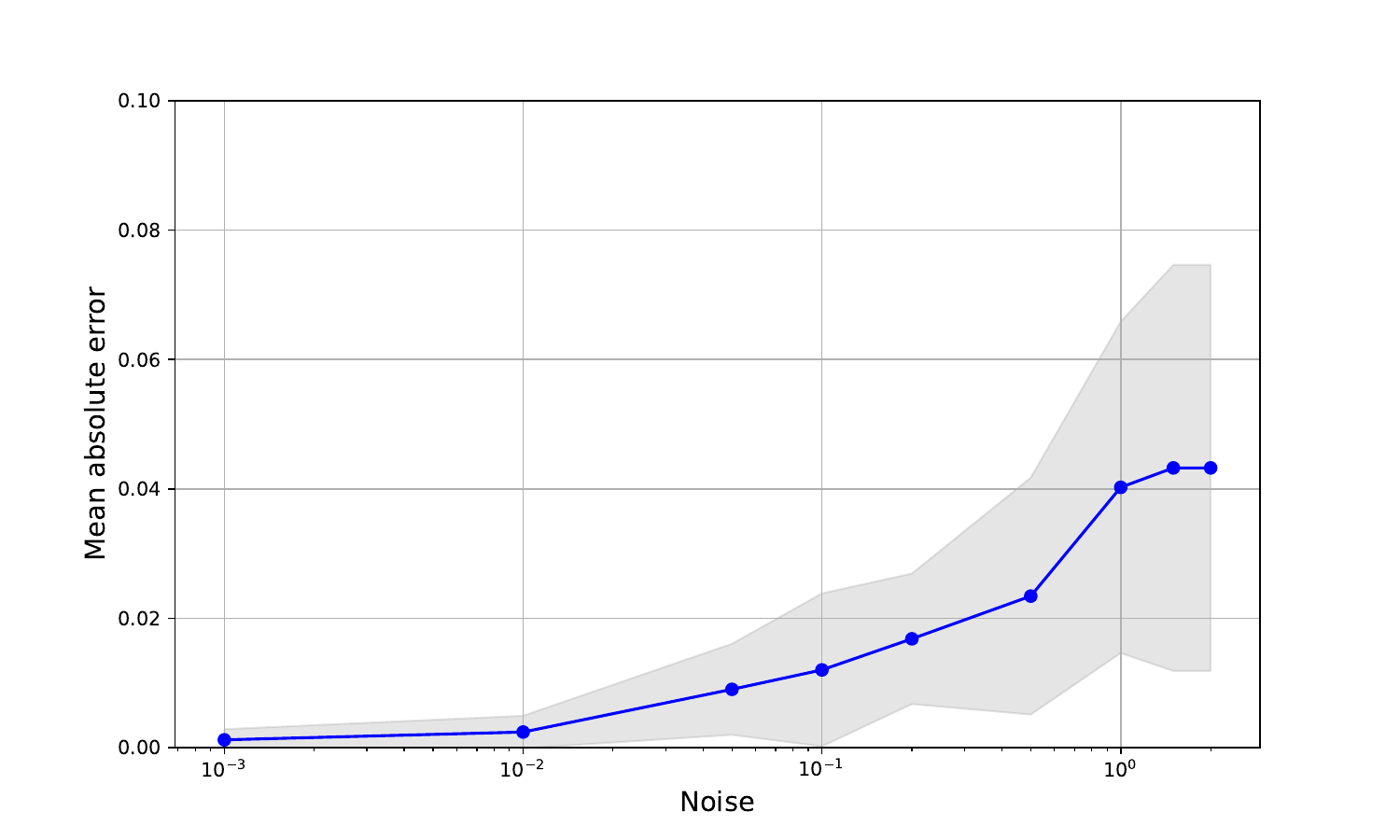}
    \caption{\newreb{Mean absolute error on the predicted bifurcation point for different noise settings.}}
    \label{fig:bifurcation}
\end{figure}

\subsection{Generalization}
\newreb{We show in \cref{sec:editing} how our model can extrapolate well to unseen time domains. In this section, we show how we can make Semantic ODE generalize to previously unseen data points (initial conditions) by extrapolating the property map. We observe that each property function in \cref{fig:pk_model_semantic_2} looks approximately like a linear function. Thus, we fit a linear function to each of these functions and then evaluate our model on initial conditions from range $(1.0,1.5)$. Note that our training set only contained initial conditions from $(0,1)$. The semantic representation of the resulting model can be seen in \cref{fig:pk_general}. We compare the performance of this model to ODEs from \cref{tab:pk_results_1}. The results can be seen in \cref{tab:pk_general_results}. We can see that our model has suffered only a small drop in performance even though it has never seen a single sample from that distribution. It also performs much better than any other ODE tested.}

\begin{figure}[h]
    \centering
    \includegraphics[trim={0cm 0cm 0cm 0},clip,width=\linewidth,]{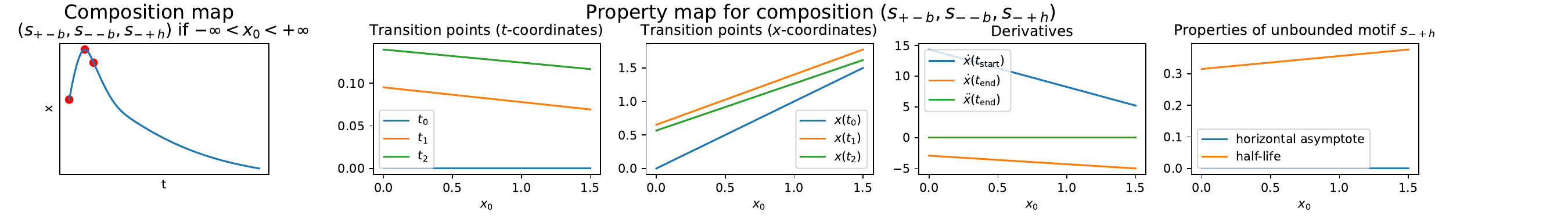}
    \caption{\newreb{Semantic representation of the semantic ODE made to generalize to $x_0\in(0,1.5)$.}}
    \label{fig:pk_general}
\end{figure}

\begin{table}[h]
\centering
\caption{\newreb{Results of fitting Semantic ODE and SINDy to the pharmacokinetic dataset.}}
\vspace{-2mm}
\label{tab:pk_general_results}
\resizebox{\textwidth}{!}{%
\begin{tabular}{@{}p{2.1cm}p{3.4cm}p{4.5cm}p{3.5cm}p{2.3cm}p{1.6cm}p{1.9cm}@{}}
\toprule
Model & Syntactic biases & Semantic biases & Syntactic \newline representation & Semantic \newline
representation & $x_0 \in (0,1)$ & $x_0\in(1,1.5)$ \\
\midrule
SINDy & $\dot{x}=\sum_{i=1}^n \alpha_i g_i, n \leq 1$ & NA & $\dot{x}(t) = -3.06 x(t) t$ & NA & $0.222_{(0.041)}$ & $0.240_{(0.035)}$ \\
SINDy & $\dot{x}=\sum_{i=1}^n \alpha_i g_i, n \leq 2$ & NA & $\dot{x}(t) = 5.56-43.10 x(t) t$ & NA & $0.112_{(0.027)}$ & $0.131_{(0.025)}$ \\
SINDy & $\dot{x}=\sum_{i=1}^n \alpha_i g_i, n \leq 5$ & NA & \cref{eq:sindy_5} & NA & $0.101_{(0.023)}$ & $7.764_{(4.938)}$ \\
SINDy & $\dot{x}=\sum_{i=1}^n \alpha_i g_i, n \leq 10$ & NA & \cref{eq:sindy_10} & NA & $0.029_{(0.005)}$ & $0.105_{(0.056)}$ \\
SINDy & $\dot{x}=\sum_{i=1}^n \alpha_i g_i, n \leq 15$ & NA & \cref{eq:sindy_15} & NA & $0.020_{(0.004)}$ & $0.203_{(0.430)}$ \\
Semantic ODE & NA & $c_x:(s_{+-b},s_{--b},s_{-+h})$, $h=0$ & NA & \cref{fig:pk_general} & $\bm{0.018}_{(.003)}$ & $\bm{0.023}_{(.005)}$  \\
\bottomrule
\end{tabular}}
\end{table}

\section{Training of the semantic predictor}
\label{app:training}
Training of Semantic ODE requires fitting its semantic predictor (trajectory predictor is fixed). This is done in two steps. First, we train the composition map $F_{\text{com}}$ and then the property map $F_{\text{prop}}$. It is possible to adjust the composition map before the property map is fitted or even provide your own composition map without fitting it. This constitutes one of the ways prior knowledge can be incorporated into the model. 
Then the dataset is divided into separate subsets, each for a different composition---according to the composition map. Then a separate property sub-map is trained on each of the subsets. A simple block diagram of training a semantic predictor is shown in \cref{fig:block_diagram}.

\subsection{Composition map}
\label{app:trainig_composition}
To fit a composition map we start with a \textit{composition library} $\mathcal{C}'\subset\mathcal{C}$ which a set of compositions we want to consider. One can use a default set of compositions up to a certain length or filter out impossible ones to steer and accelerate the search. Then for every sample in our dataset, we measure how well each of the compositions fits the trajectory by fitting the properties that minimize prediction error. This gives us a matrix where each row is a sample and each column is a composition. An example of such matrix can be seen in \cref{fig:composition_scores}. We then use a dynamic programming algorithm to find the best split of $x_0$ into up to $I$ intervals, with different compositions on neighboring intervals, that minimize the overall prediction error for the whole dataset. \newreb{We also make sure that each interval is not shorter than a prespecified threshold and contains a minimum number of samples. In our implementation, we choose the intervals to be at least 10\% of the length of the entire domain and contain at least two samples.} The number $I$ is chosen by the user (we use 3 in all our experiments). This procedure is described in \cref{alg:composition_map}

\begin{algorithm}[t]
\caption{Algorithm for learning the Composition Map $F_{\text{com}}$.}
\label{alg:composition_map}
\begin{algorithmic}[1]
\Require Dataset $\{(x_0^{(d)}, \bm{t}^{(d)}, \bm{y}^{(d)})\}_{d=1}^{D}$, compositions library $\mathcal{C}'$, maximum number of branches $I$, trajectory predictor $F^0_{\text{traj}}:\mathcal{C}\times\mathcal{P} \to  C^0$
\Ensure Composition map $F_{\text{com}}: \mathbb{R} \to \mathcal{C}$
\Statex

\Procedure{ComputeLoss}{$d$, $c$, $p$}
    \State $\text{loss} \gets \frac{1}{N_d}\sum_{n=1}^{N_d} \left(F^0_{\text{traj}}(c, p)(t^{(d)}_n) - y^{(d)}_n\right)^2$
    \State \Return $\text{loss}$
\EndProcedure

\State Initialize loss table $\text{LossTable}(d, c)$, DP table $L(d, c, b)$ and backtracking table $B(d, c, b)$

\For{$d = 1$ to $D$}
    \ForAll{$c \in \mathcal{C}'$}
        \State $\text{LossTable}(d, c) \gets \min_{p \in \mathcal{P}_c} \Call{ComputeLoss}{d, c, p}$
    \EndFor
\EndFor

\State Sort samples so that $x_0^{(1)} \leq x_0^{(2)} \leq \dots \leq x_0^{(D)}$

\For{$d = D$ down to $1$}
    \ForAll{$c \in \mathcal{C}'$}
        \For{$i = 0$ to $I - 1$}
            \If{$d == D$}
                \State $L(d, c, i) \gets \text{LossTable}(d, c)$
            \Else
                \If{$i == 0$}
                    \State $L(d, c, 0) \gets \text{LossTable}(d, c) + L(d + 1, c, 0)$
                    \State $B(d, c, 0) \gets c$
                \Else
                    \State $L_{\text{stay}} \gets \text{LossTable}(d, c) + L(d + 1, c, i)$
                    \State $L_{\text{switch}} \gets \text{LossTable}(d, c) + \displaystyle\min_{c' \neq c} L(d + 1, c', i - 1)$
                    \If{$L_{\text{stay}} \leq L_{\text{switch}}$}
                        \State $L(d, c, i) \gets L_{\text{stay}}$
                        \State $B(d, c, i) \gets c$
                    \Else
                        \State $L(d, c, i) \gets L_{\text{switch}}$
                        \State $B(d, c, i) \gets c'$ corresponding to $\min_{c' \neq c} L(d + 1, c', i - 1)$
                    \EndIf
                \EndIf
            \EndIf
        \EndFor
    \EndFor
\EndFor

\State $\displaystyle (c^*, i^*) = {\arg\min}_{c \in \mathcal{C}',\, 0 \leq i \leq I - 1} L(1, c, i)$
\State Initialize composition sequence $\{ c_d^* \}_{d=1}^D$
\State $c_1^* \gets c^*$; $i \gets i^*$

\For{$d = 1$ to $D - 1$}
    \State $c_{d + 1}^* \gets B(d, c_d^*, i)$
    \If{$c_{d + 1}^* \neq c_d^*$}
        \State $i \gets i - 1$
    \EndIf
\EndFor

\State Define composition map $F_{\text{com}}(x)$ using switching points at $x_0^{(d)}$ where $c_d^* \neq c_{d + 1}^*$

\end{algorithmic}
\end{algorithm}

\subsection{Property map}
\label{app:training_property}
Property map consists of a few property sub-maps each trained on a different subset of data. Property sub-map predicts the properties, i.e., the coordinates of the transition points, necessary derivatives, and properties of the unbounded motif. Every single property is predicted by a different univariate function. To get these univariate functions we first choose a set of basis functions $\mathcal{B}$. By default, we choose B-Spline basis functions, identity and a constant. Then we can parameterize a univariate function using just $|\mathcal{B}|$ parameters and efficiently evaluate it using a single matrix-vector multiplication. This gives us, so-called, raw properties. In practice, we need to pass some of these functions through different transformations to ensure that the predicted properties make sense. For instance, transition points are in correct relation to one another or the derivatives have the correct sign. This procedure is summarized in \cref{alg:property_map}.

\paragraph{$t$-coordinates} Instead of predicting the $t$-coordinates directly, we predict the intervals between them and the $t$-coordinate of the last transition point. We use softmax to transform raw properties into positive values that add up to $1$. We then multiply it by the $t$-coordinate of the last transition point (that was obtained by passing a raw property through a sigmoid function scaled to cover the interval of interest). We can then use a cumulative sum over the intervals to get the desired $t$-coordinates.

\paragraph{$x$-coordinates} Instead of predicting the $x$-coordinates directly, we predict the absolute value of the difference between $x$-coordinates of the consecutive transition points. We do that by passing the raw property through a softplus function. Then based on the monotonicity of a given motif, we either add or subtract this number from the previous $x$-coordinate to obtain all $x$-coordinates respecting the composition.

\paragraph{Derivative at the first transition point} We pass the raw property through a sigmoid function and then we scale and translate it to obtain a value in a specific range as described in \cref{tab:derivative_ranges}.

\paragraph{Derivatives at the last transition point} If there is at least one bounded motif in the composition, then the last transition point needs to be either an inflection point or a local extremum. As such, either the first or the second derivative vanishes and does not need to be trained. The trajectory during training is predicted through $F_{\text{traj}}^0$ that can only accept one derivative constraint at the last transition point. That is why the other derivative (the one that does not vanish) is implicitly determined by this trajectory predictor instead of being trained explicitly. We first find the trajectory and then look at the non-vanishing derivative and use it as the value predicted by the property map.

\paragraph{Properties of the unbounded motif} All raw properties for the unbounded motifs are passed through the softmax to make them positive. We then adjust them on a case-by-case basis to make sure that the motif properties make sense. For instance, instead of predicting the value of the horizontal asymptote in $s_{-+h}$ directly, we predict the distance between the asymptote and $x(t_{\text{end}})$. As the predicted distance is positive, after subtracting it from $x(t_{\text{end}})$, we get a valid value for the horizontal asymptote.

After we get all the properties, we pass it to $F^0_{\text{traj}}$ to predict the trajectory. We calculate mean squared error between the predicted trajectories and the observed ones. We also add two additional penalty terms. One discourages too much difference between derivatives at the transition points and the other discourages too large derivatives at the end. We calculate the total loss and update the parameters. In our implementation, we use the L-BFGS algorithm \citep{liu1989limited}.

\begin{algorithm}[t]
\caption{Algorithm for Learning the Property Map $F_{\text{prop}}$.}
\label{alg:property_map}
\begin{algorithmic}[1]
\Require Dataset $\{ x_0^{(d)}, \bm{t}^{(d)}, \bm{y}^{(d)} \}_{d=1}^D$, trajectory predictor $F^0_{\text{traj}}:\mathcal{C}\times\mathcal{P} \to  C^0$, composition map $F_{\text{com}}:\mathbb{R}\to\mathcal{C}$, set of basis functions $\mathcal{B}$
\Ensure Property map $F_{\text{prop}}:\mathbb{R} \to \mathcal{P}$

\For{$d = 1$ to $D$}
    \State Evaluate basis functions: $\bm{B}^{(d)} = [b_1(x_0^{(d)}), b_2(x_0^{(d)}), \dots, b_{|\mathcal{B}|}(x_0^{(d)})]$, where $b_i \in \mathcal{B}$
\EndFor
\State Form matrix $\bm{B} \in \mathbb{R}^{D \times |\mathcal{B}|}$ with rows $\bm{B}^{(d)}$

\For{$d = 1$ to $D$}
    \State Compute composition: $c^{(d)} = F_{\text{com}}(x_0^{(d)})$
\EndFor
\State Partition dataset into subsets $\{\mathcal{D}_c\}$ where $\mathcal{D}_c = \{ d \mid c^{(d)} = c \}$
\For{each subset $\mathcal{D}_c$}
    \State Initialize parameter matrix $\bm{W}^{(c)} \in \mathbb{R}^{|\mathcal{B}| \times P}$, where $P$ is the number of raw properties
     \State Extract corresponding basis evaluations $\bm{B}^c = \bm{B}[\mathcal{D}_c,:]$
    \Repeat
            \State Compute raw properties: $\bm{\tilde{P}}_c = \bm{B}^c \bm{W}^{(c)}$
            \State Use $\bm{\tilde{P}}_c$ to predict the properties $\{F_{\text{prop}}^{(c)}(x_0^{(d)})\}_{d\in\mathcal{D}}$ i.e.,
            \State \hspace{1em} Coordinates of the transition points
            \State \hspace{1em} Derivatives
            \State \hspace{1em} Properties of unbounded motif
            \State $L \gets \frac{1}{|\mathcal{D}_c|} \sum_{d\in\mathcal{D}_c} \frac{1}{N_d}\sum_{n=1}^{N_d} \left(F^0_{\text{traj}}(c, F_{\text{prop}}^{(c)}(x_0^{(d)}))(t^{(d)}_n) - y^{(d)}_n\right)^2$
            \State Update parameters $\bm{W}^{(c)}$ to minimize $L$ (e.g., L-BFGS)
    \Until{Convergence of optimization}
\EndFor
\State Define $F_{\text{prop}}$ as $F_{\text{prop}}(x_0) = F_{\text{prop}}^{F_{\text{com}}(x_0)}(x_0)$
\end{algorithmic}
\end{algorithm}

\section{Trajectory predictor}
\label{app:trajectory_predictor}

\subsection{Bounded part of the trajectory}
\label{app:bounded_part}

\newreb{We decide to define $\mathcal{X}_|$ as a set of cubic splines. They are piecewise functions where each piece is defined as a cubic polynomial. The places where two cubics are joined are called \textit{knots}. Cubic splines require that the first and second derivatives at the knots be the same for neighboring cubics so that the cubic spline is guaranteed to be twice continuously differentiable. Cubic splines are promising because they are flexible, and for a fixed set of knots, the equations for their values and derivatives are linear in their parameters. Thus by just solving a set of linear equations, it is straightforward to find a function $x_|$ that \textit{seemingly matches} $(c_|,p_|)$ (denoted as $x_| \sim (c_|,p_|)$ and formally defined in \cref{def:seemingly_matches} in \cref{app:definitions}), i.e., passes through the transition points in $p_|$, and has the correct first derivative values at local extrema and boundary points ($t_0$, $t_{\text{end}}$) as well as correct second derivatives at inflection points and the endpoint ($t_{\text{end}})$. The challenge arises because $x_| \sim (c_|,p_|)$ may not imply $x_| \equiv (c_|,p_|)$. This is illustrated in \cref{fig:property_similar} in \cref{app:bounded_part}. However, it is possible to make the implication hold by imposing additional conditions. We prove this in \cref{thm:implication}.}

\newreb{
\begin{theorem}
\label{thm:implication}
Let $u_|,v_|$ be bounded trajectory parts on $(t_0,t_{\text{end}})$ and assume $v_| \equiv (c_|,p_|)$, where $(c,p)$ is a semantic representation. If both of the following hold for every pair of consecutive transition points ($a,b$) of $v_|$ then $u_| \sim (c_|,p_|) \implies u_| \equiv (c_|,p_|)$.
\begin{itemize}[leftmargin=5mm, nolistsep]
    \item $\forall t\in(a,b) \ \sign(\ddot{u}_|(t))=\sign(\ddot{v}_|(t))$, and
    \item if neither of $\{a,b\}$ is a local extremum, $\sign(\dot{u}_|(t)) = \sign(\dot{v}_|(t))$ for $t\in\{a,b\}$ 
\end{itemize}
\end{theorem}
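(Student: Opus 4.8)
The plan is to show that, under the two sign hypotheses, a seemingly-matching $u_|$ in fact realizes exactly the composition $c_|$ with the prescribed transition points and the prescribed properties, hence conforms. Recall that $u_|\sim(c_|,p_|)$ only pins $u_|$ to a reference $v_|\equiv(c_|,p_|)$ \emph{at the transition points} (equal value everywhere, equal first derivative at $t_0$ and at local extrema, equal first and second derivative at $t_{\text{end}}$), whereas $u_|\equiv(c_|,p_|)$ additionally demands that $u_|$ have the motif sequence $c_|$ with those transition points. The gap is entirely about the signs of $\dot u_|$ and $\ddot u_|$ \emph{between} transition points, which is precisely what the two hypotheses constrain. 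The crux is the following lemma: \emph{at every transition point $t$ of $v_|$ one has $\sign(\dot u_|(t))=\sign(\dot v_|(t))$.}

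\textbf{Step 1: the sign lemma.} I would go through the transition points of $v_|$ from left to right and split into cases. If $t\in\{t_0,t_{\text{end}}\}$ or $t$ is a local extremum, then $\dot u_|(t)=\dot v_|(t)$ holds directly by the definition of $u_|\sim(c_|,p_|)$ (Definition \ref{def:seemingly_matches}); in particular the signs agree (both are $0$ at a local extremum). If $t$ is an inflection point, let $a$ be the preceding transition point. When $a$ is not a local extremum (including $a=t_0$), the second hypothesis applied to the consecutive pair $(a,t)$ gives $\sign(\dot u_|(t))=\sign(\dot v_|(t))$ immediately. When $a$ is a local extremum, I would propagate instead: $\dot u_|(a)=\dot v_|(a)=0$ is already known, and on the open interval $(a,t)$ the first hypothesis forces $\ddot u_|$ to have the same constant sign as $\ddot v_|$, so $\dot u_|$ is strictly monotone on $[a,t]$ in the same sense as $\dot v_|$; starting from the pinned value $0$ this fixes $\sign(\dot u_|(t))$, and since $\dot v_|$ is strictly one-signed on the open motif interval $(a,t)$ (Definition \ref{def:motif_used}) and $v_|\in C^2$, a short consistency check shows $\dot v_|(t)$ has that same sign.

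\textbf{Step 2: motifs agree on each interval.} With the lemma in hand, fix a pair of consecutive transition points $(a,b)$ of $v_|$ with motif $s_{\pm\pm b}$. Concavity is immediate: the first hypothesis gives $\sign(\ddot u_|)=\sign(\ddot v_|)$ throughout $(a,b)$. For monotonicity, $\ddot u_|$ has a fixed nonzero sign on $(a,b)$, so $\dot u_|$ is strictly monotone on $[a,b]$; by the lemma its values at $a$ and $b$ have the same sign as those of $\dot v_|$, i.e. are $\ge 0$ (resp. $\le 0$) according to the motif's sense, and a strictly monotone function that is $\ge 0$ (resp. $\le 0$) at both endpoints of an interval is strictly positive (resp. negative) in the interior. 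Hence $\sign(\dot u_|)$ is constant on $(a,b)$ and equals $\sign(\dot v_|)$, so $u_|$ carries exactly the motif of $v_|$ there, and no additional transition point of $u_|$ occurs inside $(a,b)$.

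\textbf{Step 3: conclude.} Thus $u_|$ is described by the same motif sequence with transition points at the same $t$-coordinates; and since $c_|$ is a \emph{composition}, consecutive motifs differ, so this description cannot be shortened (merging two adjacent intervals would require $u_|$ to be a single $s_{\pm\pm b}$ across a genuine change of $(\sign\dot u_|,\sign\ddot u_|)$). Therefore $c_{u_|}=c_|$ with the prescribed transition points. The remaining properties match because $u_|\sim(c_|,p_|)$ already forces $u_|$ and $v_|$ to agree in value at every (now common) transition point, in first derivative at $t_0$ and at every local extremum, and in first and second derivative at $t_{\text{end}}$ --- exactly the entries of $p_|$. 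Hence $(c_{u_|},p_{u_|})=(c_|,p_|)$, i.e. $u_|\equiv(c_|,p_|)$. The main obstacle is Step 1 at an inflection point both of whose neighbours are local extrema: the second hypothesis is silent there, so the sign must be carried across using the monotonicity of $\dot u_|$ supplied by the first hypothesis, while ruling out degenerate zero-derivative configurations of $v_|$.
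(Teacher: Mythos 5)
Your proof is correct, but it is organized quite differently from the paper's argument, and the difference is worth noting. The paper proves the theorem \emph{by contradiction} through a dedicated lemma (\cref{lem:lemma}): it supposes $u_|$ has a composition other than $c_|$, and then exhibits a pair of consecutive transition points where one of the two sign hypotheses must fail (splitting on whether $u_|$ between $a$ and $b$ is a single wrong motif, or whether $u_|$ acquires extra internal transition points, and then on whether those extras include an inflection point or only a local extremum). Your argument is \emph{direct and constructive}: you first isolate a sign lemma ensuring $\sign(\dot u_|)=\sign(\dot v_|)$ at every transition point --- handled trivially at $t_0$, $t_{\text{end}}$ and local extrema by Definition~\ref{def:seemingly_matches}, directly by the second hypothesis when the neighbouring transition point is not a local extremum, and by propagating the pinned zero derivative through the strictly monotone $\dot u_|$ (forced by the first hypothesis) when it is --- and then show interval-by-interval that $u_|$ carries exactly $v_|$'s motif with no extra transitions, using the observation that a strictly monotone function which is nonnegative (resp.\ nonpositive) at both endpoints is strictly positive (resp.\ negative) in the interior. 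Both routes rest on the same underlying facts, but your decomposition avoids the case analysis over the ways $u_|$ could ``go wrong'' and makes the role of the two hypotheses more transparent: hypothesis one supplies monotonicity of $\dot u_|$ on each motif interval, hypothesis two (together with propagation) fixes the boundary signs. One small remark on your closing caveat: the ``degenerate zero-derivative configuration'' you worry about cannot actually occur here, since in the motif framework a transition point between two bounded motifs is forced to be either a local extremum or an inflection point but not both (the continuity of $\dot v_|$ together with its strict one-sidedness on each adjacent open interval rules out both derivatives vanishing simultaneously), so your Step~1 is already complete.
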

\begin{proof}
    See \cref{app:proof_implication}.
\end{proof}}

\newreb{We come up with two different ways of imposing these conditions that lead us to develop two trajectory predictors: $F_{\text{traj}}^0:\mathcal{C}\times\mathcal{P}\to  C^0$ and $F_{\text{traj}}^2:\mathcal{C}\times\mathcal{P}\to  C^2$. We use $F_{\text{traj}}^0$ during training of $F_{\text{sem}}$ as it is fast and differentiable, but the found trajectory may not be in $ C^2$ (only continuous). At inference, we use $F_{\text{traj}}^2$ that is slower and not differentiable but ensures that the trajectory is in $ C^2$.}

\subsubsection{Proof of Theorem 1}
\label{app:proof_implication}
We provide proof of \cref{thm:implication} below. But before we prove it, we need the following lemma.

\begin{lemma}
\label{lem:lemma}
Let $u_|,v_|$ be bounded trajectory parts on $(t_0,t_{\text{end}})$ and assume $v_| \equiv (c_|,p_|)$, where $(c,p)$ is a semantic representation, $u_| \sim (c_|,p_|)$, and the composition of $u_|$ is different from $c_|$. Then there exists a pair of consecutive transition points $(a,b)$ of $v_|$ such that
\begin{itemize}[leftmargin=5mm, nolistsep]
    \item $\exists t\in(a,b)$ such that $\sign(\ddot{u}_|(t)) \neq \sign(\ddot{v}_|(t))$ or,
    \item if neither of $\{a,b\}$ is a local extremum, $\exists t \in \{a,b\}$ such that $\sign(\dot{u}_|(t)) \neq \sign(\dot{v}_|(t))$
\end{itemize}
\end{lemma}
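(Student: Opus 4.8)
I would prove the lemma by contradiction, which amounts to establishing the contrapositive that underlies \cref{thm:implication}. Assume $v_|\equiv(c_|,p_|)$ and $u_|\sim(c_|,p_|)$, and suppose \emph{neither} bulleted alternative holds for \emph{any} pair of consecutive transition points. Explicitly, for every pair $(a,b)$ of consecutive transition points of $v_|$ we then have (i) $\sign(\ddot u_|(t))=\sign(\ddot v_|(t))$ for all $t\in(a,b)$, and (ii) whenever neither $a$ nor $b$ is a local extremum, $\sign(\dot u_|(t))=\sign(\dot v_|(t))$ for $t\in\{a,b\}$. The goal is to show that (i)--(ii) force $u_|$ to carry, on each interval of the subdivision of $(t_0,t_{\text{end}})$ by the transition points of $v_|$, exactly the motif that $v_|$ carries there — so that $u_|$ has composition $c_|$, contradicting the hypothesis that its composition differs from $c_|$.

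The core step is the following local argument. Fix consecutive transition points $a<b$ of $v_|$; on $(a,b)$ the trajectory $v_|$ has a single bounded motif $s_{\sigma\tau b}$ with $\sigma,\tau\in\{+,-\}$, so $\dot v_|$ has constant sign $\sigma$ and $\ddot v_|$ has constant sign $\tau$ there. By (i), $\ddot u_|$ has the same sign $\tau$ throughout $(a,b)$, so the second symbol already matches; moreover, having a nonvanishing second derivative of constant sign on $(a,b)$, $\dot u_|$ is strictly monotone on $[a,b]$. It remains to determine $\sign(\dot u_|)$ on $(a,b)$, and I would split into cases according to whether $\dot v_|$ vanishes at an endpoint. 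First, note that $a$ and $b$ cannot both be local extrema: $\ddot v_|$ has constant sign on $(a,b)$, so $\dot v_|$ is strictly monotone there, which is incompatible with $\dot v_|(a)=\dot v_|(b)=0$; hence at most one of them is an extremum. If $\dot v_|(a)=0$ (the case of an interior extremum, or of the boundary point $t_0$ with vanishing derivative), then $\dot u_|(a)=0$ by the definition of ``seemingly matches'', and a strictly monotone $\dot u_|$ with $\ddot u_|$ of sign $\tau$ then has sign $\sigma$ on all of $(a,b)$, exactly as $\dot v_|$ does; symmetrically if $\dot v_|(b)=0$. If $\dot v_|$ vanishes at neither endpoint, then neither $a$ nor $b$ is a local extremum, so (ii) applies and gives $\sign(\dot u_|(a))=\sign(\dot u_|(b))=\sigma$, and a strictly monotone function taking values of sign $\sigma$ at both ends of $[a,b]$ keeps sign $\sigma$ on $(a,b)$. (The endpoints $t_0,t_{\text{end}}$ require no separate handling: ``seemingly matches'' forces $\dot u_|=\dot v_|$ there, so either $\dot v_|$ vanishes — covered above — or it is nonzero with the right sign.)

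To finish, I would assemble the local conclusions: $u_|$ carries exactly the motif of $v_|$ on each open interval of the subdivision induced by the transition points of $v_|$, and since $\dot u_|$ and $\ddot u_|$ keep constant nonvanishing signs on each such interval, $u_|$ needs no further subdivision there. Because $c_|$ is a composition, its consecutive motifs differ, so $u_|$ genuinely changes motif at each transition point of $v_|$ and nowhere else; hence the composition of $u_|$ is $c_|$, contradicting the hypothesis. This yields the lemma.

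The main obstacle I anticipate is the bookkeeping at the boundary transition points $t_0,t_{\text{end}}$, where a first derivative may vanish so that ``$\sign$'' is not in $\{+,-\}$: one must check that the monotonicity argument still pins down the sign of $\dot u_|$ on the \emph{open} interval, and that ``seemingly matches'' supplies precisely the derivative equality needed in those degenerate cases (it does, since it grants $\dot u_|=\dot v_|$ at $t_0,t_{\text{end}}$ and at all local extrema). The remaining reasoning is a routine case analysis — extremum vs.\ inflection vs.\ boundary at each transition point — together with the elementary fact that a strictly monotone function with a prescribed sign at (or a zero at) an endpoint of an interval has a determined sign on the interior.
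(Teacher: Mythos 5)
Your proof is correct but takes a genuinely different route from the paper's. You establish the contrapositive: assuming both conditions hold on every inter-transition interval of $v_|$, you show interval by interval that $u_|$ carries exactly the motif of $v_|$, so the compositions coincide. Your case split keys on whether $\dot v_|$ vanishes at $a$, at $b$, or at neither, and in each case the constant sign of $\ddot u_|$ (from condition (i)) makes $\dot u_|$ strictly monotone on $[a,b]$, which together with the boundary information pins down $\sign(\dot u_|)$ on the interior. The paper instead argues directly: it first isolates an interval $(a,b)$ on which $v_|$ carries a single bounded motif $s$ but $u_|$ does not, and then splits on whether $u_|$ carries a different single motif on $[a,b]$ (in which case the convexity flips throughout the interval) or has internal transition points (in which case either an inflection forces a sign change in $\ddot u_|$, or an isolated interior extremum forces a sign mismatch in $\dot u_|$ at $a$ or $b$, via the no-two-consecutive-extrema constraint). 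Both arguments are sound. Yours is arguably tighter and essentially proves \cref{thm:implication} directly, reading the lemma off as its contrapositive; the paper's route instead exhibits an explicit witness interval, which is useful for intuition and mirrors the way the lemma is subsequently invoked. Your handling of the boundary cases---using the seemingly-matches equality $\dot u_|(t_0)=\dot v_|(t_0)$ and $\dot u_|(t_{\text{end}})=\dot v_|(t_{\text{end}})$ so that a vanishing endpoint derivative is subsumed by the extremum branch---is correct and is precisely the spot that needs care.
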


\begin{proof}
First, we will show that there needs to be two consecutive transition points of $v_|$, $a,b$, where $v_| \sim s |[a,b]$ and $u_| \nsim s | [a,b]$. Suppose for every pair of consecutive transition points of $v_|$, denoted $a,b$, $v_| \sim s | [a,b]$ and $u_| \sim s | [a,b]$ for some bounded motif $s$. Then necessarily $u_|$ has the same composition as $v_|$ (as they are defined on the same interval). Therefore if the composition of $u_|$ is different from $c_|$ then there needs to be two consecutive transition points of $v_|$, $a,b$, where $v_| \sim s |[a,b]$ and $u_| \nsim s | [a,b]$.

We now consider two cases. Either there is a motif $s' \neq s$ such that $u_| \sim s' | [a,b]$ or there is no such single motif.

\textit{Case 1.} 

Let us assume that $u_| \sim s' | [a,b]$, where $s'\neq s$ is a different bounded motif. Then, as $u_| \sim (c_|,p_|)$, we know that $s$ and $s'$ have the same monotonicity. 
If $v_|$ is increasing on $[a,b]$ then $v_|(b) > v_|(a)$, which implies $u_|(b)>u_|(a)$. 
Thus $u_|$ is also increasing on $ [a,b] $. Similarly, if $v_|$ is decreasing on $ [a,b] $. 
Let us assume, without loss of generality, that $v_|$ is increasing on $[a,b]$. If $v_|$ is also convex then $u_|$ needs to necessarily be concave (as $s' \neq s$). That means that $\forall t\in(a,b) \sign(\ddot{u}_|(t)) \neq \sign(\ddot{v}_|(t))$. Thus one of the conditions in \cref{lem:lemma} is satisfied. Similarly, if $v_|$ is decreasing or concave.

\textit{Case 2.} 

Let us assume that there is no single motif $s'\neq s$ such that $u_| \sim s' | [a,b]$. That means that $u_|$ has a non-empty set of transition points $Q \subset (a,b)$. 

If $a$ or $b$ is a local extremum of $v_|$ then there exists $q\in Q$ such that $q$ is an inflection point of $u_|$ because we cannot have two local extrema as consecutive transition points. Thus $u_|$ changes curvature at $q$ and necessarily there exists a point $t\in (a,b)$ such that $\sign(\ddot{u}_|(t)) \neq \sign(\ddot{v}_|(t))$.

If neither $a$ nor $b$ is a local extremum of $v_|$, then either there is an inflection point in $Q$ and we arrive at the same conclusion as before or there are no inflection points in $Q$. In that case, $Q$ contains only local extrema. However, we cannot have two local extrema as consecutive transition points. Therefore $Q$ contains only one local extremum. That means that $\sign(\dot{u}_|(a)) \neq \sign(\dot{u}_|(b))$. But we have $\sign(\dot{v}_|(a)) = \sign(\dot{v}_|(b))$ or $\dot{v}_|(a)=0$ or $\dot{v}_|(b)=0$. In all cases, either $\sign(\dot{u}_|(a)) \neq \sign(\dot{v}_|(a))$ or $\sign(\dot{v}_|(b)) \neq \sign(\dot{v}_|(b))$. Which is what we needed to show.
\end{proof}

We can now prove \cref{thm:implication}.

\begin{proof}
Let us assume that $u_| \sim (c_|,p_|)$. To show $u_| \equiv (c_|,p_|)$ it is sufficient to show that $u_|$ has composition $c_|$. For contradiction, let us assume that the composition of $u_|$ is different than $c_|$. By \cref{lem:lemma}, there exist two consecutive transition points of $v_|$, denoted $a$ and $b$, such that $\exists t\in(a,b)$ such that $\sign(\ddot{u}_|(t)) \neq \sign(\ddot{v}_|(t))$ or, if neither of $\{a,b\}$ is a local extremum, $\sign(u'(a)) \neq \sign(v'(a))$ or $\sign(u'(b)) \neq \sign(v'(b))$. This contradicts the assumptions of our theorem. Namely, that for every pair of consecutive transition points ($a,b$) of $v_|$, 
 $\forall t\in(a,b) \ \sign(\ddot{u}_|(t))=\sign(\ddot{v}_|(t))$, and
if neither of $\{a,b\}$ is a local extremum, $\sign(\dot{u}_|(t)) = \sign(\dot{v}_|(t))$ for $t\in\{a,b\}$.  Thus the composition of $u_|$ is $c_|$ and, therefore, $u_| \equiv (c_|,p_|)$.
\end{proof}

\subsubsection{$ C^0$ trajectory predictor}
\label{app:C0_trajectory_predictor}

\paragraph{Range of values for the derivative at $t_0$}
To ensure that the trajectory found by $F^0_{\text{traj}}$ has the correct semantic representation, we need to constrain the values of the first derivative of $x$ at $t_0$. These values depend on the slope of the line connecting the first transition point with the second one, i.e., we define slope $\kappa$ as $\kappa = \frac{x(t_1)-x(t_0)}{t_1 - t_0}$. The values also depend on the motif and the nature of the second transition point. They are presented in the table below.

\begin{table}[h]
    \centering
    \begin{tabular}{llll}
    \toprule
        Motif & $t_1$ & Range \\
        \midrule
        $s_{++b}$ & inflection  & $(0,\kappa)$ \\
        $s_{+-b}$ & maximum    & $(1.5\kappa,3\kappa)$ \\
        $s_{+-b}$ & inflection   & $(\kappa,3\kappa)$\\
        $s_{-+b}$ & minimum   & $(3\kappa,1.5\kappa)$\\
        $s_{-+b}$ & inflection    & $(3\kappa,\kappa)$ \\
        $s_{--b}$ & inflection  & $(\kappa,0)$ \\
         \bottomrule
    \end{tabular}
    \caption{Allowed values for $\dot{x}(t_0)$ enforced by $F^0_{\text{traj}}$.}
    \label{tab:derivative_ranges}
\end{table}

\paragraph{Trajectory conforms to the semantic representation} In this section, we slightly relax the definition of conformity by allowing the part of the trajectory between two inflection points to be modeled as a straight line with an appropriate slope. Even though it does not match any of the defined motifs, in this section, we assume a straight line matches both the convex and concave variants of motifs with the corresponding monotonicity. That means we can relax the first assumption in \cref{thm:implication} to $\forall t\in(a,b) \ \sign(\ddot{u}_|(t))=\sign(\ddot{v}_|(t))$ or $\sign(\ddot{u}_|(t)) = 0$ if both $a,b$ are inflection points. To prove that $F^0_{\text{traj}}(c_|,p_|) \equiv (c_|,p_|)$, we first show that $F^0_{\text{traj}}(c_|,p_|) \sim (c_|,p_|)$.

\begin{lemma}
\label{lem:F0_seemingly_matches}
Let $(c,p)\in\mathcal{C}\times\mathcal{P}$ be a semantic representation predicted by $F_{\text{sem}}$. Then $F^0_{\text{traj}}(c_|,p_|) \sim (c_|,p_|)$. 
\end{lemma}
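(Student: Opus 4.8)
The plan is to unpack \cref{def:seemingly_matches} and check each of its clauses directly against the way $F^0_{\text{traj}}$ assembles the trajectory. Write $c_|=(s_1,\dots,s_k)$ with transition points $t_0<t_1<\dots<t_k=t_{\text{end}}$, so $p_|$ records the coordinates $(t_i,x(t_i))$ of all transition points, the slope $\dot x(t_0)$, and the pair $\dot x(t_{\text{end}}),\ddot x(t_{\text{end}})$. Let $u_|:=F^0_{\text{traj}}(c_|,p_|)$ and let $v_|$ be any trajectory with $v_|\equiv(c_|,p_|)$. Since $v_|\in C^2$ and has composition $c_|$, the motif definitions (\cref{def:motif_used}) force $\dot v_|(t_i)=0$ at every $t_i$ that is a local extremum (adjacent motifs of opposite monotonicity) and $\ddot v_|(t_i)=0$ at every $t_i$ that is an inflection point (adjacent motifs of equal monotonicity but opposite curvature), while $v_|(t_i)=x(t_i)$ and $\dot v_|(t_0)=\dot x(t_0)$ come straight from $p_|$. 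So it suffices to show $u_|$ matches exactly these values.

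Next I would recall the construction of $F^0_{\text{traj}}$: on each cell $[t_{i-1},t_i]$ it returns the unique cubic $q_i$ solving a $4\times4$ linear system — the two endpoint‑value equations $q_i(t_{i-1})=x(t_{i-1})$, $q_i(t_i)=x(t_i)$, plus one derivative equation at each endpoint: the prescribed slope $\dot q_1(t_0)=\dot x(t_0)$ at $t_0$, and at every other transition point $t_j$ (including $t_{\text{end}}$) the derivative that vanishes there, i.e. $\dot q(t_j)=0$ if $t_j$ is a local extremum and $\ddot q(t_j)=0$ if $t_j$ is an inflection point, as dictated by the two motifs adjacent to $t_j$. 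A short case check — over the four combinations of a first‑ or second‑order condition at the left node and at the right node, with both function values fixed at the two distinct nodes — shows each of these systems is a poised Hermite‑type interpolation problem for cubics, hence nonsingular, so $u_|$ is well defined. Because both $q_i$ and $q_{i+1}$ impose the value $x(t_i)$ at $t_i$, the pieces agree there, giving $u_|\in C^0$ and the value clause of \cref{def:seemingly_matches}; and because both also impose the vanishing derivative at $t_i$, that derivative is continuous across $t_i$ (only the non‑vanishing one may jump, which is precisely why $u_|$ need not lie in $C^2$).

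With well‑posedness in hand the derivative clauses are immediate at every transition point except the last. At $t_0$: $\dot u_|(t_0)=\dot q_1(t_0)=\dot x(t_0)=\dot v_|(t_0)$. At an interior local extremum $t_i$ ($0<i<k$): $q_i$ carries $\dot q_i(t_i)=0$ as its right‑endpoint condition and $q_{i+1}$ carries $\dot q_{i+1}(t_i)=0$ as its left‑endpoint condition, so $\dot u_|(t_i)=0=\dot v_|(t_i)$; symmetrically, at an interior inflection point $\ddot u_|(t_i)=0=\ddot v_|(t_i)$. Since no second‑derivative match is demanded at interior extrema and no first‑derivative match at interior inflection points, nothing more is needed there.

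The one delicate point — and where I expect to spend the most care — is $t_{\text{end}}=t_k$, where \cref{def:seemingly_matches} asks $u_|$ to match \emph{both} $\dot x(t_{\text{end}})$ and $\ddot x(t_{\text{end}})$. Exactly one of these is forced to $0$ by the motif structure (according to whether $t_{\text{end}}$ is a local extremum or an inflection point), and $F^0_{\text{traj}}$ imposes that vanishing condition as the right‑endpoint derivative equation of $q_k$; a conforming $v_|$ also has it equal to $0$, so the two agree. The other, non‑vanishing, derivative at $t_{\text{end}}$ is not an independent output of $F_{\text{sem}}$: as specified in the property‑map construction (see \cref{app:training_property}, ``Derivatives at the last transition point''), it is \emph{defined} to be the value $F^0_{\text{traj}}$ itself produces, i.e. $\dot q_k(t_{\text{end}})$ or $\ddot q_k(t_{\text{end}})$ accordingly. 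Hence it coincides with $\dot u_|(t_{\text{end}})$ (resp. $\ddot u_|(t_{\text{end}})$), and $v_|\equiv(c_|,p_|)$ carries the same value by definition, which closes the verification and yields $u_|\sim(c_|,p_|)$. The main obstacle is really this last step: one must make explicit that the restriction to $(c,p)$ \emph{produced by} $F_{\text{sem}}$ is what makes $p_|$ self‑consistent with $F^0_{\text{traj}}$ (for an arbitrary $p_|$ prescribing an incompatible triple $(\dot x(t_{\text{end}}),\ddot x(t_{\text{end}}))$ the statement would be false), alongside the routine but unavoidable nonsingularity check for the four flavours of cubic interpolation system.
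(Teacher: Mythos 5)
Your proof is correct and follows essentially the same approach as the paper's: you verify each clause of \cref{def:seemingly_matches} directly from the four constraints $F^0_{\text{traj}}$ imposes per cubic, and you pin down the same crucial point — that the non-vanishing derivative at $t_{\text{end}}$ stored in $p_|$ is self-consistent precisely because $F_{\text{sem}}$ is trained with $F^0_{\text{traj}}$ in the loop. Your version is more explicit than the paper's about the well-posedness of the Hermite-type interpolation systems and about continuity across interior transition points, but the underlying argument is the same.
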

\begin{proof}
As each motif is described by a separate cubic polynomial, it is defined by four parameters. To predict the trajectory, we set four constraints. Two for the $x$-coordinates of the transition points, and two for the derivatives (one for each transition point). If the transition point is $t_0$ then we set the first derivative at $t_0$ to the value specified in $p_|$. If it is a local extremum then we set the first derivative at it to $0$. If it is an inflection point then we set the second derivative at it to $0$. The only condition left to satisfy is the second derivative at $t_{\text{end}}$, if $t_{\text{end}}$ is a local extremum, or the first derivative at $t_{\text{end}}$ if it is an inflection point. However, as we use $F^0_{\text{traj}}$ for training the semantic predictor $F_{\text{sem}}$, we are guaranteed that the ``other'' automatically matches the one specified in $p_|$. Therefore, by \cref{def:seemingly_matches}, $F^0_{\text{traj}}(c_|,p_|) \sim (c_|,p_|)$.
\end{proof}

We will know prove that $F^0_{\text{traj}}(c_|,p_|) \equiv (c_|,p_|)$.

\begin{theorem}
 Let $(c,p)\in\mathcal{C}\times\mathcal{P}$ be a semantic representation predicted by $F_{\text{sem}}$. Then $F^0_{\text{traj}}(c_|,p_|) \equiv (c_|,p_|)$.    
\end{theorem}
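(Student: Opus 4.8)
The plan is to deduce the statement from \cref{lem:F0_seemingly_matches} and the relaxed form of \cref{thm:implication}. Write $u_| := F^0_{\text{traj}}(c_|,p_|)$. By \cref{lem:F0_seemingly_matches} we already have $u_| \sim (c_|,p_|)$, so by \cref{thm:implication} it suffices to verify, for each pair of consecutive transition points $a<b$ of $(c_|,p_|)$, that (i) on the open interval $(a,b)$ the second derivative $\ddot u_|$ carries the sign dictated by the corresponding motif of $c_|$ — or vanishes identically, which the relaxed notion of conformity permits exactly when both $a$ and $b$ are inflection points — and (ii) whenever neither $a$ nor $b$ is a local extremum, $\dot u_|$ carries the motif's sign at $a$ and at $b$.

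By construction $F^0_{\text{traj}}$ represents the motif on $[a,b]$ by a single cubic, fixed by four linear equations: two pin $u_|(a)$ and $u_|(b)$ to the transition-point coordinates stored in $p_|$, and the remaining two impose a first- or second-derivative condition at each endpoint — $\dot u_|(t_0)$ taken from the admissible range of \cref{tab:derivative_ranges} when $a=t_0$, $\dot u_|=0$ at an extremum endpoint, $\ddot u_|=0$ at an inflection endpoint, and, at an endpoint equal to $t_{\text{end}}$, the vanishing of one of these with the other derivative being whatever the cubic produces (and recorded as such in $p_|$, cf.\ \cref{lem:F0_seemingly_matches}). Hence $\ddot u_|$ is affine on $(a,b)$ and $\dot u_|$ is a quadratic with monotone derivative. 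If $a$ and $b$ are both inflection points then $\ddot u_|$ vanishes at two distinct points, so $\ddot u_|\equiv 0$ and the piece is the straight segment through the two transition points — the case allowed by the relaxation. Otherwise at most one endpoint is an inflection point, so the affine function $\ddot u_|$ vanishes at most once on $[a,b]$ and therefore has a constant sign on $(a,b)$; moreover, since $\ddot u_|$ then keeps a fixed sign there, $\dot u_|$ is strictly monotone on $(a,b)$, and combined with the ordering of $u_|(a),u_|(b)$ dictated by the motif this forces $\dot u_|$ to have the motif's sign throughout $(a,b)$, in particular at any endpoint that is not a local extremum (for $a=t_0$ this holds directly, since the ranges of \cref{tab:derivative_ranges} are one-signed and match the motif's monotonicity).

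It remains to confirm that the constant sign of $\ddot u_|$ on $(a,b)$ is the correct one. This is a short case analysis over the six rows of \cref{tab:derivative_ranges}: normalizing $a=0$, $b=L$, setting $\kappa = (u_|(b)-u_|(a))/L$, solving the four linear equations, and evaluating $\ddot u_|(0)$ and $\ddot u_|(L)$, one checks that the stated interval endpoints for $\dot u_|(t_0)$ — for instance $1.5\kappa$ and $3\kappa$ for $s_{+-b}$ with $t_1$ a maximum — are exactly the thresholds at which $\ddot u_|(0)$ and $\ddot u_|(L)$ acquire the required sign, so that for $\dot u_|(t_0)$ strictly inside the interval $\ddot u_|$ has that sign at both endpoints of $[a,b]$ and hence, being affine, on all of $(a,b)$. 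Pieces whose left endpoint is not $t_0$ are handled identically, with the left datum now a vanishing first or second derivative rather than a table value. With (i) and (ii) verified for every consecutive pair of transition points, \cref{thm:implication} yields $u_| \equiv (c_|,p_|)$, i.e.\ $F^0_{\text{traj}}(c_|,p_|)\equiv(c_|,p_|)$.

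The main obstacle is exactly this last case analysis: one must check not only that each cubic interpolation problem is well posed, but that the numeric ranges of \cref{tab:derivative_ranges} are nonempty and coincide with the set of slopes at $t_0$ for which the unique interpolant has the correct curvature sign at both endpoints — this is where the specific constants $(0,\kappa)$, $(1.5\kappa,3\kappa)$, $(3\kappa,\kappa)$, and so on are pinned down. Running all six cases, together with the mild boundary degeneracies (such as an inflection endpoint at which $\dot u_|$ also happens to vanish), is the only non-mechanical part of the argument; everything else is the bookkeeping above.
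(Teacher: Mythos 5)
Your approach matches the paper's: invoke Lemma~\ref{lem:F0_seemingly_matches} to get $u_| \sim (c_|,p_|)$, then verify the conditions of Theorem~\ref{thm:implication} by exploiting the affineness of $\ddot u_|$ on each cubic piece, handling the both-inflections pair as the degenerate straight line, and running the endpoint case analysis (including the six rows of Table~\ref{tab:derivative_ranges} when $a=t_0$). One small inaccuracy in your characterization of where the Table~\ref{tab:derivative_ranges} bounds come from: they are not always both curvature thresholds. For $s_{+-b}$ with $t_1$ a maximum your description is exact, but when $t_1$ is an inflection point the cubic automatically has $\ddot u_|(t_1)=0$, so only one bound (e.g.\ $u_0'>\kappa$) enforces curvature sign while the other bound (e.g.\ $u_0'<3\kappa$) enforces $\dot u_|(t_1)>0$, which is your condition~(ii); and for $s_{++b}$ the lower bound $0$ is just the monotonicity constraint at $t_0$. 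This does not break the argument — you explicitly set up both (i) and (ii) as the things to verify — but it matters when actually justifying the specific constants in the table.
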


\begin{proof}
By \cref{lem:F0_seemingly_matches}, we get that $F^0_{\text{traj}}(c_|,p_|) \sim (c_|,p_|)$. We will now use \cref{thm:implication} to show that $F^0_{\text{traj}}(c_|,p_|) \equiv (c_|,p_|)$. Let $v_| \equiv (c_|,p_|)$ and let us denote  $F^0_{\text{traj}}(c_|,p_|)$ as $u_|$. Let us take any pair of consecutive transition points of $v_|$ and denote them as $a,b$.

Let us consider two cases. First, when $a\neq t_0$ and second, where $a = t_0$.

\textit{Case 1.} $a\neq t_0$

At least one of $\{a,b\}$ is an inflection point, as two local extrema cannot be two consecutive transition points. Denote this point as $q$. That means that $\ddot{u}_|(q) = \ddot{v}_|(q) = 0$. By definition of $F^0_{\text{traj}}$, $u_|$ is a cubic on $[a,b]$. That means $\ddot{u}_|$ is a straight line on $[a,b]$, passing through $0$ at $q$. Now we are going to consider two subcases. Either the other transition point is a local extremum or it is another inflection point.

\textit{Case 1.1} The other transition point is a local extremum.

Without loss of generality, let us assume that $b=q$ is an inflection point, $a$ is a local extremum, and that $u_|(a) < u_|(b)$. That means that at some point $t\in(a,b)$, $\dot{u}_|(t)>0$. We also know that $\dot{u}_|(a) = 0$. That means that at some point $t\in(a,b)$, $\ddot{u}_|>0$. As $\ddot{u}_|$ is a straight line passing through $0$ at $b$, we get that $\forall t\in(a,b) \ \ddot{u}_|(t)>0$. This means that $\forall t \in (a,b) \ \sign(\ddot{u}_|(t))=\sign(\ddot{v}_|(t))$. Similarly for other cases (where $a$ is an inflection point, or $u_|(a) > u_|(b)$).

\textit{Case 1.2} Both transition points are inflection points.

If $a$ and $b$ are both transition points, then $\forall t\in (a,b) \ \ddot{u}_|(t) = 0$. This means $u_|$ is a straight line passing through $(a,u(a))$ and $(b,u(b))$. As such $\sign(\dot{u}_|(a)) = \sign(\dot{u}_|(b)) = \sign(u_|(b)-u_|(a)) = \sign(v_|(b)-v_|(a)) = \sign(\dot{v}_|(a)) = \sign(\dot{v}_|(b))$.

\textit{Case 2.} $a=t_0$

If $a=t_0$ then $\dot{u}(a)=\dot{u}(t_0)$ is in the range described in \cref{tab:derivative_ranges}. Without loss of generality, let us assume that $t_0=0$. Let us describe $u_|$ on $[a,b]$ as $\beta_3 t^3 + \beta_2 t^2 + \beta_1 t + \beta_0$. Let us denote $u_|(0)=u_0$ and $u_|(t_1)=u_1$. Let us also denote the required first derivative at $0$ as $u_0'$. As $u_|$ needs to pass through both $(0,u_0)$ and $(t_1,u_1)$ and needs to have $\dot{u}_|(0)=u_0'$, we get the following equations.
\begin{align}
    \beta_0 = u_0 \\
    \beta_3 t_1^3 + \beta_2 t_1^2 + \beta_1 t_1 + \beta_0 = u_1 \\
    \beta_1 = u_0'
\end{align}
As in \cref{tab:derivative_ranges}, we denote $\frac{u(t_1)-u(t_0)}{t_1-t_0}$ as $\kappa$. Then $u_1 = u_0 + \kappa \times t_1$. The second equation then reduces to
\begin{equation}
\label{eq:first_derivative_main}
    \beta_3 t_1^2 + \beta_2 t_1 + u_0' = \kappa
\end{equation}

Now we have to go by all six cases in \cref{tab:derivative_ranges}.

\textit{Case 2.1} $s_{++b}$, $t_1$ is an inflection point.

As $t_1$ is an inflection point, we get that
\begin{equation}
6\beta_3 t_1 + 2 \beta_2 = 0
\end{equation}
That means
\begin{equation}
    \beta_2 = -3 \beta_3 t_1
\end{equation}

By substituting into the previous equation, we get
\begin{equation}
    \beta_3 t_1^2 - 3 \beta_3 t_1^2 + u_0' = \kappa
\end{equation}
which gives us
\begin{align}
    \beta_3 = \frac{(u_0' - \kappa)}{2 t_1^2} \\
    \beta_2 = -\frac{3(u_0'-\kappa)}{2 t_1}
\end{align}

To satisfy the conditions of \cref{thm:implication}, we need to ensure that $\ddot{u}_|(t) > 0$ for all $t\in(0,t_1)$. This holds if
\begin{equation}
    \frac{3(u_0' - \kappa)}{t_1^2} t - \frac{3(u_0'-\kappa)}{t_1} > 0
\end{equation}
This is equivalent to
\begin{equation}
    3(u_0' - \kappa) (t - t_1) > 0 
\end{equation}
As $t<t_1$, this is equivalent to $u_0' - \kappa < 0$ which is equivalent to
\begin{equation}
    u_0' < \kappa
\end{equation}

As our motif is $s_{++b}$, $u_0'\geq 0$. Thus $0 \leq u_0' < \kappa$ as specified in \cref{tab:derivative_ranges}.

Moreover, as neither $t_0$ nor $t_1$ is a local extremum, we need to check the signs of the first derivatives. We do not need to check $t_0$, but we need to ensure that $\dot{u}_|(t_1)>0$. But this follows from the fact that $\dot{u}_|(t_0)>0$ and $\ddot{u}_|(t)>0$ for all $t\in(t_0,t_1)$. 

\textit{Case 2.2} $s_{+-b}$, $t_1$ is a maximum.

As $t_1$ is a maximum, we get
\begin{equation}
    3\beta_3 t_1^2 + 2\beta_2 t_1 + u_0' = 0
\end{equation}
That means
\begin{equation}
    \beta_2 = \frac{-u_0' - 3\beta_3 t_1^2}{2 t_1} 
\end{equation}
By substituting into \cref{eq:first_derivative_main}, we obtain
\begin{equation}
\beta_3 t_1^2  + 1/2 \times (-u_0'-3\beta_3 t_1^2) +u_0' = \kappa
\end{equation}
By rearranging, we get
\begin{align}
    \beta_3 = \frac{u_0' - 2\kappa}{t_1^2}\\
    \beta_2 = \frac{3\kappa-2u_0'}{t_1}
\end{align}
To satisfy the conditions of \cref{thm:implication}, we need to ensure that $\ddot{u}_|(t)<0$ for all $t\in(0,t_1)$. That is we need
\begin{equation}
    6\frac{u_0' - 2\kappa}{t_1^2} t + 2 \frac{3\kappa-2u_0'}{t_1} < 0
\end{equation}
which is equivalent to
\begin{equation}
(6u_0'-12\kappa) t + (6\kappa-4u_0') t_1 < 0
\end{equation}
After rearranging
\begin{equation}
\kappa(3t_1 - 6t) < u_0'(2t_1-3t)
\end{equation}
Let us denote $t/t_1$ as $t'$. Then the previous equation is equivalent to
\begin{equation}
\label{eq:case_2_2}
    t'(3u_0'-6\kappa) < 2u_0' - 3\kappa
\end{equation}
As this is supposed to be true for all $t'\in(0,1)$, two things need to be true.
\begin{align}
    2u_0'-3\kappa > 0 \\
    3u_0'-6\kappa < 2u_0' - 3\kappa
\end{align}
This gives us
\begin{align}
   1.5 \kappa < u_0' < 3 \kappa
\end{align}
as specified in \cref{tab:derivative_ranges}.

\textit{Case 2.3} $s_{+-b}$, $t_1$ is an inflection.

We follow a similar first step as in \textit{Case 2.1}. We arrive at the conclusion that to satisfy the condition of Theorem 1, we need to ensure that $\ddot{u}_|(t)<0$ for all $t\in(0,t_1)$. This holds if
\begin{equation}
    u_0' > \kappa
\end{equation}

However, we also need to make sure that $\dot{u}_|(t_1) > 0$. That is, we need to satisfy
\begin{equation}
    3 \frac{u_0'-\kappa}{2t_1^2} t_1^2 - 2 \frac{3(u_0'-\kappa)}{2t_1} t_1 + u_0' > 0
\end{equation}
That is equivalent to
\begin{equation}
    -u_0' + 3\kappa > 0 
\end{equation}
Thus $u_0'$ needs to satisfy $\kappa < u_0' < 3\kappa$ as specified in \cref{tab:derivative_ranges}.

\textit{Case 2.4} $s_{-+b}$, $t_1$ is a minimum.

The steps are very similar to those in \textit{Case 2.2}. We arrive at equation \cref{eq:case_2_2}, but the inequality has a different direction. That is, we need
\begin{equation}
    t'(3u_0'-6\kappa) > 2u_0'-3\kappa
\end{equation}
to hold for all $t'\in(0,1)$. This can only be true if
\begin{align}
      2u_0'-3\kappa < 0 \\
    3u_0'-6\kappa > 2u_0' - 3\kappa
\end{align}
which gives us
\begin{equation}
    3\kappa < u_0' < 1.5\kappa
\end{equation}
as specified in \cref{tab:derivative_ranges}.

\textit{Case 2.5} $s_{-+b}$, $t_1$ is an inflection point.

The steps are analogous to those in \textit{2.3}. However, we need to ensure that $\ddot{u}_|(t)>0$ for all $t\in(0,t_1)$ and that $\dot{u}_|(t_1) < 0$. This gives us
\begin{equation}
    3 \kappa < u_0' < \kappa
\end{equation}
as in \cref{tab:derivative_ranges}.

\textit{Case 2.6} $s_{--b}$, $t_1$ is an inflection point. 

Analogously to \textit{Case 2.1}, we get
\begin{equation}
    \kappa \leq u_0' < 0
\end{equation}
\end{proof}

\subsubsection{$ C^2$ trajectory predictor}
\label{app:C2_trajectory_predictor}
In this section, we describe $F^2_{\text{traj}}$ that predicts a trajectory that is twice continuously differentiable. 

\paragraph{Why is it challenging?}
A natural idea to find a cubic spline with the corresponding composition and properties would be to follow a similar approach as in $F^0_{\text{traj}}$ by choosing knots at the transition points. However, then the problem turns out to be overdetermined, i.e., we are not guaranteed that the needed cubic spline exists. Indeed, given $S$ motifs in the bounded part of the composition, we have $5$ conditions for each of the internal knots (the values for the two cubic, matching first and second derivative, and the value of one of the derivatives). This is because each internal knot is either a local extremum (first derivative vanishes) or an inflection point (second derivative vanishes). We also specify the values of the derivatives at the endpoints, which gives us overall $5(S-1) + 2 + 3=5S$ conditions, whereas we only have $4S$ parameters.

The conclusion is simple: we need more knots. In fact, the general formula for the number of constraints is $3(K-1) + 2(S-1) + 2 + 3 = 3K+2S$ where $K$ is the number of cubics, and $S$ is the number of motifs. By equating this number to $4K$, we get the optimal number of cubics is: $2S$. That means that we will need knots between transition points. This, however, poses a new challenge. Although the resulting cubic will have all the transition points and derivatives as required, i.e., it will seemingly match the semantic representation, we are in no way guaranteed that it will have the correct composition! Although we would like the function between two transition points to have a fixed sign of both derivatives, nothing is stopping our function from adding additional trend changes in between. This is visualized in \cref{fig:property_similar}. 

\begin{figure}[h]
    \centering
    \includegraphics[width=0.5\linewidth,]{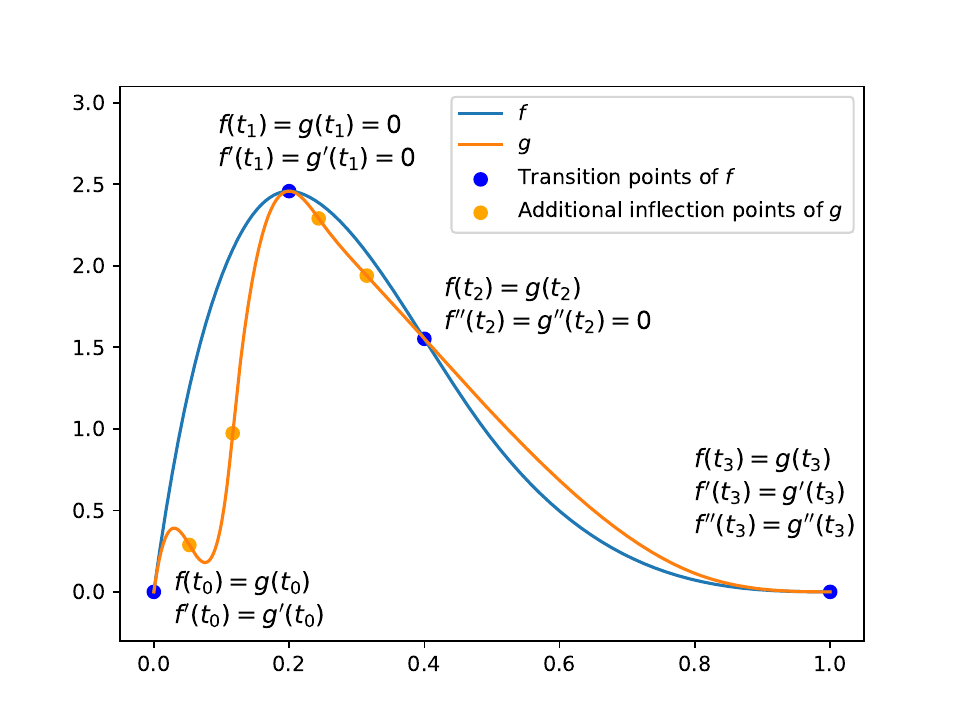}
    \caption{$f\equiv (c_|,p_|)$, $ g \sim (c_|,p_|)$ but $g \not\equiv (c_|,p_|)$.}
    \label{fig:property_similar}
\end{figure}

To solve this problem, we propose a completely different approach to fitting cubic splines.

\paragraph{From cubic spline to a piecewise linear function and back}

In this subsection, we show how we can describe a cubic spline in terms of its second derivative to reduce the number of parameters and control its second derivative.
A cubic spline consisting of $K$ cubics is uniquely described by its $K+1$ knots and $4K$ coefficients (four per every cubic). However, the actual number of degrees of freedom is much smaller because the spline needs to be twice continuously differentiable at the knots. We observe that we can decrease the number of parameters by fitting the second derivative of the spline instead and then integrating it twice to get the required function (\cref{fig:from_cubic_to_linear}). The second derivative of a cubic is a linear function (which can be integrated analytically), so each piece is described by just two parameters. In fact, we can describe a piecewise linear function by just the values at the knots, ensuring continuity. The values at two consecutive knots then uniquely determine the linear functions. Together with the additional two parameters for the integration constants ($c$ and $d$), we reduced the number of parameters to $K+3$. Formally, we want to find $\ddot{u}\in C^0$ and then define $u\in C^2$ as

\begin{equation}
u(t) = \int_{t_0}^{t} \left( \int_{t_0}^{t'} \ddot{u}(t'') \diff t'' + c \right) \diff t' + d, 
\end{equation}
where $\ddot{u}$ is described by $(t_0, \ldots, t_K, v_0, \ldots, v_K)$. This not only helps us decrease the number of parameters and constraints we need to impose but, crucially, allows us to exactly control $\sign(\ddot{u}(t))$ by just making sure each $v_k$ is either positive or negative.

\begin{figure}[h]
    \centering
    \includegraphics[trim={0 9.5cm 0cm 0},clip,width=1\linewidth,]{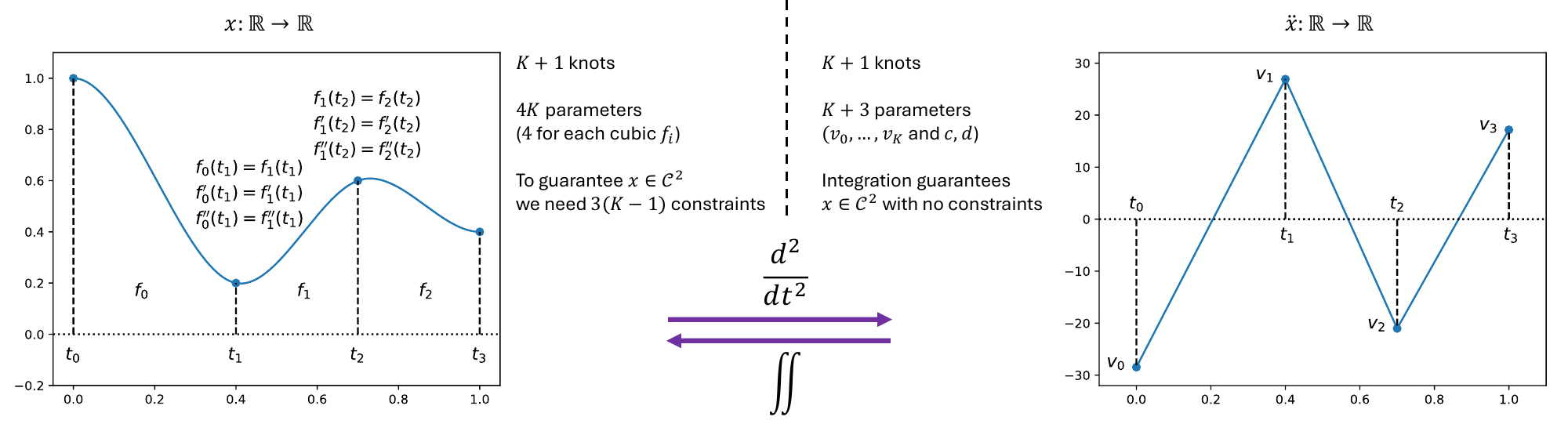}
    \vspace{-5mm}
    \caption{Traditionally, a cubic spline over $K+1$ knots is described by $4K$ parameters and $3(K-1)$ constraints that guarantee that the function is twice continuously differentiable. Instead, we describe it using its second derivative, which can be parametrized by $K+1$ parameters and two integration constants. Integration guarantees the function is twice continuously differentiable without any additional constraints.}
    \label{fig:from_cubic_to_linear}
\end{figure}

\paragraph{Fitting the second derivative}

We observed that the second derivative of a cubic spline is a piecewise linear function that is entirely defined by its values at the knots. We choose the knots as the transition points and add one knot between every two of them. The positions of these knots are additional parameters of our model. Given $n$ transition points, we have $2n-1$ knots (denoted $t_0,\ldots,t_{2n-2}$) and their associated values denoted $v_0,\ldots,v_{2n-2}$. As each antiderivative is determined up to a constant, we need additional parameters $c$ and $d$ as integration constants. Ultimately, $\ddot{u}$ is described by $(t_0,\ldots,t_{2n-2},v_0,\ldots,v_{2n-2},c,d)$. Overall, this gives $4n$ parameters. However, many of these parameters are directly determined by the semantic representation of $x$. In particular, the following parameters are predetermined.
\begin{itemize}[leftmargin=5mm,nolistsep]
    \item  $t_{2i}$ for all $i\in[0:n-1]$ ($t$-coordinates of the transition points)
    \item $v_{2n-2}$ and all $v_{2i}$ where $t_{2i}$ is an inflection point
    \item $c=\dot{x}(t_0)$
    \item $d=x(t_0)$
\end{itemize}
In addition, for every transition point $t_{2i}$ with a specified first derivative (either a local extremum or the endpoint), the value of $v_{2i-1}$ is chosen to enforce the correct value of $\dot{u}(t_{2i})$. This can be done by observing that the value of $\dot{u}(t_{2i})$ is equal to the sum of $\dot{u}(t_{2j})$, where $t_{2j}$ is a previous transition point with a specified first derivative, and the integral of $\ddot{u}$ between those points. Namely,
\begin{equation}
    \dot{u}(t_{2i}) = \int_{t_{2j}}^{t_{2i}} \ddot{u}(t) \diff t + \dot{u}(t_{2j})
\end{equation}
This integral can be calculated analytically as $\ddot{u}$ is a piecewise linear function.  In the end, we are left with $2n-2$ parameters obeying the following constraints.
\begin{itemize}
    \item $t_{2i}<t_{2i+1}<t_{2i+2}$ for all $i\in[0:n-2]$
    \item $\text{sign}(v_{i}) = \text{sign}(\ddot{x}(t_i))$
\end{itemize}
The goal is to find these parameters such that the resulting $u(t_{2i}) = x(t_{2i})$ and $\text{sign}(\dot{u}(t_{2i})=\text{sign}(\dot{x}(t_{2i}))$ and this can be formed as the following objective.
\begin{equation}
    \sum_{i=0}^{n-1} (u(t_{2i}) - x(t_{2i}))^2 + \lambda \min(\dot{u}(t_{2i})\times\dot{x}(t_{2i}),0)
\end{equation}
where $\lambda$ is chosen to be very large. We then use L-BFGS-B and Powell to optimize this objective three times for different initial guesses. If the maximum error on the transition points ($\max_{i} |u(t_{2i}) - x(t_{2i})|$) and on the derivatives ($\max_{i} |\dot{u}(t_{2i}) - \dot{x}(t_{2i})|$) is lower than some user-defined threshold (we use $0.001$) then we have found a function with the correct semantic representation. If not then we default to using $F^0_{\text{traj}}$. The threshold allows the user to choose a trade-off between how smooth the trajectory needs to be and how faithful it needs to be to its semantic representation. 

To make this concrete, we consider an example where the finite part of the composition is $(s_{+-b},s_{--b},s_{-+b},s_{++b})$ (see \cref{fig:second_derivative_example}). It is described by $20$ parameters ($t_0,\ldots,t_8,v_0,\ldots,v_8,c,d$), but most of them are fixed (denoted by the red color in \cref{fig:second_derivative_example}). In particular, $t_0, t_2, t_4, t_6, t_8$ are the $t$-coordinates of the transition points. $v_4=0$ as an inflection point and $v_8=0$ as the last transition point. Moreover, $v_1,v_5,v_7$ are always chosen to ensure the first derivatives at $t_2,t_6,t_8$ have the correct values and $c=\dot{x}(t_0)=1$, $d=x(t_0)=0$. That leaves $8$ trainable parameters ($v_0,t_1,v_2,t_3,v_3, t_5, v_6, t_7$) constrained such that $t_0<t_1<t_2<t_3<t_4<t_5<t_6<t_7<t_8$, $v_0,v_2,v_3<0$, $v_6>0$.

\begin{figure}[h]
    \centering
    \includegraphics[width=0.7\linewidth]{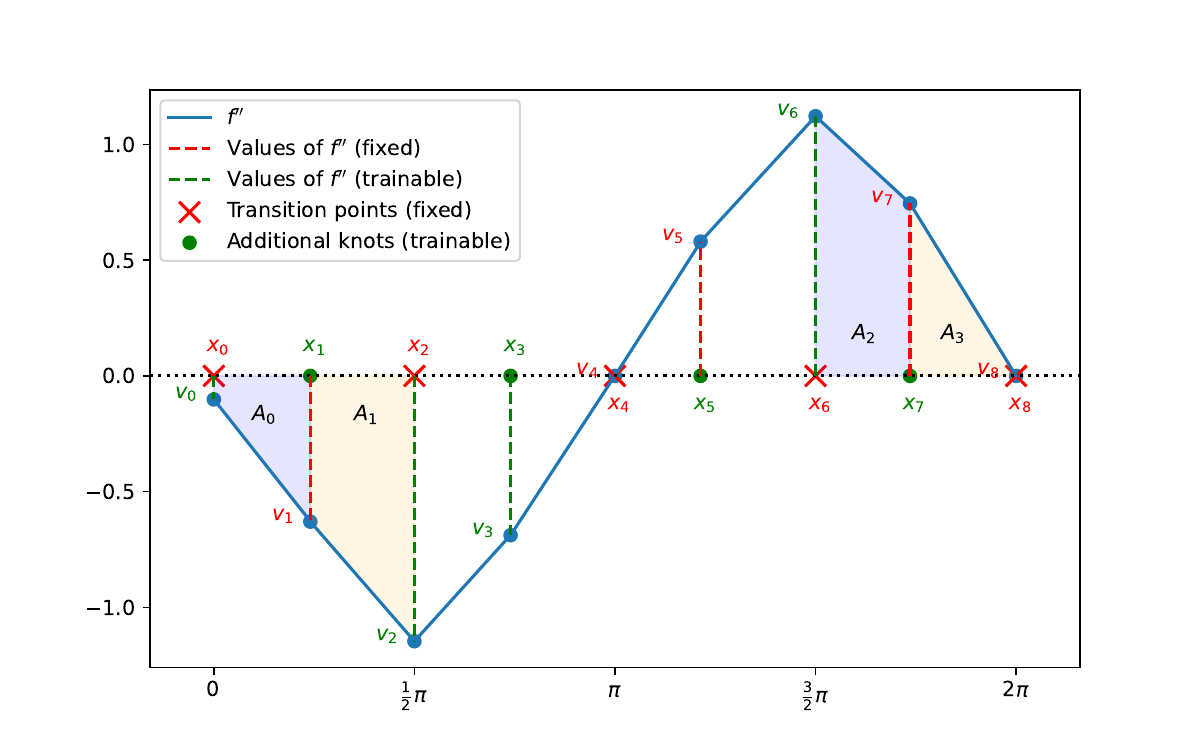}
    \caption{Example parametrization of $u$ through its second derivative. $x$ has the composition $(s_{+-b},s_{--b},s_{-+b},s_{++b})$. It is described by $20$ parameters ($t_0,\ldots,t_8,v_0,\ldots,v_8,c,d$) where $12$ of them are fixed and $8$ are trainable.}
    \label{fig:second_derivative_example}
\end{figure}

\subsection{Unbounded part of the trajectory}
\label{app:unbounded_part}
In contrast to the bounded part of the trajectory, for the unbounded part we have only one predictor. It takes the unbounded motif and its properties, as well as the coordinates of the last transition point and both derivatives at this point, and predicts a trajectory $_|x$. We need the derivatives to guarantee that the trajectory is twice continuously differentiable at $t_{\text{end}}$. The first challenge is to come up with useful properties that sufficiently describe the unbounded motif.

\subsubsection{Properties of unbounded motifs}
\label{app:properties_unbounded}
Below, we describe the properties of unbounded motifs one by one.

\paragraph{$s_{++u}$}

This motif represents an increasing, convex function. We decided to model it as a function exhibiting an exponential-like behavior. That is, we expect it to double after a fixed time interval. As it is true for a function such as $u(t)=3\times 2^{\frac{t}{5}}$. In this case $u(t+5)=2 \times u(t)$. However, we need to generalize the notion of this doubling time to settings with arbitrary initial conditions, including negative ones. In that case, the doubling time depends on $t$. However, we can make it approach a fixed value asymptotically. Let $\gamma(t)$ be defined as value such that $_|x(t+\gamma(t)) = 2{}_|x(t)$. As $_|x$ is increasing and convex, $\gamma(t)$ is well-defined. Then we define our property as $\lim_{t\to\infty} \gamma(t)$ and call it ``asymptotic doubling time''.

\paragraph{$s_{--u}$}

We describe $s_{--u}$ using an analogous property that we call ``negative asymptotic doubling time'' defined in exactly the same way. The only difference is that, in this case, doubling makes the trajectory more negative and thus the value decreases.

\paragraph{$s_{+-u}$}

This motif represents an increasing, concave function. We decided to model it as a function that resembles a logarithm. Thus, we expect it to have an inverse behavior to the exponential function. As we double the input, we expect a constant increase in the function's value. As with the previous examples, in general, this increase would depend on the current value of the trajectory, but we can make it converge as $t\to \infty$. Formally, we define $\gamma(t)$ as a number such that $_|x(2t) = {}_|x(t) + \gamma(t)$. As $_|x$ increases, this is well defined. Thus, we define the property as $\lim_{t\to\infty} \gamma(t)$ and call it ``asymptotic incrementing factor''.

\paragraph{$s_{-+u}$}

We describe $s_{-+u}$ using an analogous property that we call ``asymptotic decrementing factor''. It is defined as $\lim_{t\to\infty} \gamma(t)$, where $\gamma(t)$ is defined as number such that $_|x(2t) = {}_|x(t) - \gamma(t)$.

\paragraph{$s_{-+h}$}

This motif represents a decreasing, convex function with a horizontal asymptote. A natural property is the horizontal asymptote $h$. However, we decided to include one more property that describes how quickly the trajectory approaches that asymptote. We define $t_{1/2}$ as the value of $t$ where $_|x$ is in the middle between the value of the last transition point and the asymptote. Formally, $_|x(t_{1/2}) = \frac{_|x(t_{\text{end}}) + h}{2}$. We call it ``half-life'' as it would correspond to half-life if modeled as an exponential decay.

\paragraph{$s_{+-h}$}

We describe this motif using the same properties as $s_{-+h}$. However, as the function increases, instead of ``half-life'' we call it ``inverse half-life''.

\subsubsection{Parametrization}
\label{app:unbounded_parametrization}
Having defined the properties of interest in the previous section, we need to find a parametrization for each of these motifs that would allow us to choose an arbitrary property as well as the last transition point and the two derivatives at this point.

\paragraph{$s_{++u}$}

We parametrize this motif as
\begin{equation}
    _|x(t) = \theta_1 e^{\theta_2 (t-t_{\text{end}})} + \theta_3 (t-t_{\text{end}}) + \theta_4 (t-t_{\text{end}}) + \theta_5 
\end{equation}
We can quickly calculate that
\begin{align}
    &{}_|x(t_{\text{end}}) = \theta_1 + \theta_5 \\
    &{}_|\dot{x}(t_{\text{end}}) = \theta_1 \theta_2 + \theta_4 \\
    &{}_|\ddot{x}(t_{\text{end}}) = \theta_1 \theta_2^2 + 2\theta_3
\end{align}
Now let us define $\gamma(t)$ as earlier, i.e., $\gamma(t)$ is the value such that $_|x(t+\gamma(t)) = 2{}_|x(t)$. We are now going to prove that $\lim_{t\to\infty} g(t) =  \frac{\log(2)}{\theta_2}$ which we denote as simply $\gamma_*$. To do it we first prove that $\lim_{t\to\infty} 2{}_|x(t) - {}_|x(t+\gamma_*) = 0$.
\begin{equation}
\begin{split}
\lim_{t\to\infty}\frac{2{}_|x(t)}{{}_|x(t+\gamma_*)}&=\lim_{t\to\infty}\frac{2\left[\theta_1 e^{\theta_2(t-t_{\text{end}})}+\theta_3(t-t_{\text{end}})+\theta_4(t-t_{\text{end}})+\theta_5\right]}{\theta_1 e^{\theta_2(t+\gamma-t_{\text{end}})}+\theta_3(t+\gamma-t_{\text{end}})+\theta_4(t+\gamma_*-t_{\text{end}})+\theta_5}\\
&=\lim_{t\to\infty}\frac{2\theta_1 e^{\theta_2(t-t_{\text{end}})}}{\theta_1 e^{\theta_2(t-t_{\text{end}})}e^{\theta_2\gamma_*}}\\
&=\frac{2}{e^{\theta_2\gamma_*}}\\
&=\frac{2}{e^{\theta_2\left(\frac{\log2}{\theta_2}\right)}}\\
&=\frac{2}{2}=1
\end{split}
\end{equation}
From that it follows $\lim_{t\to\infty} 2{}_|x(t) - {}_|x(t+\gamma_*) = 0$. As the function $\frac{2{}_|x(t)}{{}_|x(t+\gamma)}$ is continuous in both $t$ and $\gamma$ and for every $t>t_{\text{end}}$ it is bounded for all $\gamma>0$, it is absolutely continuous in $\gamma$. From that it follows that $\lim_{t\to\infty} g(t) = g_* = \frac{\log(2)}{\theta_2}$.

So we can set the parameters as follows.
\begin{align}
    &\theta_2 = \frac{\log(2)}{\gamma_*} \\
    &\theta_3 = ({}_|\ddot{x}(t_{\text{end}}) - \theta_1\theta_2^2)/2\\
    &\theta_4 = {}_|\dot{x}(t_{\text{end}}) - \theta_1\theta_2\\
    &\theta_5 = {}_|x(t_{\text{end}}) - \theta_1
\end{align}

We have some freedom in choosing $\theta_1$, but we need to make sure that the function is indeed increasing and convex for all $t>t_{\text{end}}$. This is true if $\theta_3>0$ and $\theta_4>0$. So we choose $\theta_1$ to be
\begin{equation}
    \theta_1 = \min({}_|\dot{x}(t_{\text{end}})/{2\theta_2}, {}_|\ddot{x}(t_{\text{end}})/{2\theta_2^2}, 1)
\end{equation}

\paragraph{$s_{--b}$}

We parametrize this motif analogously to the previous one.
\begin{equation}
    _|x(t) = -\theta_1 e^{\theta_2 (t-t_{\text{end}})} - \theta_3 (t-t_{\text{end}}) - \theta_4 (t-t_{\text{end}}) + \theta_5 
\end{equation}
where 
\begin{align}
    &\theta_2 = \frac{\log(2)}{\gamma_*} \\
    &\theta_3 = -({}_|\ddot{x}(t_{\text{end}}) - \theta_1\theta_2^2)/2\\
    &\theta_4 = -{}_|\dot{x}(t_{\text{end}}) - \theta_1\theta_2\\
    &\theta_5 = {}_|x(t_{\text{end}}) + \theta_1
\end{align}
and $\theta_1$ needs to be chosen to make sure that $\theta_3$ and $\theta_4$ are positive. We choose it to be
\begin{equation}
    \theta_1 = \min(-{}_|\dot{x}(t_{\text{end}})/{2\theta_2}, -{}_|\ddot{x}(t_{\text{end}})/{2\theta_2^2}, 1)
\end{equation}

\paragraph{$s_{+-u}$}

We parametrize this motif as
\begin{equation}
     _|x(t) = \theta_1 \log(\theta_2 (t-t_{\text{end}})^2 + \theta_3 (t-t_{\text{end}}) + 1) + \theta_4 
\end{equation}
where
\begin{align}
    &\theta_1 = \gamma_*/\log(4) \\
    &\theta_3 = {}_|\dot{x}(t_{\text{end}}) / \theta_1 = \log(4) {}_|\dot{x}(t_{\text{end}}) /  \gamma_* \\
    &\theta_2 = \theta_3^2 / 2 = (\log(4) {}_|\dot{x}(t_{\text{end}}) /  \gamma_*)^2 / 2\\
    &\theta_4 = {}_|x(t_{\text{end}})
\end{align}
where $\gamma_*$ is the property of $s_{+-u}$ that we call asymptotic incrementing factor.

It is easy to verify that this function is indeed increasing and concave. If there is a bounded motif before it, it has to be $s_{++c}$, and then $t_{\text{end}}$ is an inflection point. Thus
\begin{equation}
    {}_|\ddot{x}(t_{\text{end}}) = \theta_1(2\theta_2 - \theta_3^2) = 0
\end{equation}
as required. If there is no bounded motif before it, there is no way to set the second derivative at $t_{\text{end}}$ to any other value than $0$. This property is fixed and not trained in the property map.

Let $\gamma(t)$ be defined as earlier, i.e., 
\begin{equation}
    \gamma(t) = {}_|x(2t)-{}_|x(t)
\end{equation}
We can compute it as
\begin{equation}
\begin{split}
    \gamma(t) &= {}_|x(2t)-{}_|x(t) \\
    &= \theta_1 \log(\theta_2 (2t-t_{\text{end}})^2 + \theta_3 (2t-t_{\text{end}}) + 1) - \theta_1 \log(\theta_2 (t-t_{\text{end}})^2 + \theta_3 (t-t_{\text{end}}) + 1) \\
    &= \theta_1 \log\left(\frac{\theta_2 (2t-t_{\text{end}})^2 + \theta_3 (2t-t_{\text{end}}) + 1}{\theta_2 (t-t_{\text{end}})^2 + \theta_3 (t-t_{\text{end}}) + 1}\right) \\
\end{split}
\end{equation}
Thus
\begin{equation}
\begin{split}
    \lim_{t\to\infty} \gamma(t) &=  \theta_1 \log\left(\frac{4\theta_2}{\theta_2}\right)\\
    &= \theta_1 \log(4)\\
    &= \gamma_*
\end{split}
\end{equation}
as required.

\paragraph{$s_{-+u}$}

This motif is parametrized analogously to $s_{+-u}$ as
\begin{equation}
     _|x(t) = -\theta_1 \log(\theta_2 (t-t_{\text{end}})^2 + \theta_3 (t-t_{\text{end}}) + 1) + \theta_4 
\end{equation}
where
\begin{align}
    &\theta_1 = \gamma_*/\log(4) \\
    &\theta_3 = -{}_|\dot{x}(t_{\text{end}}) / \theta_1 = -\log(4) {}_|\dot{x}(t_{\text{end}}) /  \gamma_* \\
    &\theta_2 = \theta_3^2 / 2 = (\log(4) {}_|\dot{x}(t_{\text{end}}) /  \gamma_*)^2 / 2\\
    &\theta_4 = {}_|x(t_{\text{end}})
\end{align}
where $\gamma_*$ is the property of $s_{+-u}$ that we call asymptotic decrementing factor.

Let us define $\gamma$ as earlier, i.e., 
\begin{equation}
    \gamma(t) = {}_|x(t)-{}_|x(2t)
\end{equation}
Then
\begin{equation}
\begin{split}
    \lim_{t\to\infty} \gamma(t) &= -\theta_1 \log\left(\frac{1}{4}\right) \\
    &= \theta_1 \log(4) \\
    &= \gamma_*
\end{split}    
\end{equation}
as required.

\paragraph{$s_{-+h}$}
We parametrize this motif as
\begin{equation}
     _|x(t) = h(g(t-t_{\text{end}}))
\end{equation}
where
\begin{equation}
    h(t) = \frac{\theta_1}{1+e^{t}} + \theta_2
\end{equation}
and $g(t)$ is appropriately defined cubic spline.

First, we need to make sure that $_|x$ is actually decreasing and convex.
\begin{equation}
    _|\dot{x}(t) = \dot{h}(g(t-t_{\text{end}}))\dot{g}(t-t_{\text{end}})
\end{equation}
\begin{equation}
    \dot{h}(t) = - \frac{\theta_1e^t}{(e^t+1)^2} < 0
\end{equation}
So to satisfy $ _|x(t) < 0$ for $t\geq t_{\text{end}}$, we need $g'(t)>0$ for $t\geq0$. Let us now look at the second derivative.
\begin{equation}
      _|\ddot{x}(t) = \ddot{h}(g(t-t_{\text{end}}))\dot{g}(t-t_{\text{end}})^2 + \dot{h}(g(t-t_{\text{end}}))\ddot{g}(t-t_{\text{end}})
\end{equation}
\begin{equation}
    \ddot{h}(t) = \frac{\theta_1(e^t-1)e^t}{(e^t+1)^3}
\end{equation}
For $t>0$, this is always positive. Let us assume $g(0)=0$, then $g(t)>0$ for $t>0$. To satisfy $_|\ddot{x}(t)>0$ for $t>t_{\text{end}}$, we need $\ddot{g}(t)<0$ for $t>0$. Let us now look at $t_{\text{end}}$.
\begin{align}
     &_|x(t_{\text{end}}) = h(g(0)) = h(0) = \theta_1/2 + \theta_2 \\
    &_|\dot{x}(t_{\text{end}}) = \dot{h}(0)\dot{g}(0) = -\frac{\theta_1}{4}\dot{g}(0) \\
    &_|\ddot{x}(t_{\text{end}}) = \ddot{h}(0)\dot{g}(0)^2 + \dot{h}(0)\ddot{g}(0) = -\frac{\theta_1}{4}\ddot{g}(0)
\end{align}
In addition, we also need to satisfy the properties, i.e.,
\begin{align}
    &\lim_{t\to\infty} {}_|x(t) = h \\
    &{}_|x(t_{1/2}) = ({}_|x(t_{\text{end}})+h)/2
\end{align}
As $g$ is increasing,
\begin{equation}
    \lim_{t\to\infty} {}_|x(t) = \theta_2
\end{equation}
Thus $\theta_2 = h$. From that follows that $\theta_1=2(x(t_{\text{end}})-h)$. From the ``half-life'' property, we get
\begin{equation}
   \frac{2({}_|x(t_{\text{end}})-h)}{1+e^{g(t_{1/2}-t_{\text{end}})}} + h =   (_|x(t_{\text{end}})+h)/2
\end{equation}
From that, we get that $g(t_{1/2}-t_{\text{end}}) = \log(3)$.

To summarize, we need to find $g$ such that
\begin{align}
    &g(0) = 0\\
    &\dot{g}(0) = -\frac{2_|\dot{x}(t_{\text{end}})}{x(t_{\text{end}})-h}\\
    &\ddot{g}(0) = 0\\
    &g(t_{1/2}-t_{\text{end}})=\log(3)
\end{align}
We impose $\ddot{g}(0) = 0$, so that $_|\ddot{x}(t_{\text{end}})=0$, which is always the case if there is a bounded motif before. If it is the only motif, then one of the property maps is fixed and not trained.

To make sure that $\ddot{g}(t)\geq0$ for all $t\geq0$ we define it as a piecewise function composed of three cubic and a straight line. Similar to our approach in \cref{app:C2_trajectory_predictor}, we describe the two cubics using second derivatives. Thus $\ddot{g}$ is a piecewise linear function with knots at $0,t_1,t_2$ such that
\begin{align}
    &\ddot{g}(0) = 0\\
    &\ddot{g}(t_1) = v_1\\
    &\ddot{g}(t_2) = 0\\
\end{align}
and $\ddot{g}(t)=0$ for all $t\geq t_2$. The goal is to now find $t_1,t_2,v_1$ such that $g$ is increasing, concave, and $g(t_{1/2}-t_{\text{end}}) = \log(3)$.

$g$ is always going to be concave if $v_1<0$. However, to make sure it stays increasing, we need to have $\dot{g}(t_2)>0$. This requires $\dot{g}(0) + (t_2 \times v_1)/2 >0$. If $t_{1/2}-t_{\text{end}}<1.5\log(3)/\dot{g}(0)$ then we set $t_2 = t_{1/2}-t_{\text{end}}$ otherwise we set $t_2 = 1.5\log(3)/\dot{g}(0)$. In both cases, we choose $t_1=t_2/2$. We used a symbolic Python library \texttt{sympy} to arrive at the following conclusions. In the first case, we set $v_1$ as follows.
\begin{equation}
    v_1 = 4\frac{-(t_{1/2}-t_{\text{end}})\dot{g}(0)+\log(3)}{( t_{1/2}-t_{\text{end}})^2}
\end{equation}
In the second case, we set it as follows.
\begin{equation}
    v_1 =  v_1 = 4\frac{-(t_{1/2}-t_{\text{end}})\dot{g}(0)+\log(3)}{2*(t_{1/2}-t_{\text{end}})*(1.5\log(3)/\dot{g}(0))-(1.5\log(3)/\dot{g}(0))^2}
\end{equation}
From that, we calculate the coefficients of the cubics and the slope and intercept of the straight line.

\textbf{$s_{+-h}$} This one is analogous to $s_{-+h}$.

\section{Experimental details}
\label{app:experimental_details}
All experimental code can be found at \url{https://github.com/krzysztof-kacprzyk/SemanticODE} and \url{https://github.com/vanderschaarlab/SemanticODE}.
\subsection{Datasets}
\label{app:datasets}

\paragraph{Pharmacokinetic model}
The pharmacokinetic dataset is based on the pharmacokinetic model developed by \cite{Woillard.PopulationPharmacokineticModel.2011} to model the plasma concentration of Tacrolimus. This model consists of a system of ODEs described below.
\begin{align}
    &\frac{\diff C_{\text{depot}}}{\diff t} = -k_{\text{tr}} C_{\text{depot}} \\
    &\frac{\diff C_{\text{trans1}}}{\diff t} = k_{\text{tr}} C_{\text{depot}} -k_{\text{tr}} C_{\text{trans1}} \\
    &\frac{\diff C_{\text{trans2}}}{\diff t} = k_{\text{tr}} C_{\text{trans1}}  -k_{\text{tr}} C_{\text{trans2}} \\
    &\frac{\diff C_{\text{trans3}}}{\diff t} = k_{\text{tr}} C_{\text{trans2}}  -k_{\text{tr}} C_{\text{trans3}} \\
    &\frac{\diff C_{\text{cent}}}{\diff t} = k_{\text{tr}} C_{\text{trans3}} -((CL + Q) * C_{\text{cent}}/V_1) + (Q * C_{\text{peri}} / V_2) \\
    &\frac{\diff C_{\text{peri}}}{\diff t} = (Q * C_{\text{cent}}/V_1) - (Q * C_{\text{peri}}/V_2) 
\end{align}

The values of the parameters and the initial conditions are presented in \cref{tab:tacrolimus_details}.

\begin{table}[h]
    \centering
    \caption{Parameters of the pharmacokinetic dataset.}
    \label{tab:tacrolimus_details}
    \begin{tabular}{ll}
    \toprule
    Parameter & Value \\
    \midrule
    $CL$  & 80.247 \\
    $V1$ & 486.0 \\
    $Q$ & 79 \\
    $V2$ & 271 \\
    $k_{\text{tr}}$ & 3.34 \\
    $C_{\text{depot}}(t_0)$ & 10 \\
    $C_{\text{cent}}(t_0)$ & $x_0 \times (V_1 / 1000)$ \\
    $C_{\text{peri}}(t_0)$ & $V_2/V_1 \times C_{\text{cent}}(t_0)$  \\
    $C_{\text{trans1}}(t_0)$ & 0.0 \\
    $C_{\text{trans2}}(t_0)$ & 0.0 \\
    $C_{\text{trans3}}(t_0)$ & 0.0 \\
    \bottomrule
    \end{tabular}
\end{table}

We create $100$ samples of $x_0$ equally spaced between $0$ and $20$. We then solve the initial value problem to obtain a trajectory of $C_{\text{cent}}$. We then scale it back to appropriate units by multiplying by $1000$ and dividing by $V_1$. We observe each trajectory at $20$ equally spaced time points between $0$ and $24$. Then we scale the dataset by dividing the concentrations by $20$ and the time points by $24$. Finally, we add a Gaussian noise with a standard deviation $\sigma=0.01$ for the low noise setting and $\sigma=0.2$ for the high noise setting.

The out-domain dataset is created similarly, but each trajectory is observed only at $t_0=0$ and at $20$ time points between $24$ and $48$ ($1$ and $2$ after dividing by $24$).

\paragraph{Logistic growth}

The logistic growth dataset is described by the following equation \citep{Verhulst.RecherchesMathematiquesLoi.1845}.
\begin{equation}
    \dot{x}(t) = x(t)(1-\frac{x(t)}{2})
\end{equation}
We create $200$ samples of $x_0$ equally spaced between $0.2$ and $4$. We then solve the initial value problem to obtain a trajectory for each. We observe each trajectory at $20$ equally spaced time points between $0$ and $5$. Finally, we add a Gaussian noise with a standard deviation $\sigma=0.01$ for the low noise setting and $\sigma=0.2$ for the high noise setting.

\paragraph{General ODE} The general ODE dataset is described by the equation
\begin{equation}
    \dot{x}(t) = f(x(t),t)
\end{equation}
where $f$ is not described by a compact closed form expression. We describe $f$ using a probability density function of a 2-dimensional mixture of 3 Gaussians. The means, covariances, and weights for each Gaussian are shown in \cref{tab:gaussians}.

\begin{table}[h]
    \centering
    \caption{Parameters of the Gaussians used to define the general ODE dataset.}
    \label{tab:gaussians}
    \begin{tabular}{lll}
    \toprule
    Weight & Mean & Covariance matrix \\
    \midrule
    $0.4$ & $[0,0]$ & $[[1, 0], [0, 1]]$ \\
    $-0.3$ & $[3,3]$ & $[[1, 0.0], [0.0, 1]]$ \\
    $ 0.3$ & $[-1,-2]$ & $[[2, 0], [0, 2]]$ \\
    \bottomrule
    \end{tabular}
\end{table}

We then define $f(x,t)$ as the value of the probability density function of this mixture multiplied by $50$. We create $200$ samples of $x_0$ equally spaced between $-3$ and $3$. We then solve the initial value problem to obtain a trajectory for each. We observe each trajectory at $20$ equally spaced time points between $0$ and $5$. Then we scale the dataset by dividing $x$ by $3$ and the time points by $5$. Finally, we add a Gaussian noise with a standard deviation $\sigma=0.01$ for the low noise setting and $\sigma=0.2$ for the high noise setting. 

\paragraph{Mackey-Glass} The Mackey-Glass dataset is described by the following Mackey-Glass equation \citep{mackey1977oscillation}.
\begin{equation}
    \dot{x}(t) = \frac{\beta_0 \theta^n x(t-\tau)}{\theta^n+x(t-\tau)^n} - \gamma x(t)
\end{equation}
We choose the following parameters: $\theta=1$, $\beta=0.4$, $\tau=4.0$, $n=4$. We create $200$ samples of $x_0$ equally spaced between $1.0$ and $3.0$. We then solve the initial value problem to obtain a trajectory for each. We observe each trajectory at $20$ equally spaced time points between $0$ and $30$. Then we scale the dataset by dividing $x$ by $3$ and the time points by $30$. Finally, we add a Gaussian noise with a standard deviation $\sigma=0.01$ for the low noise setting and $\sigma=0.2$ for the high noise setting.

\paragraph{Integro-DE} The integro-differential equation dataset is described by the following equation \citep{holt2022neural,Bourne.SolvingIntegroDifferentialSimultaneous.2018}.
\begin{equation}
    \dot{x}(t) = -2x(t)-5\int_0^t x(s) ds
\end{equation}
We create $100$ samples of $x_0$ equally spaced between $-1.0$ and $1.0$. We then solve the initial value problem to obtain a trajectory for each. We observe each trajectory at $20$ equally spaced time points between $0$ and $5$. Finally, we add a Gaussian noise with a standard deviation $\sigma=0.01$ for the low noise setting and $\sigma=0.2$ for the high noise setting.

\subsection{Methods}
\label{app:methods}
\paragraph{SINDy} We use SINDy \citep{Brunton.SparseIdentificationNonlinear.2016} as implemented in the PySINDy package \citep{deSilva.PySINDyPythonPackage.2020,Kaptanoglu.PySINDyComprehensivePython.2022}. We pass the variable $t$ as an additional dimension of the trajectory to allow for a time-dependent solution (not just autonomous systems). We use the following library of functions:
\begin{equation*}
\begin{split}
1,x,t,x^2,xt,t^2,e^x,e^t,\sin(x),\sin(t),\cos(x),\cos(t),\\
\sin(2x),\sin(2t),\cos(2x),\cos(2t),\sin(3x),\sin(3t)
\end{split}
\end{equation*}
We use Mixed-Integer Optimized Sparse Regression (MIOSR) \citep{bertsimas2023learning} for optimization as it allows us to choose a sparsity level---the number of terms in the equation. In our experiments, we consider two variants of SINDy, that we denote SINDy and SINDy-5. SINDy-5 enforces the maximum number of terms to be $5$ which would hopefully allow for analyzing the equation. In the SINDy variant, we choose the maximum number of terms during hyperparameter tuning (between $1$ and $20$). For both SINDy and SINDy-5, we tune the parameter $\alpha$ of MIOSR that describes the strength of L2 penalty (between $1e-3$ and $1$), and the derivative estimation algorithm. We choose between the following techniques: finite difference, spline, trend filtered, and smoothed finite difference as available in PySINDy. The parameter ranges we consider for each of them are shown in \cref{tab:sindy_hyperparameters}. 

\begin{table}[h]
    \centering
    \caption{Hyperparameter ranges for each of the derivative estimation methods.}
    \label{tab:sindy_hyperparameters}
    \begin{tabular}{ll}
    \toprule
    Method & Hyperparameter ranges \\
    \midrule
    finite difference & $k\in\{1,\ldots,5\}$ \\
    spline & $s\in(1e-3,1)$ \\
    trend filtered & $\text{order}\in\{0,1,2\}$, $\alpha \in (1e-4,1)$\\
    smoothed finite difference & window_length $\in \{1,\ldots,5\}$ \\
    \bottomrule
    \end{tabular}
\end{table}

\paragraph{WSINDy} We use WSINDy \citep{Reinbold.UsingNoisyIncomplete.2020,Messenger.WeakSINDyGalerkinBased.2021} based on \citep{Reinbold.UsingNoisyIncomplete.2020} as implemented in PySINDy package \citep{deSilva.PySINDyPythonPackage.2020,Kaptanoglu.PySINDyComprehensivePython.2022}. We pass the variable $t$ as an additional dimension of the trajectory to allow for a time-dependent solution (not just autonomous systems). We use the following library of functions:
\begin{equation*}
\begin{split}
1,x,t,x^2,xt,t^2,e^x,e^t,\sin(x),\sin(t),\cos(x),\cos(t),\\
\sin(2x),\sin(2t),\cos(2x),\cos(2t),\sin(3x),\sin(3t)
\end{split}
\end{equation*}
We use Mixed-Integer Optimized Sparse Regression (MIOSR) \citep{bertsimas2023learning}  for optimization as it allows us to choose a sparsity level---the number of terms in the equation. In our experiments, we consider two variants of WSINDy, that we denote WSINDy and WSINDy-5. WSINDy-5 enforces the maximum number of terms to be $5$ which would hopefully allow for analyzing the equation. In the WSINDy variant, we choose the maximum number of terms during hyperparameter tuning (between $1$ and $20$). For both WSINDy and WSINDy-5, we tune the parameter $\alpha$ of MIOSR that describes the strength of the L2 penalty (between $1e-3$ and $1$). We choose the parameter $K$ (the number of domain centers) to be $200$.

\paragraph{PySR} We adapt PySR \citep{pysr}, a well-known symbolic regression method to ODE discovery by first estimating the derivative and then treating it as a label. We choose the derivative estimation technique and its parameters by hyperparameter tuning (as with SINDy) using the methods and parameter ranges in \cref{tab:sindy_hyperparameters}. We use the following operators and functions.
\begin{equation*}
    +,-,\times,\div,\sin,\exp,\log(1+|x|)
\end{equation*}
Following the advice in the documentation, we also put a constraint to prevent nesting of $\sin$ functions. During hyperparameter tuning we allow PySR to search for 15 seconds and then we train the final model for 1 minute. This is a bit larger but comparable time budget to Sematic ODE and a much bigger time budget than SINDy and WSINDy require.

\paragraph{NeuralODE} We implement a NeuralODE \citep{Chen.NeuralOrdinaryDifferential.2018} model using \texttt{torchdiffeq} library. We parametrize the ODE as a fully connected neural network. The data is standardized before fitting. We set the batch size to 32 and train for 200 epochs using Adam optimizer \citep{Kingma.AdamMethodStochastic.2017}. We tune hyperparameters using \texttt{Optuna} \citep{optuna_2019} for 20 trials. Ranges for the hyperparameters are shown in \cref{tab:node_hyperparameters}.

\begin{table}[h]
    \centering
    \caption{Hyperparameter ranges used for tuning Neural ODE.}
    \label{tab:node_hyperparameters}
    \begin{tabular}{ll}
    \toprule
    Hyperparameter & Range \\
    \midrule
    learning rate & (1e-5,1e-1) \\
    number of layers & (1,3) \\
    units in each layer (separately) & (16,128) \\
    dropout rate & (0.0,0.5) \\
    weight decay & (1e-6,1e-2) \\
    activation function & {ELU, Sigmoid} \\
    \bottomrule
    \end{tabular}
\end{table}

\paragraph{DeepONet} We implement DeepONet \citep{Lu.DeepONetLearningNonlinear.2020} using a fully connected neural network. The data is standardized before fitting. We train for 200 epochs using Adam optimizer \citep{Kingma.AdamMethodStochastic.2017}. We tune hyperparameters using \texttt{Optuna} \citep{optuna_2019} for 20 trials. Ranges for the hyperparameters are shown in \cref{tab:deeponet_hyperparameters}.

\begin{table}[h]
    \centering
    \caption{Hyperparameter ranges used for tuning DeepONet.}
    \label{tab:deeponet_hyperparameters}
    \begin{tabular}{ll}
    \toprule
    Hyperparameter & Range \\
    \midrule
    learning rate & (1e-5,1e-1) \\
    number of layers & (1,5) \\
    number of hidden states & (10,100) \\
    dropout rate & (0.0,0.5) \\
    weight decay & (1e-6,1e-2) \\
    batch size & $\{8,16,32\}$ \\
    \bottomrule
    \end{tabular}
\end{table}

\paragraph{DeepONet} We implement Neural Laplace \citep{holt2022neural} using a fully connected neural network. The data is standardized before fitting. We train for 200 epochs using Adam optimizer \citep{Kingma.AdamMethodStochastic.2017}. We tune hyperparameters using \texttt{Optuna} \citep{optuna_2019} for 20 trials. Ranges for the hyperparameters are shown in \cref{tab:neural_laplace_hyperparameters}.

\begin{table}[h]
    \centering
    \caption{Hyperparameter ranges used for tuning Neural Laplace.}
    \label{tab:neural_laplace_hyperparameters}
    \begin{tabular}{ll}
    \toprule
    Hyperparameter & Range \\
    \midrule
    learning rate & (1e-5,1e-1) \\
    number of layers & (2,5) \\
    number of hidden states & (10,100) \\
    latent dimension & (2,10) \\
    dropout rate & (0.0,0.5) \\
    weight decay & (1e-6,1e-2) \\
    batch size & $\{8,16,32\}$ \\
    \bottomrule
    \end{tabular}
\end{table}

\paragraph{Semantic ODE} In all experiments in \cref{sec:flexibility} we use a full composition library containing all compositions up to $3$ motifs (with the exception of logistic growth dataset where we only consider compositions up to $2$ motifs). We choose the maximum number of branches for the composition map to be $I=3$. Each univariate function in the property maps is described as a linear combination of $6$ basis functions: constant, linear, and four B-Spline basis functions of degree 3. The property maps are trained using L-BFGS as implemented in PyTorch. We fix the penalty term for the difference between derivatives to be $0.01$ and we perform hyperparameter tuning of each property sub-map to find the optimal learning rate (between 1e-4 and 1.0) and the penalty term for the first derivative at the last transition point (between 1e-9 and 1e-1).  

\subsection{Benchmarking procedure}
\label{app:benchmarking}
The experimental results presented in \cref{tab:main_results} are a result of the following procedure. For $5$ different seeds, the dataset is randomly split into training, validation, and test datasets with ratios $0.7:0.15:0.15$. For each method, we perform hyperparameter tuning for $20$ trials and then report the performance on the test set. Each seed results in both a different split and a different random state of the algorithm (apart from SINDy, which is deterministic). As Semantic ODE splits the training dataset and trains separate property sub-maps, we pass the combined train and validation set to it. For each property sub-map, the subset of this set with the corresponding composition is once again split into training and validation subsets. The validation set is used for hyperparameter tuning and for early stopping.

\section{Extended Related Works}
\label{app:extended_related_works}

\paragraph{Symbolic regression and discovery of differential equations}
The discovery of differential equations is usually considered a part of a broader area called symbolic regression. Symbolic regression is the area of machine learning whose task is to describe data using a closed-form expression. Traditional symbolic regression has used genetic programming \citep{Stephens.GplearnGeneticProgramming.2022,pysr} for this take but recently neural network has also been utilized for that task. That includes representing the equation directly as a neural network by adapting the activation functions \citep{Martius.ExtrapolationLearningEquations.2017,Sahoo.LearningEquationsExtrapolation.2018}, using neural networks to prune the search space \citep{Udrescu.AIFeynmanPhysicsinspired.2020,Udrescu.AIFeynmanParetooptimal.2021}, searching for equations using reinforcement learning \citep{Petersen.DeepSymbolicRegression.2021}, or using large pre-trained transformers \citep{Biggio.NeuralSymbolicRegression.2021,DAscoli.DeepSymbolicRegression.2022}. Designing complexity metrics and constraints that make sure the equations are simple enough to analyze is itself a challenging research problem \citep{Kacprzyk.SizeBasedMetricsMeasuring.2025}. Standard symbolic regression can be adapted to ODE discovery by just estimating the derivative from data and treating it as a target \citep{Quade.PredictionDynamicalSystems.2016}. However, many dedicated ODE discovery techniques have been proposed. Among them, the most popular is SINDy \citep{Brunton.SparseIdentificationNonlinear.2016} that describes the derivative as a linear combination of functions from a prespecified library. This was followed by numerous extensions, including implicit equations \citep{Kaheman.SINDyPIRobustAlgorithm.2020}, equations with control \citep{Brunton.SparseIdentificationNonlinear.2016}, and longitudinal treatment effect estimation \citep{Kacprzyk.ODEDiscoveryLongitudinal.2024}. Approaches based on weak formulation of ODEs that allow to circumvent derivative estimation have also been proposed \citep{Messenger.WeakSINDyGalerkinBased.2021,Qian.DCODEDiscoveringClosedform.2022}. ODE discovery methods usually cannot be directly used to discover partial differential equations (PDEs) and thus many dedicated PDE discovery algorithms have been developed \citep{Rudy.DatadrivenDiscoveryPartial.2017, Raissi.HiddenPhysicsModels.2018, Messenger.WeakSINDyPartial.2021,Kacprzyk.DCIPHERDiscoveryClosedform.2023}. The challenge of finding compact and well-fitting closed-form expressions inspired Shape Arithmetic Expressions \citep{Kacprzyk.ShapeArithmeticExpressions.2024} that extend the prespecified set of well-known functions (e.g., $\exp$ or trigonometric functions) in symbolic regression by flexible and learnable univariate functions that do not have a compact symbolic representation but can be comprehended by looking at their graph.

\paragraph{Time series representation} Semantic representation is closely related to the topic of time series representation. In particular, apart from the work by \cite{Kacprzyk.TransparentTimeSeries.2024}, there have been other works that try to symbolically describe the time series. In particular, Symbolic Aggregate approXimation \citep{lonardi2002finding}, Shape Description Alphabet \citep{andre1997using}, or the triangular representation of process trends \citep{Cheung.RepresentationProcessTrends.1990} represent a time series as a sequence of symbols that resembles the definition of a composition. However, these methods are mostly used for data mining or classification rather than forecasting. Similarly, although shapelet-based methods \citep{Ye.TimeSeriesShapelets.2009} and motif discovery \citep{Torkamani.SurveyTimeSeries.2017} are concerned with the trajectory's shape, they are usually aimed at finding subsequences of a time series that represent the most important or repeating patterns. There are also numerous other nonsymbolic time series representation techniques ranging from Piecewise Aggregate Approximation \citep{yi2000fast} to signatures \citep{Lyons.RoughPathsSignatures.2014}.

\paragraph{Shape preserving splines} As we describe the bounded part of our trajectory as a cubic spline, and we want it to have a specific shape, this may seem related to an area of machine learning called shape-preserving splines. Its main goal is to interpolate data while maintaining essential characteristics such as monotonicity, convexity, or non-negativity \citep{Fritsch.MonotonePiecewiseCubic.1980,pruess1993shape}. Although their goal is to interpolate rather than reconstruct, and they rarely consider the exact positions of the local extrema and inflection points, ideas from this field may prove useful in designing better and more efficient trajectory predictors in the future.

\paragraph{Neural ODEs} Neural ODEs \citep{Chen.NeuralOrdinaryDifferential.2018} provide a flexible way to model continuous dynamical systems by parameterizing the derivative function of an ODE with a neural network. While this allows Neural ODEs to capture complex system behaviors, they often operate as black-box models, making the task of analyzing to obtain the semantic representation even more difficult than in the case of closed-form ODEs.

\section{Additional Discussion}
\label{app:additional_discussion}

\subsection{Limitations}
\label{app:limitations}
\paragraph{Finite compositions} Throughout our work we implicitly assume that the compositions are finite. However, that is not always the case. In particular, any trajectory that has an oscillatory/periodic behavior has an infinite composition. For instance, $\sin$. Although Semantic ODE can fit such a trajectory on any bounded interval, it cannot yet predict it into the future. This could possibly be addressed by extending the definition of semantic representation by an additional layer that describes periodic trajectories. \newreb{The idea is to take the infinite composition (infinite sequence of motifs) and represent it using some finite representation. For instance, a trajectory $\sin(t)$ would be described as a \textit{meta-motif} $(s_{+-c},s_{--c},s_{-+c},s_{++c})$ repeating forever, where the \textit{meta-properties} may include the ``frequency'' and ``amplitude'' that may also vary with time and be itself described using compositions. This extension still assumes there is some underlying pattern behind the infinite sequence of motifs that can be ``compressed'' into a shorter representation. If the infinite sequence of motifs is algorithmically random, then motif-based representation may not be useful.} 

\newreb{\paragraph{Long compositions} The current implementation may struggle if the ground-truth composition is finite but long. In that case, it may fall outside of the chosen set of compositions.}

\newreb{\paragraph{Chaotic systems}
Chaotic systems usually have some kind of oscillatory behavior (i.e., it cannot be described by a finite composition). As discussed above, it means that chaotic systems are currently beyond the capabilities of Semantic ODEs. They would not be able to correctly predict beyond the seen time domain. However, we could use it for a prediction on a bounded time domain. 
We have included such an experiment in \cref{app:duffing}. Although our method achieves decent results, this performance would drop drastically if we tried to predict beyond the training time domain. In general, Semantic ODE can model big changes in the trajectory following a small change in the initial condition (as is characteristic for chaotic systems). As the composition map is discontinuous at points where the composition changes, it can model a sudden change in the shape of the trajectory.}

\paragraph{Training of the composition map} The composition map determines how the dataset is split and which property maps are trained. Thus, it is very important to do it well. Currently, we evaluate how well a particular composition fits a trajectory individually for every sample. However, then we fit property sub-maps where these properties vary smoothly between samples. Thu,s the neighboring samples should be taken into consideration when evaluating how well a particular composition fits a trajectory. If the same composition fits two neighboring samples well but for very different property values, this may not be a good match.

\paragraph{Ubounded motifs} The type of unbounded motifs we choose and how we decide to parameterize them influences how well we can fit trajectories. For instance, we describe $s_{++u}$ as behaving like an exponential function in the long term. This may not work well if the actual trajectory behaves like a quadratic function, for instance. To address it, we should have more motifs available, so that we can be more certain that we can find a well-fitting motif. However, this also makes training the composition map more difficult. This emphasizes why finding an effective and efficient training procedure for the composition map is essential.

\paragraph{One-dimensional systems} The current implementation of Semantic ODE works only for one-dimensional systems. We describe a possible roadmap for realizing direct semantic modeling for multi-dimensional systems in \cref{app:roadmap}.

\subsection{Roadmap for direct semantic modeling in multiple dimensions}
\label{app:roadmap}

Although Semantic ODE realizes direct semantic modeling for one-dimensional trajectories, we believe direct semantic modeling can be successfully applied to trajectories with multiple dimensions. To realize that, we first need to allow for multi-dimensional inputs to the semantic predictor. As the composition map is just a classification algorithm and the property maps are just sets of static regression models, this should be possible. We could imagine a composition map represented as a decision tree and the properties being predicted by, for instance, generalized additive models (GAMs) \citep{hastie1986generalized}. In that way, we can still have an understandable model representation. The major challenge is to come up with an efficient optimization procedure that would not only search this complex space but also make sure that every predicted semantic representation is indeed valid. An additional challenge is to keep the model in an appropriate form and design an interface to edit the property maps as it is done in a Semantic ODE.

\newreb{The easiest way to model multiple trajectories (or multidimensional trajectories) is to model each trajectory dimension independently but conditioned on the multidimensional initial condition $x_0\in\mathbb{R}^M$, i.e., for $M$ trajectories, we want to fit $M$ forecasting models, $F_1, \ldots, F_M$ such that for each $m\in[M]$, $F_{m}:\mathbb{R}^M \to  C^2$. Each $F_m$ can still be represented as $F_{\text{traj}} \circ F_{\text{sem}}^{(m)}$, where $F_{\text{traj}}$ is the same as described in the paper and $F_{\text{sem}}^{m} : \mathbb{R}^M \to \mathcal{C}\times\mathcal{P}$. $F_{sem}^{m}$ can similarly be represented using a composition map and property maps for each predicted composition. A proof of concept of this approach is demonstrated in \cref{app:sir_example}. This approach should scale to multiple dimensions as it is very modular. Each $F_{\text{sem}}^m$ can be analyzed independently. Each property map of  $F_{\text{sem}}^m$ corresponding to a particular composition can be analyzed independently. And finally, each shape function for each of the predicted properties can be analyzed independently. We believe this modularity is one of the reasons why even systems with multiple dimensions can remain understandable and, more importantly, can be edited as each change has a localized impact. In contrast, changing a single parameter in a system of ODEs may result in a change of all the trajectories in ways that may be difficult to predict without a careful analysis.}

Another extension would be to model some dimensions jointly. For instance, we could try to characterize the shape of a two-dimensional curve. This would require extending the current definition of semantic representation.

\newreb{\paragraph{Computational cost}
The computational cost of extending semantic modeling to $M$ dimensions depends on the exact approach taken. We will try to do our best guess assuming the approach remains as similar as possible to the one we describe in the paper.}

\newreb{With our current ideas, the added cost from having multiple dimensions comes mostly from the need to perform the same tasks $M$ times. For $M$ trajectories, we need to fit $M$ composition maps and $M$ property maps. }

\newreb{We do not expect the time to fit a single composition map to increase significantly. The main computational burden of the current implementation comes from fitting a composition to each sample to see how well it fits. For the same number of samples, the computation cost is going to be the same. Although there will be an added cost to find optimal decision boundaries in multiple dimensions (as opposed to one), we do not think this will be a significant burden given current efficient classification algorithms.}

\newreb{Fitting of each property map will require $M$ times the number of parameters. However, it is important to note that in most cases, the current property maps have just tens of parameters, so this number is still likely to be a few hundred at most. It is possible that the number of property maps needed to be fitted (for each of the compositions) is going to be larger. However, a user would still like to narrow it down to a reasonable number (likely less than 10) to make sure that the composition map is understandable.} 

\newreb{Overall, we suspect the training time to increase at least linearly with the number of dimensions, but we do not expect any combinatorial explosion as often happens in symbolic regression, so for the number of trajectories we may be interested (probably less than 10), the computational costs should not be a major concern.}

\subsection{Granularity of information}

\newreb{We believe the granularity of information (in terms of the shape of the trajectory, its maxima, minima, and asymptotic behavior) should be comparable between direct semantic modeling and other approaches, as long as all approaches are framed as solutions to finding a forecasting model. The key difference is that direct semantic modeling gives immediate access to the semantic representation of the model without any further analysis and allows us to easily edit the model as well as incorporate semantic inductive biases during training.} 

\newreb{There is, however, one way in which ODEs \textit{may} offer more granular information by assuming a particular causal structure. \textit{Note}: it may not be correct, and it makes them less flexible. For a discretized ODE we can write the underlying causal graph as $x(t_0) \to x(t_0 + \Delta t) \to \ldots \to x(t)$. Whereas our model assumes a more general $x(t_0) \to x(t)$. This generality is useful (as we show in \cref{sec:flexibility}) and it does not negatively impact the performance. However, this model does not allow us to do interventions where we change the value of $x$ at a particular $t$ and predict the future. ODEs do allow for this kind of inference but, of course, there is no guarantee that this inference is correct. For instance, a system governed by a delayed differential equation has a different causal structure. We believe direct semantic modeling in the future can be extended to accommodate some kinds of intervention, where they can be described as additional features beyond the initial conditions.}

\subsection{Partial Differential Equations}
\newreb{Partial differential equations (PDEs) no longer describe trajectories (a function defined on the real line) but rather \textit{fields} which are defined on multidimensional surfaces such as planes. Extending direct semantic modeling to PDEs requires developing a new definition of the semantic representation of a \textit{field}. Moreover, as the initial condition is no longer a single number, the semantic predictor needs to take whole functions as inputs (for both initial and boundary conditions).}

\subsection{Flexibility of motif-based semantic representation}

 \newreb{In this section, we will prove the following representation theorem.}
\newreb{
 \begin{theorem}
     Any function $x \in  C^2(t_0, +\infty)$ whose second derivative vanishes at a finite number of points (i.e., $|\{t \ : \  \ddot{x}(t) = 0\}| < \infty$) can be described as a finite composition constructed from proposed motifs.
 \end{theorem}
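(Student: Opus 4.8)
The plan is to convert the finiteness of the zero set of $\ddot{x}$ into a finite partition of $(t_0,+\infty)$ on the interior of each piece of which both $\dot{x}$ and $\ddot{x}$ have a fixed strict sign, read off the appropriate bounded motif on each bounded piece and the appropriate unbounded motif on the single unbounded (rightmost) piece, and conclude that a finite motif sequence — hence the composition — describes $x$.

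First I would set $Z=\{t\in(t_0,+\infty):\ddot{x}(t)=0\}$, finite by hypothesis. On each connected component $J$ of $(t_0,+\infty)\setminus Z$ the continuous function $\ddot{x}$ vanishes nowhere, so by the intermediate value theorem it has a constant strict sign on $J$; note this already shows $\ddot{x}$ is not identically zero on any subinterval, which is precisely why the straight-line motifs are unnecessary here. Consequently $\dot{x}$ is strictly monotone on $J$, hence has at most one zero in $J$. Adjoining to $Z$ these (finitely many) zeros of $\dot{x}$ gives a finite set $P=\{p_1<\dots<p_m\}$ which, with $p_0:=t_0$, partitions $(t_0,+\infty)$ into $(p_0,p_1),\dots,(p_{m-1},p_m),(p_m,+\infty)$, on the interior of each of which both $\dot{x}$ and $\ddot{x}$ have a fixed strict sign.

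Next I would match motifs. On each bounded piece $(p_{j-1},p_j)$ the pair of signs of $(\dot{x},\ddot{x})$ selects exactly one of $s_{++b},s_{+-b},s_{-+b},s_{--b}$ straight from \cref{def:motif_used}. For the final piece $(p_m,+\infty)$ there are four cases by the signs of $(\dot{x},\ddot{x})$: signs $(+,+)$ give $s_{++u}$ and $(-,-)$ give $s_{--u}$, and since neither of these motifs carries a limit condition they hold automatically; if the signs are $(+,-)$ then $x$ is increasing on $(p_m,+\infty)$, so $\lim_{t\to+\infty}x(t)$ exists in $(-\infty,+\infty]$ and the tail is $s_{+-h}$ when this limit is finite and $s_{+-u}$ when it is $+\infty$; symmetrically, signs $(-,+)$ give $s_{-+h}$ or $s_{-+u}$. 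These cases are exhaustive, so the tail conforms to exactly one unbounded motif.

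Concatenating these motifs over $(p_0,p_1),\dots,(p_{m-1},p_m),(p_m,+\infty)$ produces a finite motif sequence describing $x$; discarding any redundant subdivision points (those across which the chosen motif does not change) yields the shortest such sequence, i.e. the composition, which is therefore finite — the claim. The only genuinely delicate step is the analysis of the unbounded tail: one must rule out oscillation there, and this is exactly what finiteness of $Z$ secures, since $\ddot{x}$ has no zero beyond $p_m$, forcing $\dot{x}$ to be monotone of constant sign and hence $x$ to be monotone with a well-defined (possibly infinite) limit. A minor bookkeeping point I would note in passing is that $x$ is only $C^2$ on the open interval $(t_0,+\infty)$, so $p_0=t_0$ acts as the first transition point without $x$ being defined there; this is harmless because each motif relation constrains only the open interior of its interval.
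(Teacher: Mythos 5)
Your proof is correct and follows essentially the same route as the paper's: partition $(t_0,+\infty)$ at the finitely many zeros of $\ddot{x}$ together with the finitely many zeros of $\dot{x}$ (at most one per curvature interval), read off a bounded motif from the sign pair $(\dot{x},\ddot{x})$ on each bounded piece, and split the tail into the six unbounded cases using monotonicity to guarantee the limit exists in the extended reals. Your additional remarks (the IVT justification for constant sign, the possibility of redundant subdivision points where $\ddot{x}$ touches but does not cross zero, and the $t_0$ boundary point) are sound refinements of the same argument rather than a different approach.
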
}
\newreb{
\begin{proof}
We call all points in $\{t \ : \  \ddot{x}(t) = 0\}$ the vanishing points. Let $i_{\text{first}}$ and $i_{\text{last}}$ be the smallest and largest $t$ such that $\ddot{x}(t) = 0$. As $\ddot{x}$ vanishes only a finite number of times, such numbers exist. $\ddot{x}$ is positive or negative for all points in between consecutive vanishing points as well as for all $t<i_{\text{first}}$ and $t>i_{\text{last}}$. That means $\dot{x}$ is either strictly increasing or strictly decreasing in between consecutive vanishing points as well as for all $t<i_{\text{first}}$ and $t>i_{\text{last}}$. That means there is, at most, one point between consecutive vanishing points where $\dot{x}=0$. Similarly for $t<i_{\text{first}}$ and $t>i_{\text{last}}$. That means there is a finite number of points where $\dot{x}$ vanishes. we call such points extrema and denote the smallest and largest extremum as $e_{\text{first}}$ and $e_{\text{last}}$. For all points between consecutive extrema $\dot{x}$ is either positive or negative (similarly for $t<e_{\text{first}}$ and $t>e_{\text{last}}$). That means that $x$ is either strictly increasing or decreasing between consecutive extrema (similarly for $t<e_{\text{first}}$ and $t>e_{\text{last}}$). We can now take the union of all vanishing points and extrema and divide $(t_0,+\infty)$ into intervals such that on each interval, $x$ is either strictly increasing or decreasing and is either convex or concave. Thus we can assign to each bounded interval one of the proposed bounded motifs. In the final (unbounded) interval, $x$ can be strictly increasing and convex ($s_{++u}$) or strictly decreasing and concave $(s_{--u})$. If it is strictly increasing and concave, then either it approaches a real number ($s_{+-h}$) or diverges to infinity ($s_{+-u}$). If it is strictly decreasing and convex, then either it approaches a real number ($s_{-+h}$) or diverges to infinity ($s_{-+u}$). Thus we are able to describe the whole trajectory as a finite composition constructed from proposed motifs.
 \end{proof}}

\subsection{Sharing samples between property maps}
\newreb{Each $F_{\text{prop}}$ is trained separately for each composition. They do not share any information. However, we believe some amount of sharing may improve performance where the two adjacent compositions are similar, and the exact boundary is uncertain. We consider this an interesting extension for future work.}

\end{document}